\begin{document}

\title{Principled analytic classifier for positive-unlabeled learning via weighted integral probability metric}
\titlerunning{Principled analytic classifier for PU learning via WIPM} 

\author{Yongchan Kwon \and
        Wonyoung Kim \and
        Masashi Sugiyama \and
        Myunghee Cho Paik\(^{\dagger}\)
}

\authorrunning{Yongchan Kwon et al.} 

\institute{
Yongchan Kwon, Wonyoung Kim, Myunghee Cho Paik \at
Department of Statistics, Seoul National University, Seoul, South Korea \\
\email{ykwon0407@snu.ac.kr, eraser347@snu.ac.kr, myungheechopaik@snu.ac.kr}
\and
Masashi Sugiyama \at
Center for Advanced Intelligence Project, RIKEN, Japan \& Graduate School of Frontier Sciences, The University of Tokyo, Japan\\
\email{sugi@k.u-tokyo.ac.jp}\\
\(\dagger\) indicates the corresponding author.
}

\date{Received: date / Accepted: date}

\maketitle

\begin{abstract}
We consider the problem of learning a binary classifier from only positive and unlabeled observations (called PU learning).
Recent studies in PU learning have shown superior performance theoretically and empirically.
However, most existing algorithms may not be suitable for large-scale datasets because they face repeated computations of a large Gram matrix or require massive hyperparameter optimization.
In this paper, we propose a computationally efficient and theoretically grounded PU learning algorithm.
The proposed PU learning algorithm produces a closed-form classifier when the hypothesis space is a closed ball in reproducing kernel Hilbert space.
In addition, we establish upper bounds of the estimation error and the excess risk.
The obtained estimation error bound is sharper than existing results and the derived excess risk bound has an explicit form, which vanishes as sample sizes increase.
Finally, we conduct extensive numerical experiments using both synthetic and real datasets, demonstrating improved accuracy, scalability, and robustness of the proposed algorithm.

\keywords{positive and unlabeled learning \and integral probability metric \and excess risk bound \and approximation error \and reproducing kernel Hilbert space}
\end{abstract}

\section{Introduction}
\label{s:intro}
Supervised binary classification assumes that all the training data are labeled as either being positive or negative.
However, in many practical scenarios, collecting a large number of labeled samples from the two categories is often costly, difficult, or not even possible.
In contrast, unlabeled data are relatively cheap and abundant.
As a consequence, semi-supervised learning is used for partially labeled data \citep{chapelle2006}.
In this paper, as a special case of semi-supervised learning, we consider Positive-Unlabeld (PU) learning, the problem of building a binary classifier from only positive and unlabeled samples \citep{denis2005, li2005}.
PU learning provides a powerful framework when negative labels are impossible or very expensive to obtain, and thus has frequently appeared in many real-world applications.
Examples include document classification \citep{elkan2008, xiao2011}, image classification \citep{zuluaga2011, gong2018}, gene identification \citep{yang2012, yang2014}, and novelty detection \citep{blanchard2010, zhang2017}. 

Several PU learning algorithms have been developed over the last two decades.
\citet{liu2002} and \citet{li2003} considered a two-step learning scheme: in Step 1, assigning negative labels to some unlabeled observations believed to be negative, and in Step 2, learning a binary classifier with existing positive samples and the negatively labeled samples from Step 1.
\citet{liu2003} pointed out that the two-step learning scheme is based on heuristics, and suggested fitting a biased support vector machine by regarding all the unlabeled observations as being negative.

\citet{scott2009} and \citet{blanchard2010} suggested a modification of supervised Neyman-Pearson classification, whose goal is to find a classifier minimizing the false positive rate keeping the false negative rate low.
To circumvent the problem of lack of negative samples, they tried to build a classifier minimizing the marginal probability of being classified as positive while keeping the false negative rate low.
Solving the empirical version of this constrained optimization problem is challenging, but the authors did not present an explicit algorithm.

Recently, many PU learning algorithms based on the empirical risk minimization principle have been studied.
\citet{du2014} proposed the use of the ramp loss and provided an algorithm that requires solving a non-convex optimization problem. 
\citet{du2015} formulated a convex optimization problem by using the logistic loss or double hinge loss. 
However, all the aforementioned approaches involve solving a non-linear programming problem.
This causes massive computational burdens for calculating the large Gram matrix when the sample size is large.
\citet{kiryo2017} suggested a stochastic algorithm for large-scale datasets with a non-negative risk estimator.
However, to execute the algorithm, several hyperparameters are required, and choosing the optimal hyperparameter may demand substantial trials of running the algorithm \citep{oh2018}, causing heavy computation costs.

In supervised binary classification, \citet{sriperumbudur2012} proposed a computationally efficient algorithm building a closed-form binary discriminant function.
The authors showed that their function estimator obtained by evaluating the negative of the empirical integral probability metric (IPM) is the minimizer of the empirical risk using the specific loss defined in Section \ref{s:IPM_binary}.
They further showed that a closed form can be derived as the result of restricting a hypothesis space to a closed unit ball in reproducing kernel Hilbert space (RKHS).

In this paper, capitalizing on the properties shown in the supervised learning method by \citet{sriperumbudur2012}, we extend it to PU learning settings. 
In addition, we derive new theoretical results on excess risk bounds.
We first define a weighted version of IPM between two probability measures and call it the weighted integral probability metric (WIPM).
We show that computing the negative of WIPM between the unlabeled data distribution and the positive data distribution is equivalent to minimizing the hinge risk. 
Based on this finding, we propose a binary discriminant function estimator that computes the negative of the empirical WIPM, and then derive associated upper bounds of the estimation error and the excess risk.
Under a mild condition, our obtained upper bounds are shown to be sharper than the existing ones because of using Talagrand\rq{}s inequality over McDiarmid\rq{}s inequality \citep{kiryo2017}.
Moreover, we pay special attention to the case where the hypothesis space is a closed ball in RKHS and propose a closed-form classifier.
We show that the associated excess risk bound has an explicit form that converges to zero as the sample sizes increase.
To the best of our knowledge, this is the first result to explicitly show the excess risk bound in PU learning.

As a summary, our main contributions are:
\begin{itemize}
\item We formally define WIPM and establish a link with the infimum of the hinge risk (Theorem \ref{thm:relationship}).
We derive an estimation error bound and show that it is sharper than existing results (Theorem \ref{thm:estimation_error} and Proposition \ref{prop:sharper}).
\item The proposed algorithm produces a closed-form classifier when the underlying hypothesis space is a closed ball in RKHS (Proposition \ref{prop:MMD_emp}).
Furthermore, we obtain a novel excess risk bound that converges to zero as sample sizes increase (Theorem \ref{thm:excess_super_fast}).
\item Numerical experiments using both synthetic and real datasets show that our method is comparable to or better than existing PU learning algorithms in terms of accuracy, scalability, and robustness in the case of unknown class-priors.
\end{itemize}

\section{Preliminaries}
In this section, we describe the $L$-risk for binary classification and present its PU representation. 
We briefly review several PU learning algorithms based on the $L$-risk minimization principle.
We first introduce problem settings and notations.

\subsection{Problem settings of PU learning}
\label{s:definition}
Let $X$ and $Y$ be random variables for input data and class labels, respectively, whose range is the product space $\mathcal{X} \times \{ \pm 1 \} \subseteq \mathbb{R}^{d} \times \{\pm 1 \}$.
The $d$ is a positive integer.
We denote the joint distribution of $(X,Y)$ by $P_{X,Y}$ and the marginal distribution of $X$ by $P_{X}$. 
The distributions of positive and negative data are defined by conditional distributions, ${P}_{X \mid Y=1}$ and ${P}_{X\mid Y=-1}$, respectively.
Let $\pi_{+} := P_{X,Y}(Y=1)$ be the marginal probability of being positive and set $\pi_{-}= 1-\pi_{+}$.
We follow the {\it two samples of data} scheme \citep{ward2009, niu2016}. 
That is, let $\mathcal{X}_{\rm p} =\{x_i ^{\rm p} \}_{i=1} ^{n_{\rm p}}$ and $\mathcal{X}_{\rm u} =\{x_i ^{\rm u} \}_{i=1} ^{n_{\rm u}}$ be observed sets of independently identically distributed samples from the positive data distribution ${P}_{X \mid Y=1}$ and the marginal distribution ${P}_X$, respectively.
Here, the $n_{\rm p}$ and $n_{\rm u}$ are the number of positive and unlabeled data points, respectively.
Note that the unlabeled data distribution is the marginal distribution.

Let $\mathcal{U}$ be a class of real-valued measurable functions defined on $\mathcal{X}$.
A function $f \in \mathcal{U}$, often called a hypothesis, can be understood as a binary discriminant function and we classify an input $x$ with the sign of a discriminant function, ${\rm sign}(f(x))$. 
Define $\mathcal{M} = \{f: \mathcal{X} \to \mathbb{R} \mid \norm{f}_{\infty} \leq 1 \} \subseteq \mathcal{U}$, where $\norm{f}_{\infty} = \sup_{x \in \mathcal{X}} | f(x) |$ is the supremum norm.
We restrict our attention to a class $\mathcal{F} \subseteq \mathcal{M}$ and call $\mathcal{F}$ a hypothesis space. 
Throughout this paper, we assume that the hypothesis space is symmetric, {\it i.e.}, $f \in \mathcal{F}$ implies $-f \in \mathcal{F}$.
In PU learning, the main goal is to construct a classifier ${\rm sign}(f(x))$ only from the positive dataset $\mathcal{X}_{\rm p}$ and the unlabeled dataset $\mathcal{X}_{\rm u}$ with $f \in \mathcal{F}$. 

In this paper, the quantity $\pi_{+}$, often called the class-prior, is assumed to be known as in the literature \citep{kiryo2017, kato2019} to focus on theoretical and practical benefits of our proposed algorithm.
We examine the performance when $\pi_{+}$ is unknown in Experiment 3 of Section \ref{s:simul_synthetic} and in Section \ref{s:real_data}.

\subsection{$L$-risk minimization in PU learning}
\label{s:risk}
In supervised binary classification, the $L$-risk is defined by
\allowdisplaybreaks
\begin{align}
R_L(f) :=& \int_{\mathcal{X} \times \{\pm 1\}}  L(y, f(x)) dP_{X,Y}(x,y)  \notag \\
=& \pi_{+} \int_{\mathcal{X}}  L(1, f(x)) dP_{X \mid Y=1}(x) + \pi_{-} \int_{\mathcal{X}}  L(-1, f(x)) dP_{X \mid Y=-1}(x),
\label{eq:risk}
\end{align} 
for a loss function $L : \{\pm 1\} \times \mathbb{R} \to \mathbb{R}$ \citep[Section 2.1]{steinwart2008}. 
We denote the margin-based loss function by $\ell(yt) := L(y, t)$ if a loss function $L(y,t)$ can be represented as a function of margin $yt$, the product of a label $y$ and a score $t$ for all possible $y \in \{ \pm 1 \}$ and $t \in \mathbb{R}$.


Under the PU learning framework, however, the right-hand side of Equation \eqref{eq:risk} cannot be directly estimated due to lack of negatively labeled observations.
To circumvent this problem, many studies in the field of PU learning exploited the relationship $P_X = \pi_{+} P_{X \mid Y =1}+\pi_{-} P_{X \mid Y =-1}$ and replaced $P_{X \mid Y =-1}$ in Equation \eqref{eq:risk} with $(P_X - \pi_{+} P_{X \mid Y =1})/\pi_{-}$ \citep{du2014, sakai2017}.
That is, the $L$-risk can be alternatively expressed as: 
\begin{align}
R_L(f) = \int_{\mathcal{X}}  L(-1, f(x)) dP_X(x) + \pi_{+} \int_{\mathcal{X}}  L(1, f(x))- L(-1, f(x)) dP_{X \mid Y=1}(x). \label{eq:pu_risk}
\end{align}
Now the right-hand side of Equation \eqref{eq:pu_risk} can be empirically estimated by the positive dataset $\mathcal{X}_{\rm p}$ and the unlabeled dataset $\mathcal{X}_{\rm u}$.
However, the $L$-risk $R_L (f)$ is not convex with respect to $f$ in general, and minimizing an empirical estimator for $R_L (f)$ is often formulated as a complicated non-convex optimization problem.

There have been several approaches to resolving the computational difficulty by modifying loss functions.
\citet{du2014} proposed to use non-convex loss functions satisfying the symmetric condition, $L(1, f(x)) + L(-1, f(x))=1$.
They proposed to optimize the empirical risk based on the ramp loss $\ell_{\mathrm{ramp}}(yt)=0.5 \times \max( 0,$  $\min(2, 1-yt))$ via the concave-convex procedure \citep{collobert2006}.
\citet{du2015} converted the problem to convex optimization through the linear-odd condition, $L(1, f(x)) - L(-1, f(x)) = -f(x)$.
They showed that the logistic loss $\ell_{\mathrm{log}}(yt) = \log(1+\exp(-yt))$ and the double hinge loss $\ell_{\mathrm{dh}}(yt) = \max(0, \max(-yt, (1-yt)/2))$ satisfy the linear-odd condition.
However, all the aforementioned methods utilized a weighted sum of $n_{\rm p} + n_{\rm u}$ predefined basis functions as a binary discriminant function, which triggered calculating the $(n_{\rm p}+n_{\rm u})$ $\times$ $(n_{\rm p}+n_{\rm u})$ Gram matrix.
Hence, executing algorithms is not scalable and can be intractable when $n_{\rm p}$ and $n_{\rm u}$ are large \citep{sansone2018}.
Our first goal in this paper is to overcome this computational problem by providing a computationally efficient method.


\section{Weighted integral probability metric and $L$-risk}
In this section, we formally define WIPM, a key tool for constructing the proposed algorithm, and build a link with the $L$-risk in Theorem \ref{thm:relationship} below. 
Based on the link, we propose a new binary discriminant function estimator and present its theoretical properties in Theorem \ref{thm:estimation_error}. 
We first introduce the earlier work by \citet{sriperumbudur2012} that provided a closed-form classifier in supervised binary classification.

\subsection{Relation between IPM and $L$-risk in supervised binary classification}
\label{s:IPM_binary}

\citet{muller1997} introduced an IPM for any two probability measures ${P}$ and ${Q}$ defined on $\mathcal{X}$ and a class $\mathcal{F}$ of bounded measurable functions, given by
\begin{align*}
{\rm IPM}({P}, {Q}; \mathcal{F}) := \sup_{f \in \mathcal{F}} \left| \int_{\mathcal{X}} f(x) d{P} (x) - \int_{\mathcal{X}} f(x) d{Q}(x) \right|.
\end{align*}
IPM has been studied as either a metric between two probability measures \citep{sriperumbudur2010b, arjovsky2017, tolstikhin2017} or a hypothesis testing tool \citep{gretton2012}. 

Under the supervised binary classification setting, \citet{sriperumbudur2012}  showed that calculating IPM between $P_{X \mid Y=1}$ and $P_{X \mid Y=-1}$ is negatively related to minimizing the risk with a loss function, {\it i.e.}, ${\rm IPM}(P_{X\mid Y=1}, P_{X\mid Y=-1}; \mathcal{F})=-\inf_{f \in \mathcal{F}} R_{L_{\mathrm{c}}}(f)$, where $L_{\mathrm{c}}(1, t)= -t/\pi_{+}$ and $L_{\mathrm{c}}(-1, t)= t/\pi_{-}$ for all $t \in \mathbb{R}$.
They further showed that a discriminant function minimizing the $L_{\mathrm{c}}$-risk can be obtained analytically when $\mathcal{F}$ is a closed unit ball in RKHS.
This result cannot be directly extended to PU learning due to absence of negatively labeled observations.
In the next subsection, we define a generalized version of IPM and extend the previous results for supervised binary classification to PU learning.

\subsection{Extension to WIPM and $L$-risk in PU learning}
\label{s:wipm}

Let $\mathcal{F}$ be a given class of bounded measurable functions and let $\tilde{w}: \mathcal{X} \to \mathbb{R}$ be a weight function such that $\norm{\tilde{w}}_{\infty} < \infty$.
We define WIPM\footnote{Although WIPM is not a metric in general, we keep saying the name WIPM to emphasize that it is a weighted version of IPM.} between two probability measures ${P}$ and ${Q}$ with a function class $\mathcal{F}$ and a weight function $\tilde{w}$ by
\begin{align}
{\rm WIPM}({P}, {Q}; \tilde{w}, \mathcal{F}) := \sup_{f \in \mathcal{F}} \left| \int_{\mathcal{X}} f(x) d{P} (x) - \int_{\mathcal{X}} \tilde{w}(x) f(x) d{Q}(x) \right|.
\label{eq:WIPM_fct}
\end{align}
Note that WIPM reduces to IPM if $\tilde{w}(x)=1$ for all $x \in \mathcal{X}$. 
Other special cases of Equation \eqref{eq:WIPM_fct} have been discussed in many applications.
In the covariate shift problem, \citet{huang2007} and \citet{gretton2009} proposed to minimize WIPM with respect to $\tilde{w}$ when $\mathcal{F}$ is the unit ball in RKHS and $P,Q$ are empirical distributions of test and training data, respectively.
In unsupervised domain adaptation, \citet{yan2017} regarded $P,Q$ as empirical distributions of target and source data, respectively, where in this case, $\tilde{w}$ is a ratio of two class-prior distributions. 
 
We pay special attention to the case where $\tilde{w}(x)$ is constant, $w \in \mathbb{R}$, for every input value and denote WIPM by  ${\rm WIPM}({P}, {Q}; w, \mathcal{F})$, 
\begin{align*}
{\rm WIPM}({P}, {Q}; w, \mathcal{F}) := \sup_{f \in \mathcal{F}} \left| \int_{\mathcal{X}} f(x) d{P} (x) - w \int_{\mathcal{X}} f(x) d{Q}(x) \right|.
\end{align*}
In the following theorem, we establish a link between ${\rm WIPM}(P_{X}, P_{X \mid Y = 1} ; 2\pi_{+}, \mathcal{F})$ and the infimum of the $\ell_{\mathrm{h}}$-risk over $\mathcal{F}$ for the hinge loss $\ell_{\mathrm{h}}(yt) = \max(0, 1-yt)$.
A proof is in Appendix \ref{app:relationship}.

\begin{theorem}[Relationship between $\ell_{\mathrm{h}}$-risk and WIPM]
Let $\mathcal{F}$ be a symmetric hypothesis space in $\mathcal{M}$ and $\ell_{\mathrm{h}}(yt) = \max(0, 1-yt)$ be the hinge loss. 
Then, we have
$$\inf_{f \in \mathcal{F}} R_{\ell_{\mathrm{h}}} (f) = 1 -  {\rm WIPM} (P_{X}, P_{X \mid Y = 1} ; 2\pi_{+}, \mathcal{F}).$$
Moreover, if $g_{\mathcal{F}}$ satisfies 
$$ {\rm WIPM} (P_{X}, P_{X \mid Y = 1} ; 2\pi_{+}, \mathcal{F}) = \int_{\mathcal{X}} g_{\mathcal{F}}(x) dP_X(x) - 2\pi_{+} \int_{\mathcal{X}} g_{\mathcal{F}}(x) dP_{X \mid Y=1}(x),$$then $\inf_{f \in \mathcal{F}} R_{\ell_{\mathrm{h}}}(f)$ $=R_{\ell_{\mathrm{h}}} (-g_{\mathcal{F}})$.
\label{thm:relationship}
\end{theorem}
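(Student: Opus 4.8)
The plan is to compute $R_{\ell_{\mathrm{h}}}(f)$ explicitly on the hypothesis space $\mathcal{F}$ using the PU representation of the risk in Equation \eqref{eq:pu_risk}, exploit the uniform bound $\norm{f}_{\infty} \le 1$ available on $\mathcal{F} \subseteq \mathcal{M}$ to remove the truncation in the hinge loss, and then pass to the infimum using the symmetry of $\mathcal{F}$.

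First I would substitute $L(1,t) = \max(0, 1-t)$ and $L(-1,t) = \max(0, 1+t)$ into Equation \eqref{eq:pu_risk}. The key observation is that every $f \in \mathcal{F} \subseteq \mathcal{M}$ satisfies $-1 \le f(x) \le 1$ for all $x \in \mathcal{X}$, so $1 - f(x) \ge 0$ and $1 + f(x) \ge 0$; hence $\max(0, 1 \pm f(x)) = 1 \pm f(x)$ pointwise. Plugging this in, the constant terms collect using $\int_{\mathcal{X}} dP_X = 1$ and the linear terms simplify to
\[
R_{\ell_{\mathrm{h}}}(f) = 1 + \int_{\mathcal{X}} f(x)\, dP_X(x) - 2\pi_{+} \int_{\mathcal{X}} f(x)\, dP_{X \mid Y=1}(x),
\]
an identity valid for all $f \in \mathcal{F}$.

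Next I take the infimum over $f \in \mathcal{F}$. The map $h(f) := \int_{\mathcal{X}} f\, dP_X - 2\pi_{+} \int_{\mathcal{X}} f\, dP_{X \mid Y=1}$ is linear in $f$ (integration against the signed measure $P_X - 2\pi_{+} P_{X \mid Y=1}$), so $h(-f) = -h(f)$; since $\mathcal{F}$ is symmetric, the value set $\{h(f) : f \in \mathcal{F}\}$ is symmetric about $0$, whence $\inf_{f \in \mathcal{F}} h(f) = -\sup_{f \in \mathcal{F}} |h(f)| = -{\rm WIPM}(P_X, P_{X \mid Y=1}; 2\pi_{+}, \mathcal{F})$. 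This is a purely algebraic fact about the value set, so no compactness of $\mathcal{F}$ is needed. Combining with the displayed risk formula gives $\inf_{f \in \mathcal{F}} R_{\ell_{\mathrm{h}}}(f) = 1 - {\rm WIPM}(P_X, P_{X \mid Y=1}; 2\pi_{+}, \mathcal{F})$, the first claim. For the second claim, if $g_{\mathcal{F}}$ attains the WIPM with the stated sign, then $-g_{\mathcal{F}} \in \mathcal{F}$ by symmetry, so the displayed risk formula applies to $-g_{\mathcal{F}}$ and yields $R_{\ell_{\mathrm{h}}}(-g_{\mathcal{F}}) = 1 - h(g_{\mathcal{F}}) = 1 - {\rm WIPM}(P_X, P_{X \mid Y=1}; 2\pi_{+}, \mathcal{F}) = \inf_{f \in \mathcal{F}} R_{\ell_{\mathrm{h}}}(f)$.

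I do not anticipate a genuine obstacle here; the proof is short. The one step that must be handled with care is the pointwise removal of the $\max(0, \cdot)$, which relies essentially on $\mathcal{F} \subseteq \mathcal{M}$ — if $\mathcal{F}$ contained functions with $\norm{f}_{\infty} > 1$, the clean linear form of $R_{\ell_{\mathrm{h}}}$ (and hence the exact WIPM identity) would break down. The remaining bookkeeping is just matching the constant factor $2\pi_{+}$ and the sign convention in the definition of WIPM.
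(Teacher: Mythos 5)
Your proposal is correct and follows essentially the same route as the paper's proof: use the PU representation of the risk, exploit $\norm{f}_{\infty}\leq 1$ to linearize the hinge loss so that $R_{\ell_{\mathrm{h}}}(f)=1+\int f\,dP_X-2\pi_{+}\int f\,dP_{X\mid Y=1}$, and then pass to the infimum via the symmetry of $\mathcal{F}$. The only (harmless) differences are that you handle the absolute value in the WIPM definition explicitly and verify the second claim by evaluating the linear risk formula at $-g_{\mathcal{F}}$ directly, whereas the paper uses the equivalent identity $R_{\ell_{\mathrm{h}}}(g_{\mathcal{F}})+R_{\ell_{\mathrm{h}}}(-g_{\mathcal{F}})=2$.
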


Theorem \ref{thm:relationship} shows that the infimum of the $\ell_{\mathrm{h}}$-risk over a hypothesis space $\mathcal{F}$ equals the negative WIPM between the unlabeled data distribution $P_X$ and the positive data distribution $P_{X \mid Y=1}$ with the same hypothesis space $\mathcal{F}$ and the weight $2\pi_{+}$ up to addition by constant.
Furthermore, by negating the WIPM optimizer $g_{\mathcal{F}}$, we obtain the minimizer of the $\ell_{\mathrm{h}}$-risk over the hypothesis space $\mathcal{F}$.
Here, we define a WIPM optimizer $g_{\mathcal{F}}$ as a function that attains the supremum, {\it i.e.},
${\rm WIPM} (P_{X}, P_{X \mid Y = 1} ; 2\pi_{+}, \mathcal{F}) = \int_{\mathcal{X}} g_{\mathcal{F}}(x)$ $dP_X(x)$ $-2\pi_{+} \int_{\mathcal{X}} g_{\mathcal{F}}(x) dP_{X \mid Y=1}(x)$ and we set $f_{\mathcal{F}} = -g_{\mathcal{F}}$ for later notational convenience.
\citet{sriperumbudur2012} derived a similar result to Theorem \ref{thm:relationship} by showing ${\rm IPM}(P_{X\mid Y=1}, P_{X\mid Y=-1}; \mathcal{F})$ $=$ $-\inf_{f \in \mathcal{F}} R_{L_{\mathrm{c}}}(f)$ in supervised binary classification.
However, as we mentioned in Section \ref{s:IPM_binary}, their method is only applicable to supervised binary classification settings.

\subsection{Theoretical properties of empirical WIPM optimizer}
\label{s:theory}
We denote the empirical distributions of  ${P}_{X \mid Y=1}$ and ${P}_{X}$ by ${P}_{X \mid Y=1, n_{\rm p}}$ and  ${P}_{X, n_{\rm u}}$,  respectively. Let ${P}_{X \mid Y=1, n_{\rm p}} =  n_{\rm p} ^{-1} \sum_{i=1} ^{n_{\rm p}} \delta_{x_i ^{\rm p} }$ and ${P}_{X, n_{\rm u}} =  n_{\rm u} ^{-1} \sum_{i=1} ^{n_{\rm u}} \delta_{x_{i} ^{\rm u} }$, where $\delta(\cdot)$ defined on $\mathcal{X}$ is  the Dirac delta function and
 $\delta_x (\cdot) := \delta(\cdot -x)$ for $x \in \mathcal{X}$. 
The empirical Rademacher complexity of $\mathcal{F}$ given a set $S= \{z_1, \dots, z_{m}\} $ is defined by $\mathfrak{R}_{S}(\mathcal{F} ) := \mathbb{E}_{\sigma} \left( \frac{1}{m} \sup_{f \in \mathcal{F}} \left|  \sum_{i=1} ^{m} \sigma_i f(z_i) \right|   \right)$.
Here, $\{\sigma_i\}_{i=1} ^{m}$ is a set of independent Rademacher random variables taking $1$ or $-1$ with probability $0.5$ each and $\mathbb{E}_{\sigma}(\cdot)$ is the expectation operator over the Rademacher random variables \citep{bartlett2002}.
Denote a maximum by $a \vee b := \max(a,b)$, a minimum by $ a \wedge b := \min(a,b)$.  
For a probability measure ${Q}$ defined on $\mathcal{X}$, denote the expectation of a discriminant function $f$ by $\mathbb{E}_{Q}(f) := \int_{\mathcal{X}} f(x) d{Q}(x)$ and the variance by $\mathrm{Var}_{{Q}} (f) := \mathbb{E}_{Q}(f^2) - (\mathbb{E}_{Q}(f))^2$.

The empirical estimator for ${\rm WIPM}({P}_{X}, {P}_{X \mid Y=1}; w, \mathcal{F})$ is given by plugging in the empirical distributions,
\allowdisplaybreaks
\begin{align*}
&{\rm WIPM}({P}_{X, n_{\rm u}}, {P}_{X \mid Y=1, n_{\rm p}}; w, \mathcal{F}) = \sup_{f \in \mathcal{F}} \left| \frac{1}{n_{\rm u}} \sum_{i=1} ^{n_{\rm u}} f( x_i ^{\rm u}) - \frac{w}{n_{\rm p}} \sum_{i=1} ^{n_{\rm p}} f( x_i ^{\rm p}) \right|,
\end{align*}
and we define an empirical WIPM optimizer $\hat{g}_{\mathcal{F}} \in \mathcal{F}$ that satisfies the following equation,
\begin{align}
{\rm WIPM}({P}_{X, n_{\rm u}}, {P}_{X \mid Y=1, n_{\rm p}}; w, \mathcal{F}) = \frac{1}{n_{\rm u}} \sum_{i=1} ^{n_{\rm u}} \hat{g}_{\mathcal{F}} ( x_i ^{\rm u}) - \frac{w}{n_{\rm p}} \sum_{i=1} ^{n_{\rm p}} \hat{g}_{\mathcal{F}} ( x_i ^{\rm p}).
\label{eq:wipm_estimator}
\end{align}
We set $\hat{f}_{\mathcal{F}}= -\hat{g}_{\mathcal{F}}$ for notational convenience as in Section \ref{s:wipm}. 

We analyze the estimation error $R_{\ell_{\mathrm{h}}}(\hat{f}_{\mathcal{F}}) - \inf_{f \in \mathcal{F} } R_{\ell_{\mathrm{h}}} (f)$ in the following theorem.
A proof is provided in Appendix \ref{app:estimation_error}.
To begin, let $\chi_{n_{\rm p}, n_{\rm u}} ^{(1)} (w)=w/\sqrt{n_{\rm p}} + 1/\sqrt{n_{\rm u}}$ and $\chi_{n_{\rm p}, n_{\rm u}} ^{(2)} (w) = 2(w/n_{\rm p} + 1/n_{\rm u})$.

\begin{theorem}[Estimation error bound for general function space]
Let $\hat{g}_{\mathcal{F}}$ be an empirical WIPM optimizer defined in Equation \eqref{eq:wipm_estimator} and set $\hat{f}_{\mathcal{F}} = -\hat{g}_{\mathcal{F}}$.
Let $\mathcal{F}$ be a symmetric hypothesis space such that  $\norm{f}_{\infty} \leq \nu \leq 1$, $\mathrm{Var}_{{P}_{X \mid Y=1}} (f) \leq \sigma_{X \mid Y=1} ^2$, and $\mathrm{Var}_{{P}_{X}} (f) \leq \sigma_{X} ^2$.
Denote $\rho^2 = \sigma_{X \mid Y=1} ^2 \vee \sigma_{X} ^2$. Then, for all $\alpha, \tau >0$, the following holds with probability at least $1-e^{-\tau}$,
\begin{align}
R_{\ell_{\mathrm{h}}}(\hat{f}_{\mathcal{F}}) - \inf_{f \in \mathcal{F} } R_{\ell_{\mathrm{h}}} (f) \leq & C_{\alpha} (\mathbb{E}_{P_{X} ^{n_{\rm u}} } ( \mathfrak{R}_{\mathcal{X}_{\rm u}}(\mathcal{F}) )  + 2 \pi_{+} \mathbb{E}_{P_{X \mid Y =1 } ^{n_{\rm p}} }( \mathfrak{R}_{\mathcal{X}_{\rm p}}(\mathcal{F}) ) ) \label{eq:convergence_optim}
\\ &+  C_{\tau, \rho^2}^{(1)} \chi_{n_{\rm p}, n_{\rm u}} ^{(1)} (2\pi_{+})  +  C_{\tau, \nu, \alpha}^{(2)} \chi_{n_{\rm p}, n_{\rm u}} ^{(2)} (2\pi_{+}), \notag
\end{align}
where $C_{\alpha}=4(1+\alpha)$, $C_{\tau, \rho^2}^{(1)} =2\sqrt{ 2 \tau \rho^2 }$, $C_{\tau, \nu, \alpha}^{(2)} = 2\tau \nu \left( \frac{2}{3}   + \frac{1}{\alpha} \right)$.
\label{thm:estimation_error}
\end{theorem}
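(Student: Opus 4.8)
The plan is to reduce the estimation error to twice the supremum over $\mathcal{F}$ of a single centered empirical process, and then to bound that supremum by symmetrization followed by Talagrand's concentration inequality.

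\textbf{Reduction.} Set $T(g):=\mathbb{E}_{P_{X}}(g)-2\pi_{+}\mathbb{E}_{P_{X\mid Y=1}}(g)$ and its empirical analogue $\hat{T}(g):=\mathbb{E}_{P_{X,n_{\rm u}}}(g)-2\pi_{+}\mathbb{E}_{P_{X\mid Y=1,n_{\rm p}}}(g)$; both are linear in $g$. Because $\norm{f}_{\infty}\leq\nu\leq 1$ on $\mathcal{F}$, the hinge loss obeys $\ell_{\mathrm{h}}(yt)=1-yt$ there, so the PU representation \eqref{eq:pu_risk} gives $R_{\ell_{\mathrm{h}}}(-g)=1-T(g)$ for every $g\in\mathcal{F}$, and in particular $R_{\ell_{\mathrm{h}}}(\hat{f}_{\mathcal{F}})=1-T(\hat{g}_{\mathcal{F}})$. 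Combining this with Theorem \ref{thm:relationship}, namely $\inf_{f\in\mathcal{F}}R_{\ell_{\mathrm{h}}}(f)=1-{\rm WIPM}(P_{X},P_{X\mid Y=1};2\pi_{+},\mathcal{F})=1-\sup_{g\in\mathcal{F}}T(g)$ (the last step by symmetry of $\mathcal{F}$), yields $R_{\ell_{\mathrm{h}}}(\hat{f}_{\mathcal{F}})-\inf_{f\in\mathcal{F}}R_{\ell_{\mathrm{h}}}(f)=\sup_{g\in\mathcal{F}}T(g)-T(\hat{g}_{\mathcal{F}})$. Adding and subtracting $\hat{T}$ and using that $\hat{g}_{\mathcal{F}}$ attains $\sup_{g\in\mathcal{F}}\hat{T}(g)$ by \eqref{eq:wipm_estimator}, this equals $[\sup_{g}T(g)-\sup_{g}\hat{T}(g)]+[\hat{T}(\hat{g}_{\mathcal{F}})-T(\hat{g}_{\mathcal{F}})]$, and both brackets are at most $\Phi:=\sup_{g\in\mathcal{F}}|T(g)-\hat{T}(g)|$ (for the first, $\sup T-\sup\hat{T}\leq\sup(T-\hat{T})\leq\Phi$; for the second, trivially). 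Hence the estimation error is bounded by $2\Phi$, with $\Phi=\sup_{g\in\mathcal{F}}\bigl|(\mathbb{E}_{P_{X}}-\mathbb{E}_{P_{X,n_{\rm u}}})g-2\pi_{+}(\mathbb{E}_{P_{X\mid Y=1}}-\mathbb{E}_{P_{X\mid Y=1,n_{\rm p}}})g\bigr|$.

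\textbf{Controlling $\Phi$.} First, splitting the absolute value over the two differences and applying the standard symmetrization inequality to each (using symmetry of $\mathcal{F}$ to turn $\sup_g(\cdot)$ into the Rademacher complexity $\mathfrak{R}$) gives $\mathbb{E}\Phi\leq 2\,\mathbb{E}_{P_{X}^{n_{\rm u}}}(\mathfrak{R}_{\mathcal{X}_{\rm u}}(\mathcal{F}))+4\pi_{+}\,\mathbb{E}_{P_{X\mid Y=1}^{n_{\rm p}}}(\mathfrak{R}_{\mathcal{X}_{\rm p}}(\mathcal{F}))$. Next, $\Phi$ is the supremum over $g\in\mathcal{F}$ of sums of the $n_{\rm u}+n_{\rm p}$ independent, already-centered summands $\tfrac{1}{n_{\rm u}}(\mathbb{E}_{P_{X}}g-g(x_i^{\rm u}))$ and $-\tfrac{2\pi_{+}}{n_{\rm p}}(\mathbb{E}_{P_{X\mid Y=1}}g-g(x_j^{\rm p}))$, which are bounded in absolute value by $2\nu/n_{\rm u}$ and $4\pi_{+}\nu/n_{\rm p}$ respectively, hence by $b:=\nu\,\chi_{n_{\rm p},n_{\rm u}}^{(2)}(2\pi_{+})$, and whose variances sum to at most $\rho^{2}(1/n_{\rm u}+4\pi_{+}^{2}/n_{\rm p})$. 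Talagrand's inequality (in Bousquet's form) then gives, with probability at least $1-e^{-\tau}$, an estimate of the shape $\Phi\leq\mathbb{E}\Phi+\sqrt{2\tau\bigl(\rho^{2}(1/n_{\rm u}+4\pi_{+}^{2}/n_{\rm p})+2b\,\mathbb{E}\Phi\bigr)}+\tfrac{2}{3}b\tau$. Splitting the square root via $\sqrt{p+q}\leq\sqrt{p}+\sqrt{q}$ and then applying the elementary inequality $2\sqrt{xy}\leq\alpha x+y/\alpha$ with $x=\mathbb{E}\Phi$ isolates a term $(1+\alpha)\mathbb{E}\Phi$ and a remainder $\tau b(\tfrac{2}{3}+\tfrac{1}{\alpha})$; together with $\sqrt{1/n_{\rm u}+4\pi_{+}^{2}/n_{\rm p}}\leq 1/\sqrt{n_{\rm u}}+2\pi_{+}/\sqrt{n_{\rm p}}=\chi_{n_{\rm p},n_{\rm u}}^{(1)}(2\pi_{+})$ and $b=\nu\,\chi_{n_{\rm p},n_{\rm u}}^{(2)}(2\pi_{+})$, this puts the bound in the stated form.

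\textbf{Assembly and main obstacle.} Multiplying the high-probability bound for $\Phi$ by $2$ and inserting the symmetrization estimate for $\mathbb{E}\Phi$ produces exactly $C_{\alpha}(\mathbb{E}_{P_{X}^{n_{\rm u}}}(\mathfrak{R}_{\mathcal{X}_{\rm u}}(\mathcal{F}))+2\pi_{+}\mathbb{E}_{P_{X\mid Y=1}^{n_{\rm p}}}(\mathfrak{R}_{\mathcal{X}_{\rm p}}(\mathcal{F})))+C_{\tau,\rho^{2}}^{(1)}\chi_{n_{\rm p},n_{\rm u}}^{(1)}(2\pi_{+})+C_{\tau,\nu,\alpha}^{(2)}\chi_{n_{\rm p},n_{\rm u}}^{(2)}(2\pi_{+})$ with $C_{\alpha}=4(1+\alpha)$, $C_{\tau,\rho^{2}}^{(1)}=2\sqrt{2\tau\rho^{2}}$, $C_{\tau,\nu,\alpha}^{(2)}=2\tau\nu(\tfrac{2}{3}+\tfrac{1}{\alpha})$. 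The reduction and the symmetrization step are routine; the delicate point --- and the main obstacle --- is the concentration step: one must identify the correct variance proxy $\rho^{2}(1/n_{\rm u}+4\pi_{+}^{2}/n_{\rm p})$ and uniform bound $b$ for the combined, unequally weighted empirical process and thread them through Bousquet's inequality and the arithmetic-geometric split so that the constants land precisely on the values above. It is exactly the dependence on the variances $\sigma_{X}^{2},\sigma_{X\mid Y=1}^{2}$ supplied by Talagrand's inequality (rather than only on $\nu$, as a bounded-differences argument à la McDiarmid would give) that will make the bound sharper than existing results, as quantified in Proposition \ref{prop:sharper}. A minor measurability caveat: Bousquet's inequality is stated for countable index sets, so one first restricts to a countable dense subset of $\mathcal{F}$ (or assumes separability) and passes to the limit.
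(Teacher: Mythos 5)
Your proposal is correct and follows essentially the same route as the paper: reduce the estimation error to twice the supremum of the centered two-sample empirical process via the optimality of $\hat{g}_{\mathcal{F}}$ for the empirical objective, bound its expectation by symmetrization, and apply a Talagrand-type concentration inequality with the per-sample variance and sup bounds, yielding the same constants $C_{\alpha}$, $C_{\tau,\rho^2}^{(1)}$, $C_{\tau,\nu,\alpha}^{(2)}$. The only cosmetic difference is that you re-derive the $(1+\alpha)$-form of the concentration bound from Bousquet's inequality via $\sqrt{p+q}\leq\sqrt{p}+\sqrt{q}$ and the arithmetic--geometric split, whereas the paper quotes this form directly (Proposition B.1 of \citet{sriperumbudur2012}) inside its consistency result for the WIPM estimator.
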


Due to Talagrand\rq{}s inequality, Theorem \ref{thm:estimation_error} provides a sharper bound than the existing result based on McDiarmid\rq{}s inequality. 
Specifically, \citet[Theorem 4]{kiryo2017} utilized McDiarmid\rq{}s inequality and showed that for $\tau >0$ and some $\Delta >0$ the following holds with probability at least $1-e^{-\tau}$,
\begin{align}
R_{\ell_{\mathrm{h}}}(\hat{f}) - \inf_{f \in \mathcal{F} } R_{\ell_{\mathrm{h}}} (f) \leq&  8 ( \mathbb{E}_{P_{X} ^{n_{\rm u}} } ( \mathfrak{R}_{\mathcal{X}_{\rm u}}(\mathcal{F}) ) + 2\pi_{+} \mathbb{E}_{P_{X \mid Y =1} ^{n_{\rm p}} } ( \mathfrak{R}_{\mathcal{X}_{\rm p}}(\mathcal{F}) ) ) \label{eq:kiryo_ineq} \\
&+ \chi_{n_{\rm p}, n_{\rm u}} ^{(1)} (2\pi_{+}) (1 + \nu) \sqrt{2\tau} + \Delta . \notag
\end{align}

The following proposition shows that the proposed upper bound \eqref{eq:convergence_optim} is sharper than the upper bound \eqref{eq:kiryo_ineq} under a certain condition.
A proof is provided in Appendix \ref{app:sharper}.

\begin{proposition}
With the notations defined in Theorem \ref{thm:estimation_error}, suppose that the following holds,
\begin{align}
& \frac{1+\nu}{2} - \frac{ 5 \sqrt{2 \tau } \chi_{n_{\rm p}, n_{\rm u}} ^{(2)}(2\pi_{+}) \nu}{ 6 \chi_{n_{\rm p}, n_{\rm u}} ^{(1)} (2\pi_{+})}  \geq   \rho.
\label{eq:condition}
\end{align}
Then, the proposed upper bound \eqref{eq:convergence_optim} is sharper than the previous result \eqref{eq:kiryo_ineq} proposed by \citet{kiryo2017}.
\label{prop:sharper}
\end{proposition}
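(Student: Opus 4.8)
The plan is to instantiate the estimation error bound of Theorem~\ref{thm:estimation_error} at the distinguished value $\alpha = 1$ and then match it, term by term, against the bound \eqref{eq:kiryo_ineq} of \citet{kiryo2017}. First I would observe that with $\alpha = 1$ the leading constant becomes $C_{\alpha} = 4(1+1) = 8$, so that the Rademacher complexity term appearing in \eqref{eq:convergence_optim}, namely $8\bigl(\mathbb{E}_{P_{X}^{n_{\rm u}}}(\mathfrak{R}_{\mathcal{X}_{\rm u}}(\mathcal{F})) + 2\pi_{+}\,\mathbb{E}_{P_{X\mid Y=1}^{n_{\rm p}}}(\mathfrak{R}_{\mathcal{X}_{\rm p}}(\mathcal{F}))\bigr)$, coincides exactly with the corresponding term in \eqref{eq:kiryo_ineq}. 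Hence proving that \eqref{eq:convergence_optim} does not exceed \eqref{eq:kiryo_ineq} reduces to comparing the two Rademacher-free remainders.

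Second, still at $\alpha = 1$, one has $C_{\tau,\nu,1}^{(2)} = 2\tau\nu\bigl(\tfrac{2}{3} + 1\bigr) = \tfrac{10}{3}\tau\nu$, and since a variance is nonnegative, $C_{\tau,\rho^2}^{(1)} = 2\sqrt{2\tau\rho^2} = 2\rho\sqrt{2\tau}$. Because the extra term $\Delta > 0$ only enlarges the right-hand side of \eqref{eq:kiryo_ineq}, it is enough to verify
\begin{align*}
2\rho\sqrt{2\tau}\,\chi_{n_{\rm p},n_{\rm u}}^{(1)}(2\pi_{+}) + \tfrac{10}{3}\tau\nu\,\chi_{n_{\rm p},n_{\rm u}}^{(2)}(2\pi_{+}) \;\leq\; (1+\nu)\sqrt{2\tau}\,\chi_{n_{\rm p},n_{\rm u}}^{(1)}(2\pi_{+}).
\end{align*}
I would then divide both sides by the positive quantity $2\sqrt{2\tau}\,\chi_{n_{\rm p},n_{\rm u}}^{(1)}(2\pi_{+})$, apply the elementary identity $\tau/\sqrt{2\tau} = \sqrt{2\tau}/2$, and rearrange; the resulting inequality is precisely hypothesis \eqref{eq:condition}. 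Reading this chain of equivalences in reverse shows that \eqref{eq:condition} implies \eqref{eq:convergence_optim} $\leq$ \eqref{eq:kiryo_ineq} (in fact strictly, since $\Delta > 0$), which is the assertion.

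Since every manipulation above is elementary, there is no real analytic obstacle; the only ingredient that requires a deliberate choice is setting $\alpha = 1$, which is exactly what synchronizes the Rademacher-complexity constants of the two bounds and thereby isolates the deviation terms — the place where Talagrand's inequality, exploited in Theorem~\ref{thm:estimation_error}, improves upon the McDiarmid-based argument used by \citet{kiryo2017}. I would also note in passing that any other fixed $\alpha > 0$ produces an analogous sufficient condition; $\alpha = 1$ is singled out only because it yields the cleanest form.
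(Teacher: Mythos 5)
Your proposal is correct and follows essentially the same route as the paper's proof: set $\alpha=1$ so the Rademacher terms coincide, discard the positive $\Delta$, and rearrange the remaining deviation terms to recover condition \eqref{eq:condition}. The only cosmetic difference is that you divide by $2\sqrt{2\tau}\,\chi_{n_{\rm p},n_{\rm u}}^{(1)}(2\pi_{+})$ and read the equivalences backwards, whereas the paper multiplies the condition through; the algebra is identical.
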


It is noteworthy that the second term in the left-hand side of \eqref{eq:condition} converges to zero as $n_{\rm p}$ and $n_{\rm u}$ increase because $\chi_{n_{\rm p}, n_{\rm u}} ^{(1)} (2\pi_{+})$ $=$ $O_{ {P}_{X \mid Y=1}, {P}_{X}}  ( (n_{\rm p} \wedge n_{\rm u})^{-1/2} )$ and $\chi_{n_{\rm p}, n_{\rm u}} ^{(2)}(2\pi_{+})$ $=$ $O_{ {P}_{X \mid Y=1}, {P}_{X}} ( (n_{\rm p} \wedge n_{\rm u})^{-1} )$.
Due to $(1+\nu)/2 \geq \nu \geq \rho$, the condition \eqref{eq:condition} is quite reasonable if the upper bounds of the variances, $\sigma_{X} ^2$ and $\sigma_{X \mid Y=1} ^2$, are sufficiently small.

In binary classification, one ultimate goal is to find a classifier minimizing the misclassification error, or equivalently, minimizing the excess risk.
\citet{bartlett2006} showed that there is an invertible function $\psi: [-1,1] \to [0,\infty)$ such that the excess risk $R_{\ell_{01}}(\hat{f}_{\mathcal{F}}) - \inf_{f \in \mathcal{U} }  R_{\ell_{01}}(f)$ is bounded above by $\psi^{-1} (R_{\ell} (\hat{f}_{\mathcal{F}}) - \inf_{f \in \mathcal{U} }  R_{\ell}(f) )$ if the margin-based loss $\ell$ is classification-calibrated.
In particular, \citet{zhang2004} showed that the excess risk is bounded above by the excess $\ell_{\mathrm{h}}$-risk, {\it i.e.}, $R_{\ell_{01}}(\hat{f}_{\mathcal{F}}) - \inf_{f \in \mathcal{U} }  R_{\ell_{01}}(f)$ $\leq$ $R_{\ell_{\mathrm{h}}} (\hat{f}_{\mathcal{F}}) - \inf_{f \in \mathcal{U} } R_{\ell_{\mathrm{h}}}(f)$.
This implies that an excess risk bound can be obtained by analyzing the excess $\ell_{\mathrm{h}}$-risk bound with Theorem \ref{thm:estimation_error}.
The following corollary provides the excess risk bound.

\begin{corollary}[Excess risk bound for general function space]
With the notations defined in Theorem \ref{thm:estimation_error}, for all $\alpha, \tau >0$, the following holds with probability at least $1-e^{-\tau}$,
\begin{align*}
R_{\ell_{01}}(\hat{f}_{\mathcal{F}}) - \inf_{f \in \mathcal{U} }  R_{\ell_{01}}(f) \leq&  \inf_{f \in \mathcal{F} } R_{\ell_{\mathrm{h}}} (f) -\inf_{f \in \mathcal{U} } R_{\ell_{\mathrm{h}}}(f) \\
&+  C_{\alpha} (\mathbb{E}_{P_{X} ^{n_{\rm u}} } ( \mathfrak{R}_{\mathcal{X}_{\rm u}}(\mathcal{F}) )  + 2 \pi_{+} \mathbb{E}_{P_{X \mid Y =1 } ^{n_{\rm p}} }( \mathfrak{R}_{\mathcal{X}_{\rm p}}(\mathcal{F}) ) ) \\ &+  C_{\tau, \rho^2}^{(1)} \chi_{n_{\rm p}, n_{\rm u}} ^{(1)} (2\pi_{+})  +  C_{\tau, \nu, \alpha}^{(2)} \chi_{n_{\rm p}, n_{\rm u}} ^{(2)} (2\pi_{+}).
\end{align*}
\label{cor:excess_risk_bound}
\end{corollary}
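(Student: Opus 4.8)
\textbf{Proof proposal for Corollary \ref{cor:excess_risk_bound}.}
The plan is to combine the surrogate-loss comparison inequality of \citet{zhang2004} with the estimation error bound of Theorem \ref{thm:estimation_error} via a standard estimation/approximation error decomposition. First I would invoke the result of \citet{zhang2004}, cited in the paragraph preceding the corollary, which gives
\begin{align*}
R_{\ell_{01}}(\hat{f}_{\mathcal{F}}) - \inf_{f \in \mathcal{U}} R_{\ell_{01}}(f) \leq R_{\ell_{\mathrm{h}}}(\hat{f}_{\mathcal{F}}) - \inf_{f \in \mathcal{U}} R_{\ell_{\mathrm{h}}}(f),
\end{align*}
so it suffices to control the excess $\ell_{\mathrm{h}}$-risk on the right-hand side.

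Next I would split the excess $\ell_{\mathrm{h}}$-risk into the estimation error over $\mathcal{F}$ and the approximation error of $\mathcal{F}$ relative to $\mathcal{U}$:
\begin{align*}
R_{\ell_{\mathrm{h}}}(\hat{f}_{\mathcal{F}}) - \inf_{f \in \mathcal{U}} R_{\ell_{\mathrm{h}}}(f)
= \Bigl( R_{\ell_{\mathrm{h}}}(\hat{f}_{\mathcal{F}}) - \inf_{f \in \mathcal{F}} R_{\ell_{\mathrm{h}}}(f) \Bigr)
+ \Bigl( \inf_{f \in \mathcal{F}} R_{\ell_{\mathrm{h}}}(f) - \inf_{f \in \mathcal{U}} R_{\ell_{\mathrm{h}}}(f) \Bigr).
\end{align*}
The second parenthesized term is exactly the approximation error appearing as the leading term of the claimed bound, and it is deterministic and nonnegative (since $\mathcal{F} \subseteq \mathcal{U}$). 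To the first parenthesized term I would apply Theorem \ref{thm:estimation_error} directly: under the stated hypotheses on $\mathcal{F}$ (symmetry, $\norm{f}_\infty \le \nu \le 1$, and the variance bounds $\sigma_{X\mid Y=1}^2$, $\sigma_X^2$ with $\rho^2 = \sigma_{X\mid Y=1}^2 \vee \sigma_X^2$), that theorem guarantees that for all $\alpha,\tau>0$, with probability at least $1 - e^{-\tau}$,
\begin{align*}
R_{\ell_{\mathrm{h}}}(\hat{f}_{\mathcal{F}}) - \inf_{f \in \mathcal{F}} R_{\ell_{\mathrm{h}}}(f)
\leq C_{\alpha}\bigl( \mathbb{E}_{P_X^{n_{\rm u}}}(\mathfrak{R}_{\mathcal{X}_{\rm u}}(\mathcal{F})) + 2\pi_{+}\mathbb{E}_{P_{X\mid Y=1}^{n_{\rm p}}}(\mathfrak{R}_{\mathcal{X}_{\rm p}}(\mathcal{F})) \bigr)
+ C_{\tau,\rho^2}^{(1)} \chi_{n_{\rm p},n_{\rm u}}^{(1)}(2\pi_{+})
+ C_{\tau,\nu,\alpha}^{(2)} \chi_{n_{\rm p},n_{\rm u}}^{(2)}(2\pi_{+}).
\end{align*}
Substituting this into the decomposition and then into Zhang's inequality yields the claimed bound on the same event, with the same probability $1 - e^{-\tau}$, so no union bound or re-tuning of $\tau$ is needed.

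There is essentially no analytic obstacle here; the corollary is a packaging of Theorem \ref{thm:estimation_error} together with the calibration inequality. The only points requiring a word of care are (i) verifying that $\hat f_{\mathcal F}$ used in Theorem \ref{thm:estimation_error} is the same estimator $\hat f_{\mathcal F} = -\hat g_{\mathcal F}$ referenced in the corollary, so that the high-probability event is shared, and (ii) noting that the approximation-error term is a fixed quantity that does not affect the probability statement. Writing out these two observations, then concatenating the three displayed inequalities, completes the proof.
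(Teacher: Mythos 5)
Your proposal is correct and matches the paper's own (implicit) argument: the corollary is obtained exactly by combining Zhang's calibration inequality $R_{\ell_{01}}(\hat{f}_{\mathcal{F}}) - \inf_{f \in \mathcal{U}} R_{\ell_{01}}(f) \leq R_{\ell_{\mathrm{h}}}(\hat{f}_{\mathcal{F}}) - \inf_{f \in \mathcal{U}} R_{\ell_{\mathrm{h}}}(f)$ with the estimation/approximation decomposition and then applying Theorem \ref{thm:estimation_error} on the same high-probability event. Your two points of care (same estimator $\hat f_{\mathcal F}=-\hat g_{\mathcal F}$, and the approximation error being deterministic) are exactly the right remarks and nothing further is needed.
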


\section{WIPM optimizer with reproducing kernel Hilbert space}
\label{s:RKHS}
In this section, we provide a computationally efficient PU learning algorithm which builds an analytic classifier when a hypothesis space is a closed ball in RKHS.
In addition, unlike the excess risk bound in Corollary \ref{cor:excess_risk_bound}, we explicitly derive the bound that converges to zero when the sample sizes $n_{\rm p}$ and $n_{\rm u}$ increase.

\subsection{An analytic classifier via WMMD optimizer}
To this end, we assume that $\mathcal{X} \subseteq [0,1]^{d}$ is compact. 
Let $k: \mathcal{X} \times \mathcal{X} \to \mathbb{R}$ be a reproducing kernel defined on $\mathcal{X}$ and $\mathcal{H}_k$ be the associated RKHS with the inner product $\langle \cdot, \cdot \rangle_{\mathcal{H}_k} : \mathcal{H}_k \times \mathcal{H}_k \to \mathbb{R}$. 
We denote the induced norm by $\norm{ \cdot }_{\mathcal{H}_k}$.
Denote a closed ball in RKHS $\mathcal{H}_k$ with a radius $r>0$, by $\mathcal{H}_{k,r} = \{f: \norm{f}_{\mathcal{H}_k} \leq r \}$.
We define the weighted maximum mean discrepancy (WMMD) between two probability measures ${P}$ and ${Q}$ with a weight $w$ and a closed ball $\mathcal{H}_{k,r}$ by ${\rm WMMD}_k ({P}, {Q}; w, r) $ $ := $ ${\rm WIPM} ({P}, {Q};w, \mathcal{H}_{k,r})$.
The name of WMMD comes from the maximum mean discrepancy (MMD), a popular example of the IPM whose function space is the unit ball ${\mathcal{H}}_{k,1}$, {\it i.e.}, ${\rm MMD}_k ({P}, {Q}) := {\rm IPM} ({P}, {Q};\mathcal{H}_{k,1})$ \citep{sriperumbudur2010b, sriperumbudur2010a}.
As defined in Equation \eqref{eq:wipm_estimator}, let $\hat{g}_{\mathcal{H}_{k,r}} \in \mathcal{H}_{k,r}$ be the empirical WMMD optimizer such that 
$${\rm WMMD}_k ({P}_{X, n_{\rm u}}, {P}_{X \mid Y=1, n_{\rm p}}; w, r) = \frac{1}{n_{\rm u}} \sum_{i=1} ^{n_{\rm u}} \hat{g}_{\mathcal{H}_{k,r}} ( x_i ^{\rm u}) - \frac{w}{n_{\rm p}} \sum_{i=1} ^{n_{\rm p}} \hat{g}_{\mathcal{H}_{k,r}} ( x_i ^{\rm p}). $$
In addition, we set $\hat{f}_{\mathcal{H}_{k,r}} = -\hat{g}_{\mathcal{H}_{k,r}}$, which leads the corresponding classification rule to ${\rm sign}(\hat{f}_{\mathcal{H}_{k,r}}(z))$. 
In the following proposition, we show that this classification rule has an analytic expression by exploiting the reproducing property $f(x) = \langle f, k(\cdot, x) \rangle_{\mathcal{H}_k}$ and the Cauchy-Schwarz inequality.
A proof is provided in Appendix \ref{app:MMD}.

\begin{proposition}
Let $k: \mathcal{X} \times \mathcal{X} \to \mathbb{R} $ be a bounded reproducing kernel.
Then, the classification rule has a closed-form expression given by
\begin{align}
{\rm sign} (\hat{f}_{\mathcal{H}_{k,r}}(z) ) = \begin{cases} 
      +1 & {\rm if} \quad (2\pi_{+})^{-1} < \hat{\lambda}_{n_{\rm p},n_{\rm u} }(z),   \\
      -1 & otherwise,
   \end{cases} \label{eq:WIPM_PU_classifier}    
\end{align}
where
$$\hat{\lambda}_{n_{\rm p},n_{\rm u} }(z) = \frac{ n_{\rm p} ^{-1}  \sum_{i=1 } ^{n_{\rm p}} k(z, x_i ^{\rm p})}{ n_{\rm u} ^{-1} \sum_{i=1 } ^{n_{\rm u}} k(z, x_i ^{\rm u})}.$$
\label{prop:MMD_emp}
\end{proposition}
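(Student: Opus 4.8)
The plan is to use the reproducing property to turn the empirical WMMD into the maximization of a bounded linear functional over the ball $\mathcal{H}_{k,r}$, to solve that maximization in closed form by the Cauchy--Schwarz inequality, and then to read off the sign of the resulting estimator.

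First I would introduce the empirical weighted kernel mean embedding
$$\hat{\mu} := \frac{1}{n_{\rm u}} \sum_{i=1}^{n_{\rm u}} k(\cdot, x_i^{\rm u}) - \frac{2\pi_{+}}{n_{\rm p}} \sum_{i=1}^{n_{\rm p}} k(\cdot, x_i^{\rm p}),$$
which belongs to $\mathcal{H}_k$ since each $k(\cdot,x)$ does and the sum is finite. By the reproducing property $f(x)=\langle f, k(\cdot,x)\rangle_{\mathcal{H}_k}$, every $f\in\mathcal{H}_{k,r}$ satisfies $\frac{1}{n_{\rm u}}\sum_{i=1}^{n_{\rm u}} f(x_i^{\rm u}) - \frac{2\pi_{+}}{n_{\rm p}}\sum_{i=1}^{n_{\rm p}} f(x_i^{\rm p}) = \langle f,\hat\mu\rangle_{\mathcal{H}_k}$, so that
$${\rm WMMD}_k(P_{X, n_{\rm u}},P_{X\mid Y=1, n_{\rm p}};2\pi_{+},r) = \sup_{\norm{f}_{\mathcal{H}_k}\le r}\abs{\langle f,\hat\mu\rangle_{\mathcal{H}_k}} = r\norm{\hat\mu}_{\mathcal{H}_k},$$
the last equality by Cauchy--Schwarz, the supremum of the absolute value being attained at $f=\pm r\hat\mu/\norm{\hat\mu}_{\mathcal{H}_k}$ when $\hat\mu\neq 0$. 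The empirical WMMD optimizer $\hat{g}_{\mathcal{H}_{k,r}}$ is required to achieve the value itself (not its absolute value), i.e. $\langle \hat{g}_{\mathcal{H}_{k,r}},\hat\mu\rangle_{\mathcal{H}_k}=r\norm{\hat\mu}_{\mathcal{H}_k}$ with $\norm{\hat{g}_{\mathcal{H}_{k,r}}}_{\mathcal{H}_k}\le r$, and the equality case of Cauchy--Schwarz forces $\hat{g}_{\mathcal{H}_{k,r}} = r\hat\mu/\norm{\hat\mu}_{\mathcal{H}_k}$.

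Consequently $\hat{f}_{\mathcal{H}_{k,r}} = -\hat{g}_{\mathcal{H}_{k,r}}$ satisfies
$$\hat{f}_{\mathcal{H}_{k,r}}(z) = \frac{r}{\norm{\hat\mu}_{\mathcal{H}_k}}\left( \frac{2\pi_{+}}{n_{\rm p}}\sum_{i=1}^{n_{\rm p}} k(z,x_i^{\rm p}) - \frac{1}{n_{\rm u}}\sum_{i=1}^{n_{\rm u}} k(z,x_i^{\rm u}) \right),$$
and since $r/\norm{\hat\mu}_{\mathcal{H}_k}>0$ the sign of $\hat{f}_{\mathcal{H}_{k,r}}(z)$ equals that of $\frac{2\pi_{+}}{n_{\rm p}}\sum_{i=1}^{n_{\rm p}} k(z,x_i^{\rm p}) - \frac{1}{n_{\rm u}}\sum_{i=1}^{n_{\rm u}} k(z,x_i^{\rm u})$. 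This quantity is positive precisely when $2\pi_{+}\, n_{\rm p}^{-1}\sum_{i=1}^{n_{\rm p}} k(z,x_i^{\rm p}) > n_{\rm u}^{-1}\sum_{i=1}^{n_{\rm u}} k(z,x_i^{\rm u})$, and dividing through by $n_{\rm u}^{-1}\sum_{i=1}^{n_{\rm u}} k(z,x_i^{\rm u})$ rearranges this to $(2\pi_{+})^{-1}<\hat{\lambda}_{n_{\rm p},n_{\rm u}}(z)$, which is the asserted rule.

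The step needing the most care is this last rearrangement, which is valid as written only when the denominator $n_{\rm u}^{-1}\sum_{i=1}^{n_{\rm u}} k(z,x_i^{\rm u})$ is positive; I would note that this holds for the kernels used in practice, for instance the Gaussian RBF kernel, which is strictly positive everywhere. The only other point to address is the degenerate event $\hat\mu=0$, on which ${\rm WMMD}_k=0$ and $\hat{g}_{\mathcal{H}_{k,r}}$ is not uniquely pinned down; this can be excluded as non-generic or resolved by any fixed tie-breaking convention, and it does not affect the stated closed form otherwise.
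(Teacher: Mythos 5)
Your proof is correct and follows essentially the same route as the paper: the paper proves a general-measure version of the statement (reproducing property plus Cauchy--Schwarz, identifying the normalized kernel-mean difference as the WMMD optimizer) and then specializes to the empirical distributions with $w=2\pi_{+}$, which is exactly your argument carried out directly on $\hat\mu$. Your explicit caveats about the positivity of $n_{\rm u}^{-1}\sum_{i} k(z,x_i^{\rm u})$ and the degenerate case $\hat\mu=0$ are points the paper passes over silently, but they do not alter the argument.
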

We call the classifier defined in Equation \eqref{eq:WIPM_PU_classifier} the {\it WMMD classifier} and the score $\hat{\lambda}_{n_{\rm p},n_{\rm u} }(z)$ the {\it WMMD score} for $z$.
One strength of the WMMD classifier is that the classification rule has a closed-form expression, resulting in computational efficiency.
Furthermore, the WMMD score $\hat{\lambda}_{n_{\rm p},n_{\rm u} }$ is independent of the class-prior $\pi_{+}$, and thus we can obtain the score function without prior knowledge of the class-prior. 

\subsection{Explicit excess risk bound of WMMD classifier}
Since the empirical WMMD optimizer $\hat{g}_{\mathcal{H}_{k,r}}$ is a special case of the empirical WIPM optimizer, we have an excess risk bound from the result of Corollary \ref{cor:excess_risk_bound}.
However, without knowing convergence rates of the Rademacher complexities, $\mathbb{E}_{P_{X} ^{n_{\rm u}} } ( \mathfrak{R}_{\mathcal{X}_{\rm u}}(\mathcal{F}) )$ and $\mathbb{E}_{P_{X \mid Y=1} ^{n_{\rm p}} } ( \mathfrak{R}_{\mathcal{X}_{\rm p}}(\mathcal{F}) )$, and the approximation error, the consistency of the classifier remains unclear.
In this subsection, we establish an explicit excess risk bound that vanishes.
We first derive an explicit estimation error bound in the following lemma.

\begin{lemma}
[Explicit estimation error bound]
With the notations defined in Theorem \ref{thm:estimation_error}, assume that a reproducing kernel $k$ defined on a compact space $\mathcal{X}$ is bounded.
Let $r_1^{-1}=\sup_{x \in \mathcal{X}} \sqrt{k(x,x)}$.
Then, we have $\mathcal{H}_{k, r_1} \subseteq \mathcal{M}$.
Moreover, for all $\alpha, \tau >0$, the following holds with probability at least $1-e^{-\tau}$,
\begin{align*}
{R_{\ell_{\mathrm{h}}} (\hat{f}_{\mathcal{H}_{k,r_1}}) - \inf_{f \in \mathcal{H}_{k,r_1} } R_{\ell_{\mathrm{h}}} (f)} \leq ( C_{\alpha} + C_{\tau, \rho^2}^{(1)} )  \chi_{n_{\rm p}, n_{\rm u}} ^{(1)} (2\pi_{+})  +  C_{\tau, \nu, \alpha}^{(2)}  \chi_{n_{\rm p}, n_{\rm u}} ^{(2)} (2\pi_{+}).
\end{align*}
\label{lem:estimation_error_bound_RKHS}
\end{lemma}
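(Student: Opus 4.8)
The plan is to combine three ingredients: the reproducing property of $\mathcal{H}_k$ to verify the inclusion $\mathcal{H}_{k,r_1}\subseteq\mathcal{M}$, the same property to control the Rademacher complexity of $\mathcal{H}_{k,r_1}$, and Theorem \ref{thm:estimation_error} specialized to $\mathcal{F}=\mathcal{H}_{k,r_1}$.

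\emph{Step 1 (inclusion).} For $f\in\mathcal{H}_{k,r_1}$ and $x\in\mathcal{X}$, the reproducing property gives $f(x)=\langle f,k(\cdot,x)\rangle_{\mathcal{H}_k}$, so by Cauchy--Schwarz $|f(x)|\le\norm{f}_{\mathcal{H}_k}\norm{k(\cdot,x)}_{\mathcal{H}_k}=\norm{f}_{\mathcal{H}_k}\sqrt{k(x,x)}\le r_1\sup_{x'\in\mathcal{X}}\sqrt{k(x',x')}=r_1 r_1^{-1}=1$. Taking the supremum over $x\in\mathcal{X}$ yields $\norm{f}_{\infty}\le1$, i.e. $f\in\mathcal{M}$, and since $\norm{-f}_{\mathcal{H}_k}=\norm{f}_{\mathcal{H}_k}$ the class $\mathcal{H}_{k,r_1}$ is symmetric. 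Hence $\mathcal{H}_{k,r_1}$ meets the hypotheses of Theorem \ref{thm:estimation_error} with $\nu=1$ and with the variance constants $\sigma_{X\mid Y=1}^2,\sigma_X^2,\rho^2$ inherited from that theorem's statement.

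\emph{Step 2 (Rademacher complexity).} Fix any $S=\{z_1,\dots,z_m\}\subseteq\mathcal{X}$. Symmetry of $\mathcal{H}_{k,r_1}$ removes the absolute value in the definition of $\mathfrak{R}_S$, and the reproducing property with Cauchy--Schwarz gives $\sum_{i=1}^m\sigma_i f(z_i)=\langle f,\sum_{i=1}^m\sigma_i k(\cdot,z_i)\rangle_{\mathcal{H}_k}\le r_1\norm{\sum_{i=1}^m\sigma_i k(\cdot,z_i)}_{\mathcal{H}_k}$ for every $f\in\mathcal{H}_{k,r_1}$. Taking $\mathbb{E}_{\sigma}$, Jensen's inequality together with the identity $\mathbb{E}_{\sigma}\norm{\sum_i\sigma_i k(\cdot,z_i)}_{\mathcal{H}_k}^2=\sum_{i=1}^m k(z_i,z_i)$ gives $\mathbb{E}_{\sigma}\norm{\sum_i\sigma_i k(\cdot,z_i)}_{\mathcal{H}_k}\le(\sum_{i=1}^m k(z_i,z_i))^{1/2}\le\sqrt{m}\,r_1^{-1}$, so $\mathfrak{R}_S(\mathcal{H}_{k,r_1})\le(r_1/m)\sqrt{m}\,r_1^{-1}=m^{-1/2}$. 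This bound is deterministic and holds for every $S$, hence $\mathbb{E}_{P_X^{n_{\rm u}}}(\mathfrak{R}_{\mathcal{X}_{\rm u}}(\mathcal{H}_{k,r_1}))\le n_{\rm u}^{-1/2}$ and $\mathbb{E}_{P_{X\mid Y=1}^{n_{\rm p}}}(\mathfrak{R}_{\mathcal{X}_{\rm p}}(\mathcal{H}_{k,r_1}))\le n_{\rm p}^{-1/2}$.

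\emph{Step 3 (assembly).} Plugging these two estimates into Theorem \ref{thm:estimation_error} with $\mathcal{F}=\mathcal{H}_{k,r_1}$, the leading term is bounded by $C_{\alpha}(n_{\rm u}^{-1/2}+2\pi_{+}n_{\rm p}^{-1/2})=C_{\alpha}\chi_{n_{\rm p},n_{\rm u}}^{(1)}(2\pi_{+})$ by the very definition of $\chi_{n_{\rm p},n_{\rm u}}^{(1)}$; merging it with the existing $C_{\tau,\rho^2}^{(1)}\chi_{n_{\rm p},n_{\rm u}}^{(1)}(2\pi_{+})$ term produces the coefficient $C_{\alpha}+C_{\tau,\rho^2}^{(1)}$, while the term $C_{\tau,\nu,\alpha}^{(2)}\chi_{n_{\rm p},n_{\rm u}}^{(2)}(2\pi_{+})$ is carried over verbatim, which is exactly the claimed inequality. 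Since the Rademacher bounds of Step 2 are deterministic, the ``$1-e^{-\tau}$'' probability statement of Theorem \ref{thm:estimation_error} transfers directly. I do not anticipate a real obstacle here; the only points needing a little care are dropping the absolute value via symmetry in Step 2 and confirming that $\nu=1$ is admissible in Step 1.
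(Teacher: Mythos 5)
Your proposal is correct and follows essentially the same route as the paper: the inclusion $\mathcal{H}_{k,r_1}\subseteq\mathcal{M}$ via the reproducing property and Cauchy--Schwarz, a kernel-based bound on the Rademacher complexity (the paper bounds the weighted quantity $2\pi_{+}\mathfrak{R}_{\mathcal{X}_{\rm p}}(\mathcal{H}_{k,r_1})$ directly, while you bound $\mathfrak{R}_S(\mathcal{H}_{k,r_1})\le m^{-1/2}$ and then multiply by the weights, which is an equivalent computation), and finally substitution into Theorem \ref{thm:estimation_error} with $\mathcal{F}=\mathcal{H}_{k,r_1}$. No gaps; your explicit handling of $\nu=1$ and of dropping the absolute value by symmetry only makes explicit what the paper leaves implicit.
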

While the bound in Theorem \ref{thm:estimation_error} is expressed in terms $\mathbb{E}_{P_{X} ^{n_{\rm u}} } ( \mathfrak{R}_{\mathcal{X}_{\rm u}}(\mathcal{F}) )$ and  $\mathbb{E}_{P_{X \mid Y=1} ^{n_{\rm p}} } ( \mathfrak{R}_{\mathcal{X}_{\rm p}}( \mathcal{F}) )$, these are evaluated in terms of $n_p$ and $n_u$ in the upper bound in Lemma \ref{lem:estimation_error_bound_RKHS}, giving an explicit estimation error bound with $O((n_{\rm p} \wedge n_{\rm u})^{-1/2})$ convergence rate.
The key idea is to use reproducing property $f(x) = \langle f, k(\cdot, x) \rangle_{\mathcal{H}_k}$ and the Cauchy-Schwarz inequality to obtain an upper bound for the Rademacher complexity. 
Detailed proofs are given in Appendix \ref{app:rade_rkhs}.

In the following lemma, we elaborate on the approximation error bound.
To begin, for any $0< \beta \leq 1$, let $\beta \mathcal{M} := \{ \beta f : f \in \mathcal{M} \}$.
Set $f_1^* (x) = {\rm sign} ( P(Y=1 \mid X=x) - \frac{1}{2} )$.

\begin{lemma}[Approximation error bound over uniformly bounded hypothesis space]
With the notations defined in Lemma \ref{lem:estimation_error_bound_RKHS},  we have
$$ \inf_{f \in \mathcal{H}_{k,r_1} } {R_{\ell_{\mathrm{h}}} ( f ) - \inf_{f \in \beta \mathcal{M} } R_{\ell_{\mathrm{h}}} (f)}  \leq \beta \inf_{g \in \mathcal{H}_{k,{r_1 /\beta}}} \norm{ g- f_1^*}_{L_2(P_X)},$$ for any $0< \beta \leq 1$.
\label{lem:approximation_error_bound_RKHS}
\end{lemma}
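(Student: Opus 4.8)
The plan is to first pin down $\inf_{f \in \beta\mathcal{M}} R_{\ell_{\mathrm{h}}}(f)$ exactly and then exhibit a good competitor inside $\mathcal{H}_{k,r_1}$. Write $\eta(x) = P(Y=1 \mid X=x)$, so that $f_1^*(x) = \mathrm{sign}(2\eta(x)-1)$. For any $f$ with $\norm{f}_{\infty} \le \beta \le 1$, the margin $yf(x)$ lies in $[-\beta,\beta]$, hence $1-yf(x) \ge 0$ and the hinge loss is affine on the relevant range, so $R_{\ell_{\mathrm{h}}}(f) = \mathbb{E}_{X,Y}[1 - Yf(X)] = 1 - \mathbb{E}_X[(2\eta(X)-1)f(X)]$. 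Minimizing the right-hand side over $\norm{f}_{\infty} \le \beta$ amounts to maximizing $\mathbb{E}_X[(2\eta(X)-1)f(X)]$, which is attained pointwise by $f = \beta f_1^*$ (and $\beta f_1^* \in \beta\mathcal{M}$ since $\norm{f_1^*}_{\infty} \le 1$); therefore $\inf_{f \in \beta\mathcal{M}} R_{\ell_{\mathrm{h}}}(f) = R_{\ell_{\mathrm{h}}}(\beta f_1^*)$.

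Next I would construct the competitor via the elementary scaling inclusion $\{\beta g : g \in \mathcal{H}_{k, r_1/\beta}\} \subseteq \mathcal{H}_{k, r_1}$, which holds because $\norm{\beta g}_{\mathcal{H}_k} = \beta \norm{g}_{\mathcal{H}_k} \le r_1$; combined with $\mathcal{H}_{k,r_1} \subseteq \mathcal{M}$ from Lemma \ref{lem:estimation_error_bound_RKHS}, this makes $R_{\ell_{\mathrm{h}}}(\beta g)$ well defined. Consequently $\inf_{f \in \mathcal{H}_{k,r_1}} R_{\ell_{\mathrm{h}}}(f) \le R_{\ell_{\mathrm{h}}}(\beta g)$ for every $g \in \mathcal{H}_{k, r_1/\beta}$. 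It then remains to compare $R_{\ell_{\mathrm{h}}}(\beta g)$ with $R_{\ell_{\mathrm{h}}}(\beta f_1^*)$: using that $t \mapsto \max(0,1-t)$ is $1$-Lipschitz and that $|Y|=1$, one has $|\ell_{\mathrm{h}}(Y\beta g(X)) - \ell_{\mathrm{h}}(Y\beta f_1^*(X))| \le \beta\,|g(X) - f_1^*(X)|$ pointwise, and taking expectations gives $R_{\ell_{\mathrm{h}}}(\beta g) - R_{\ell_{\mathrm{h}}}(\beta f_1^*) \le \beta\,\norm{g - f_1^*}_{L_1(P_X)} \le \beta\,\norm{g - f_1^*}_{L_2(P_X)}$, the last inequality by Cauchy--Schwarz since $P_X$ is a probability measure.

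Putting the pieces together, for every $g \in \mathcal{H}_{k, r_1/\beta}$ we obtain $\inf_{f \in \mathcal{H}_{k,r_1}} R_{\ell_{\mathrm{h}}}(f) \le \inf_{f\in\beta\mathcal{M}} R_{\ell_{\mathrm{h}}}(f) + \beta\,\norm{g - f_1^*}_{L_2(P_X)}$, and taking the infimum over $g$ yields the claimed bound. I do not expect a genuine obstacle here. The only point requiring a little care is the first step: justifying that the minimizer of $R_{\ell_{\mathrm{h}}}$ over the sup-norm ball $\beta\mathcal{M}$ is the clipped Bayes rule $\beta f_1^*$, which hinges on the hinge loss being genuinely affine on $[-\beta,\beta]$ and thus on the assumption $\beta \le 1$. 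Everything else is $1$-Lipschitz bookkeeping plus the scaling inclusion of RKHS balls.
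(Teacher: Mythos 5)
Your proposal is correct and follows essentially the same route as the paper: identify $\beta f_1^*$ as the minimizer of the hinge risk over $\beta\mathcal{M}$, compare against an RKHS competitor using a pointwise bound plus Cauchy--Schwarz, and convert the ball $\mathcal{H}_{k,r_1}$ into $\mathcal{H}_{k,r_1/\beta}$ by the scaling $g \mapsto \beta g$. The only differences are presentational: you establish the optimality of $\beta f_1^*$ by direct pointwise maximization of $\mathbb{E}_X[(2\eta(X)-1)f(X)]$ rather than the paper's contradiction argument built on \citet{lin2002}, and you invoke the $1$-Lipschitz property of the hinge loss where the paper uses the affine identity $\ell_{\mathrm{h}}(yf(x))=1-yf(x)$ on $\mathcal{M}$ --- both immaterial to the structure of the argument.
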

When $\beta=1$, Lemma \ref{lem:approximation_error_bound_RKHS} implies that the approximation error $ \inf_{f \in \mathcal{H}_{k,r_1} } R_{\ell_{\mathrm{h}}} ( f ) $ $ - \inf_{f \in \mathcal{U} } R_{\ell_{\mathrm{h}}} (f) $ is bounded above by $ \inf_{g \in \mathcal{H}_{k,{r_1}}} \norm{ g- f_1^*}_{L_2(P_X)}$ due to $\inf_{f \in \mathcal{U} } R_{\ell_{\mathrm{h}}}(f) $ $= \inf_{f \in \mathcal{M} } R_{\ell_{\mathrm{h}}}(f)$ \citep{lin2002}.
Hence, a naive substitution to Corollary \ref{cor:excess_risk_bound} will give a sub-optimal bound because $\inf_{g \in \mathcal{H}_{k,{r_1}}} \norm{ g- f_1^*}_{L_2(P_X)}$ is non-zero in general.

In the following theorem, we rigorously establish the explicit excess risk bound which vanishes as $n_{\rm p}$ and $ n_{\rm u}$ increase. 

\begin{theorem}
Assume that the Gaussian kernel $k(x, y) = \exp $ $(-\frac{\norm{x-y}^2}{2 h^2})$ is used.
Under the assumptions (A1)-(A4) in Appendix \ref{app:excess_super_fast}, for $h = (n_{\rm p} \wedge n_{\rm u} )^{-\frac{1}{2\alpha_{\text{H}}+d}}$, we have the following holds with probability at least $1- 1/n_{\rm p}  - 1/ n_{\rm u}$:
\begin{align*}
R_{\ell_{01}}(\hat{f}_{\mathcal{H}_{k, 1}}) - \inf_{f \in \mathcal{U} }  R_{\ell_{01}}(f) \leq C_{(\rm p, \rm u)} (n_{\rm p} \wedge n_{\rm u})^{-\frac{\alpha_{\text{H}} (1+q)}{2\alpha_{\text{H}}+d}}, 
\end{align*} 
where $\alpha_{\text{H}}$ and $q$ are defined in Appendix \ref{app:excess_super_fast} and $C_{(\rm p, \rm u)} >0$ is some constant.
\label{thm:excess_super_fast}
\end{theorem}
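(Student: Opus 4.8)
The plan is to combine the explicit estimation error bound of Lemma~\ref{lem:estimation_error_bound_RKHS}, the approximation error bound of Lemma~\ref{lem:approximation_error_bound_RKHS}, and the surrogate-loss comparison inequality $R_{\ell_{01}}(\hat f) - \inf_{f\in\mathcal{U}} R_{\ell_{01}}(f) \le R_{\ell_{\mathrm{h}}}(\hat f) - \inf_{f\in\mathcal{U}} R_{\ell_{\mathrm{h}}}(f)$ of \citet{zhang2004}, by first decomposing the excess risk (as in Corollary~\ref{cor:excess_risk_bound}, but keeping the approximation error separate) into
\begin{align*}
R_{\ell_{01}}(\hat{f}_{\mathcal{H}_{k,1}}) - \inf_{f \in \mathcal{U}} R_{\ell_{01}}(f)
\le \underbrace{R_{\ell_{\mathrm{h}}}(\hat{f}_{\mathcal{H}_{k,r_1}}) - \inf_{f\in\mathcal{H}_{k,r_1}} R_{\ell_{\mathrm{h}}}(f)}_{\text{estimation error}}
+ \underbrace{\inf_{f\in\mathcal{H}_{k,r_1}} R_{\ell_{\mathrm{h}}}(f) - \inf_{f\in\mathcal{U}} R_{\ell_{\mathrm{h}}}(f)}_{\text{approximation error}}.
\end{align*}
Here $r_1 = 1$ is forced by the Gaussian kernel since $\sup_x \sqrt{k(x,x)} = 1$, so $\mathcal{H}_{k,r_1} = \mathcal{H}_{k,1}$ and $\hat f_{\mathcal{H}_{k,r_1}} = \hat f_{\mathcal{H}_{k,1}}$, which is why the statement is phrased for the unit ball. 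The estimation term is controlled by Lemma~\ref{lem:estimation_error_bound_RKHS}, which gives an $O((n_{\rm p}\wedge n_{\rm u})^{-1/2})$ rate with probability at least $1 - e^{-\tau}$; I would take $\tau = \log(n_{\rm p}) \wedge \log(n_{\rm u})$ or similar so that the failure probability is absorbed into the claimed $1 - 1/n_{\rm p} - 1/n_{\rm u}$ bound, and choose $\alpha$ to be a fixed constant (e.g. $\alpha = 1$). Note the target rate $(n_{\rm p}\wedge n_{\rm u})^{-\alpha_{\text{H}}(1+q)/(2\alpha_{\text{H}}+d)}$ is slower than $(n_{\rm p}\wedge n_{\rm u})^{-1/2}$ under the relevant regime, so the estimation error is not the bottleneck and only needs to be bounded, not optimized.

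The crux is the approximation error. By Lemma~\ref{lem:approximation_error_bound_RKHS} with $\beta = 1$, this is at most $\inf_{g\in\mathcal{H}_{k,1}} \|g - f_1^*\|_{L_2(P_X)}$, where $f_1^* = \mathrm{sign}(P(Y=1\mid X=\cdot) - 1/2)$ is the Bayes discriminant. The key step is to bound this $L_2(P_X)$ approximation error of the Bayes rule by a Gaussian RKHS of bounded norm, as a function of the bandwidth $h$. This is where assumptions (A1)--(A4) enter: presumably these posit (A1) a density lower/upper bound or margin-type condition on $P_X$, (A2) smoothness of the regression function or a H\"older-type condition with exponent $\alpha_{\text{H}}$ governing how well the decision boundary can be approximated, (A3) a Tsybakov-type margin/noise condition with exponent $q$ linking $\ell_1$ and $\ell_2$ distances to the excess classification risk, and (A4) geometric regularity of the boundary. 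Using a standard convolution/approximation argument — approximating $f_1^*$ by its convolution with a Gaussian of width $h$, which lies in $\mathcal{H}_{k, r(h)}$ with norm growing like a negative power of $h$ — one gets $\inf_{g} \|g - f_1^*\|_{L_2(P_X)} = O(h^{\alpha_{\text{H}}})$, but the norm constraint $\|g\|_{\mathcal{H}_{k,1}} \le 1$ must be reconciled with this; more likely the argument works with $\beta < 1$ in Lemma~\ref{lem:approximation_error_bound_RKHS}, trading a factor $\beta$ in front against an enlarged ball $\mathcal{H}_{k, r_1/\beta}$, and optimizing over $\beta$ and $h$ jointly. The Tsybakov exponent $q$ then sharpens the $L_2$-to-classification conversion, contributing the $(1+q)$ factor in the exponent.

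Putting the pieces together: the estimation error contributes $O((n_{\rm p}\wedge n_{\rm u})^{-1/2})$ and the approximation error contributes $O(h^{\alpha_{\text{H}}(1+q)})$ (after the noise-condition sharpening); with the prescribed choice $h = (n_{\rm p}\wedge n_{\rm u})^{-1/(2\alpha_{\text{H}}+d)}$ the two are balanced up to the exponent of the dominant term, yielding the stated rate $(n_{\rm p}\wedge n_{\rm u})^{-\alpha_{\text{H}}(1+q)/(2\alpha_{\text{H}}+d)}$, with $C_{(\rm p,\rm u)}$ collecting the constants from Lemmas~\ref{lem:estimation_error_bound_RKHS} and~\ref{lem:approximation_error_bound_RKHS}, the $\rho$, $\nu$, and $\pi_+$ dependencies, and the approximation-theoretic constants. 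The main obstacle I anticipate is the approximation-error analysis: carefully controlling the RKHS norm of the Gaussian-smoothed surrogate of the Bayes discriminant as $h \to 0$, reconciling it with the unit-ball (or $\beta$-scaled) constraint, and correctly invoking the margin condition to get the $(1+q)$ improvement in the exponent rather than a plain $h^{\alpha_{\text{H}}}$ rate. The surrogate-to-$0/1$ step and the union bound over the two failure events are comparatively routine.
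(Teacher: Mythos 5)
Your route is not the paper's route, and it has a gap that is fatal in precisely the regime the theorem is designed for. Your decomposition bounds the excess $0$--$1$ risk by (estimation error via Lemma \ref{lem:estimation_error_bound_RKHS}) $+$ (approximation error via Lemma \ref{lem:approximation_error_bound_RKHS}), after the plain comparison inequality of \citet{zhang2004}. But Lemma \ref{lem:estimation_error_bound_RKHS} only gives $O((n_{\rm p}\wedge n_{\rm u})^{-1/2})$, while the claimed rate $(n_{\rm p}\wedge n_{\rm u})^{-\alpha_{\text{H}}(1+q)/(2\alpha_{\text{H}}+d)}$ is \emph{faster} than $(n_{\rm p}\wedge n_{\rm u})^{-1/2}$ whenever $2\alpha_{\text{H}}q>d$ (and faster than $(n_{\rm p}\wedge n_{\rm u})^{-1}$ in the super-fast regime the paper highlights). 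So your claim that ``the estimation error is not the bottleneck'' is wrong exactly where the theorem is interesting, and no choice of $\tau$, $\alpha$, $\beta$, or $h$ can rescue the additive decomposition. A second, related problem: once you have applied the exponent-$1$ surrogate comparison, the Tsybakov exponent $q$ cannot re-enter to ``sharpen the $L_2$-to-classification conversion''---the $(1+q)$ factor has to be produced \emph{before} or \emph{instead of} that comparison, not after it. The paper itself warns of this right after Lemma \ref{lem:approximation_error_bound_RKHS}: a naive substitution into Corollary \ref{cor:excess_risk_bound} is sub-optimal because $\inf_{g\in\mathcal{H}_{k,r_1}}\norm{g-f_1^*}_{L_2(P_X)}$ does not vanish (and, with the $\beta$-trick, $\inf_{f\in\beta\mathcal{M}}R_{\ell_{\mathrm{h}}}(f)-\inf_{f\in\mathcal{U}}R_{\ell_{\mathrm{h}}}(f)$ leaves a non-vanishing gap); your unresolved worry about the RKHS norm of the Gaussian-smoothed Bayes rule blowing up as $h\to 0$ is a symptom of the same dead end.

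The paper's proof avoids surrogate losses, Rademacher complexities, and RKHS approximation entirely. Since $\ell_{01}(cz)=\ell_{01}(z)$ for $c>0$, the WMMD classifier has the same sign as the unnormalized kernel-density difference $\tilde{f}_{\mathcal{H}_k}(z)\propto 2\pi_{+}\hat{p}_{X\mid Y=1}(z)-\hat{p}_X(z)$, and the Bayes rule has the same sign as $\tilde{f}_{\rm Bayes}=2\pi_{+}p_{X\mid Y=1}-p_X$. A uniform kernel-density-estimation bound (Lemma \ref{lem:uniform_kernel_density}, from \citet{jiang2017}, which is where (A1)--(A2) and the bandwidth choice $h=(n_{\rm p}\wedge n_{\rm u})^{-1/(2\alpha_{\text{H}}+d)}$ enter) gives $\norm{\tilde{f}_{\mathcal{H}_k}-\tilde{f}_{\rm Bayes}}_{\infty}\lesssim (n_{\rm p}\wedge n_{\rm u})^{-\alpha_{\text{H}}/(2\alpha_{\text{H}}+d)}$ with probability $1-1/n_{\rm p}-1/n_{\rm u}$. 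The excess $0$--$1$ risk is then written exactly as $\mathbb{E}_X\big(|2\eta(X)-1|\,\mathbbm{1}_{\{\tilde{f}_{\mathcal{H}_k}(X)\tilde{f}_{\rm Bayes}(X)<0\}}\big)$; on the sup-norm event, disagreement forces $|\tilde{f}_{\rm Bayes}(X)|$ to be small, which via the density lower bound (A3) forces $|2\eta(X)-1|\le \varepsilon/p_{\rm min}$, and Tsybakov's condition (A4) then yields the extra factor $\varepsilon^{q}$, giving the $(1+q)$ in the exponent. If you want to salvage your write-up, you should abandon the estimation/approximation split and instead develop this direct plug-in comparison between the score function and the Bayes discriminant.
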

In supervised binary classification settings, a similar result is obtained by \citet[Theorem 3.3]{audibert2007} and the convergence rate is called \textit{a super-fast rate} when $\alpha_{\text{H}} q > d$.
However, $\alpha_{\text{H}}$ and $q$  cannot be simultaneously very large \citep{audibert2007}. 

\citet{niu2016} provided the excess risk bound expressed as a function of $n_{\rm p}, n_{\rm u}$.
However, their bound included combined terms of the approximation error and the Rademacher complexity, as in Corollary \ref{cor:excess_risk_bound}.
To the best of our knowledge, we are the first to explicitly derive the excess risk bound with convergence rate in terms of a function of $n_{\rm p}, n_{\rm u}$ in PU learning.

\section{Related work}
\label{s:related_work}

\textbf{Excess risk bound in noisy label literature:}
PU learning can be considered as a special case of classification with asymmetric label noise, and many studies in this literature have shown consistency results similar to Theorem \ref{thm:excess_super_fast} \citep{natarajan2013}.
\citet{patrini2016} derived an explicit estimation error when $\mathcal{F}$ is a set of linear hypotheses and \citet{blanchard2016} showed a consistency result of the excess risk bound when the hypothesis space is RKHS with universal kernels.
While the two studies assumed the {\it one sample of data} scheme, the proposed bound is based on the {\it two samples of data} scheme.
Therefore, our proposed excess risk bound is expressed in $n_{\rm p}$ and $n_{\rm u}$, giving a new consistency theory.

\textbf{Closed-form classifier:}
\citet{blanchard2010} suggested a score function similar to the WMMD score by using different bandwidth hyperparameters for the denominator and the numerator.
However, with these differences, our method gains theoretical justification while their score function does not. 
\citet{du2015} derived a closed-form classifier based on the squared loss. 
They estimated $P(Y=1 \mid X)- P(Y=-1 \mid X)$ and showed the consistency of the estimation error bound in the {\it two samples of data} scheme.
However, the classifier is not scalable because it requires to compute the inverse of a $(n_{\rm p}+n_{\rm u})$ $\times$ $(n_{\rm p}+n_{\rm u})$ matrix.

\section{Numerical experiments}
\label{s:simul}
In this section, we empirically analyze the proposed algorithm to demonstrate its practical efficacy using synthetic and real datasets.
Optimization procedures and the selection of hyperparameters are detailed in Appendix \ref{app:imple_details}.
Pytorch implementation for the experiments is available at \url{https://github.com/eraser347/WMMD_PU}.

\subsection{Synthetic data analysis}
\label{s:simul_synthetic}
We first visualize the effect of increasing the sample sizes $n_{\rm p}$ and $n_{\rm u}$ on the discriminant ability of the proposed algorithm (Experiment 1). 
Then we compare performance with (i) the logistic loss $\ell_{\mathrm{log}}$, denoted by LOG, (ii) the double hinge loss $\ell_{\mathrm{dh}}$, denoted by DH, both proposed by \citet{du2015}, (iii) the non-negative risk estimator method, denoted by NNPU, proposed by \citet{kiryo2017}, (iv) the threshold adjustment method, denoted by tADJ, proposed by \citet{elkan2008}, and (v) the proposed algorithm, denoted by WMMD (Experiments 2, 3, and 4). 

\begin{figure}[t]
\centering
\includegraphics[width=0.4\textwidth]{./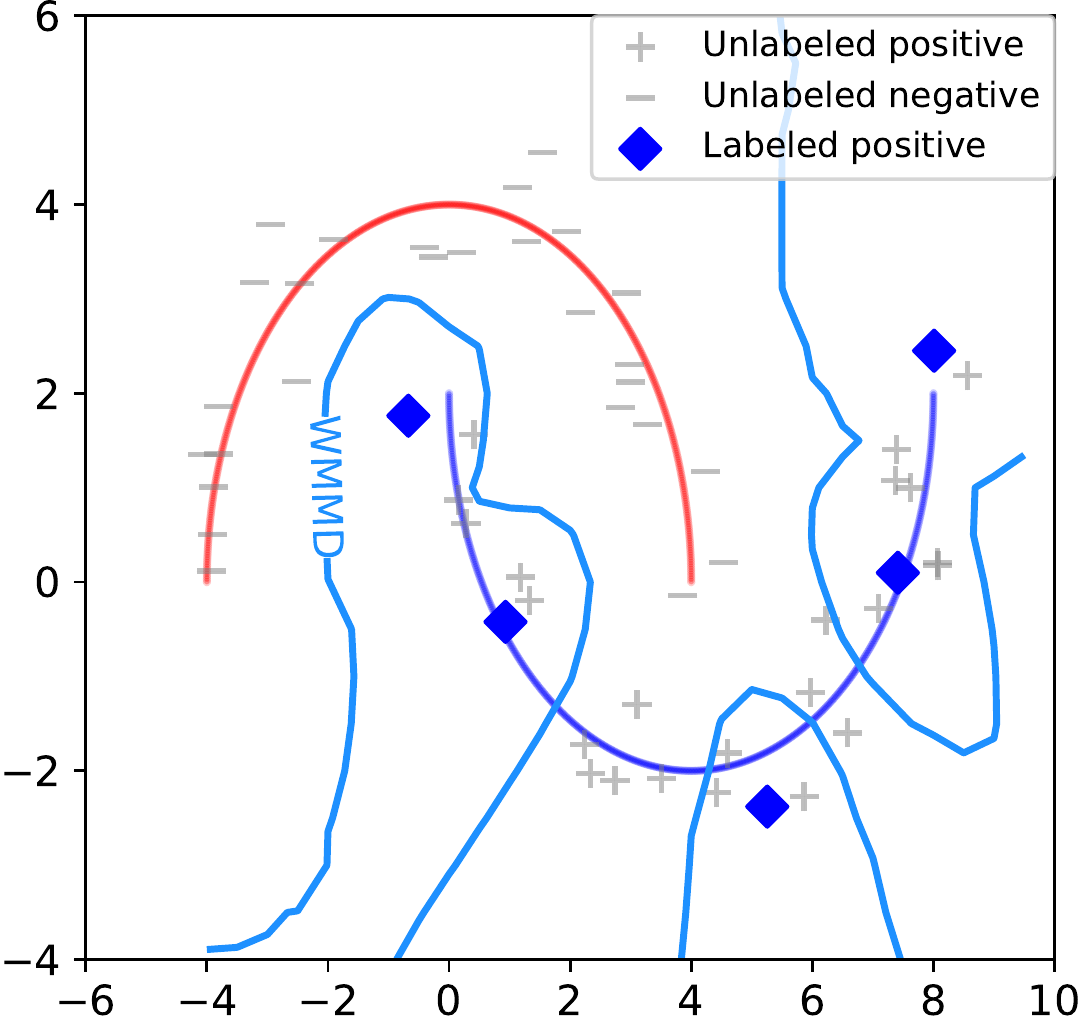}
\includegraphics[width=0.4\textwidth]{./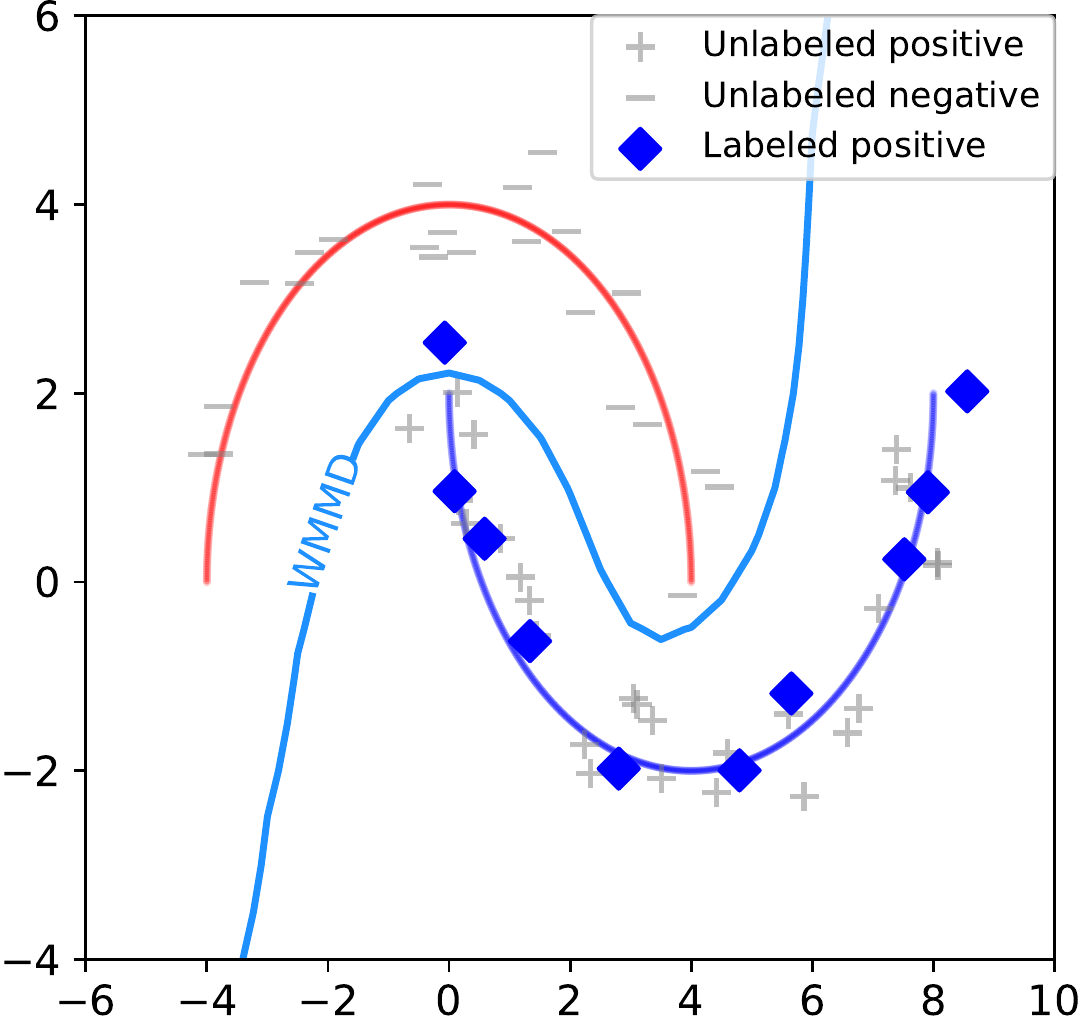}
\includegraphics[width=0.4\textwidth]{./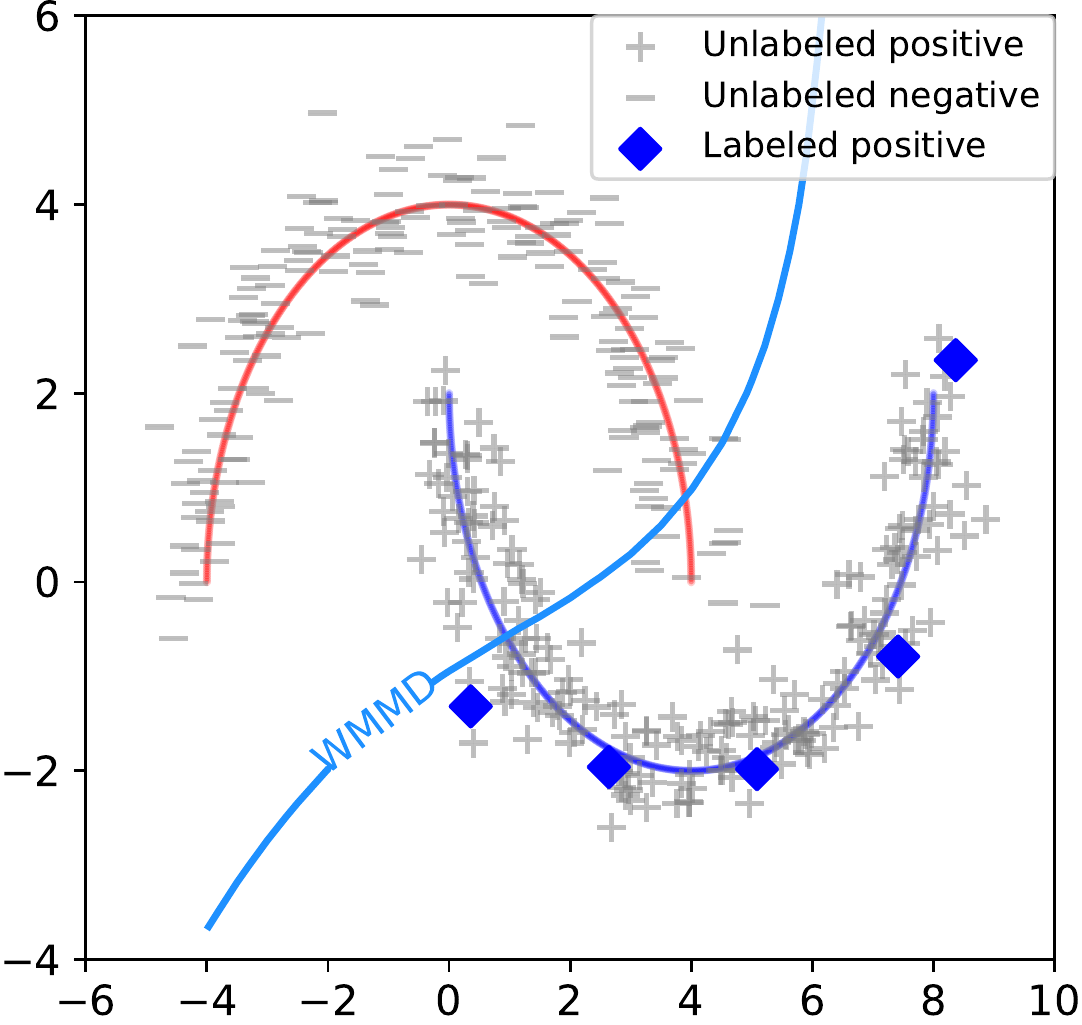}
\includegraphics[width=0.4\textwidth]{./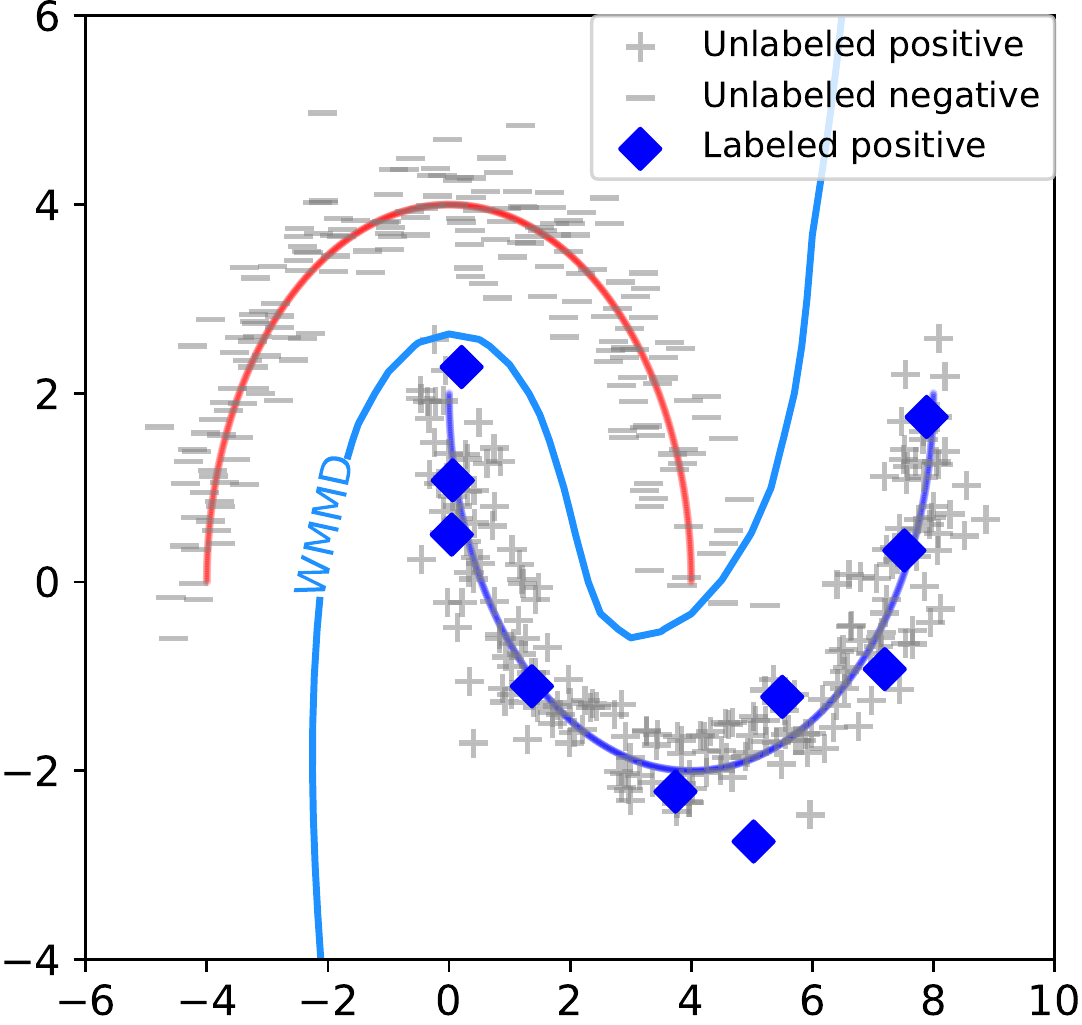}
\caption{The illustration of the decision boundaries of the WMMD classifier using the \texttt{two\_moons} dataset with the increases in the size of positive and unlabeled samples. The true means of the positive and negative data distributions are plotted by blue and red lines respectively. The gray `+' points and the gray `-' points refer to the unlabeled positive and unlabeled negative training data, respectively.}
\label{fig:synthetic_data_decision_boundary}

\end{figure}

\textbf{Experiment 1:} In this case, we used the \texttt{two\_moons} dataset whose underlying distributions are 
\begin{align*}
X|Y=y, U &\sim N\left(\begin{bmatrix}2(1+y) - 4y\cos (\pi U) \\
(1+y)-4y\sin(\pi U)
\end{bmatrix},\begin{bmatrix}0.4^{2} & 0\\
0 & 0.4^{2}
\end{bmatrix}\right),
\end{align*}
where $U$ refers to a uniform random variable ranges from 0 to 1 and $N(\mu,\Sigma)$ is the normal distribution with mean $\mu$ and covariance $\Sigma$.
We used the \textsf{\lq{}make\_moons\rq{}} function in the Python module \textsf{\lq{}sklearn.datasets\rq{}} \citep{scikit-learn} to generate the datasets.

Figure \ref{fig:synthetic_data_decision_boundary} illustrates the decision boundaries of WMMD using the \texttt{two\_moons} dataset. 
The first row displays the case where the unlabeled sample size is small, $n_{\rm u}=50$, and the second row displays the case where the unlabeled sample size is large, $n_{\rm u}=400$.
The first and second columns display the case where the positive sample sizes are $n_{\rm p}=5$ and $n_{\rm p}=10$, respectively.
The class-prior is fixed to $\pi_{+}=0.5$, and we assumed that the class-prior is known.
We visualize the true mean function of the positive and negative data distributions with blue and red lines, respectively.
The positive data are represented by blue diamond points, and the unlabeled data are represented by gray points.
The decision boundaries of the WMMD classifier tend to correctly separate the two clusters as $n_{\rm p}$ and $n_{\rm u}$ increase.

In Experiments 2, 3, and 4, we evaluate: 
(i) the accuracy and area under the receiver operating characteristic curve (AUC) as $n_{\rm u}$ and $\pi_{+}$ change when the class-prior is known (Experiment 2) and unknown (Experiment 3);
(ii) the elapsed training time (Experiment 4).
In these experiments, we set up the underlying joint distribution as follows:
\begin{align}
X\mid Y=y \sim N\left( y\frac{\boldsymbol{1}_{2}}{\sqrt{2}}, I_{2}\right), Y \sim 2 \times \mathrm{Bern}(\pi_{+}) -1,
\label{eq:comparison_data_distribution}
\end{align}
where $\mathrm{Bern}(p)$ is the Bernoulli distribution with mean $p$, $\boldsymbol{1}_{2} = (1,1)^T$ is the 2 dimensional vector of all ones and $I_{2}$ is the identity matrix of size 2. 

\textbf{Experiment 2:}
In this experiment, we compare the accuracy and AUC of the five PU learning algorithms when the true class-prior $\pi_{+}$ is known.
Figures \ref{fig:synthetic_data_accuracy_n_u} and \ref{fig:synthetic_data_AUC_n_u} show the accuracy and AUC on various $n_{\rm u}$.
The training sample size for the positive data is $n_{\rm p}=100$ and the class prior is $\pi_{+}=0.5$.
The unlabeled sample size changes from 40 to 500 by 20.
We repeat a random generation of training and test data 100 times.
For comparison purposes, we add the $1-$Bayes risk for each unlabeled sample size.
In terms of accuracy, the proposed WMMD tends to be closer to the $1-$Bayes risk as the $n_{\rm u}$ increases.
Compared with other PU learning algorithms, WMMD achieves higher accuracy in every $n_{\rm u}$ and achieves comparable to or better AUC.

Figures \ref{fig:synthetic_data_accuracy_pi_plus} and \ref{fig:synthetic_data_AUC_pi_plus} show a comparison of accuracy and AUC as $\pi_{+}$ changes.
The training sample size for the positive and unlabeled data are $n_{\rm p}=100$ and $n_{\rm u}=400$, respectively. 
The class-prior $\pi_{+}$ changes from $0.05$ to $0.95$ by $0.05$.
The test sample size is $10^3$.
Training and test data are repeatedly generated 100 times with different random seeds.
In terms of accuracy, the proposed WMMD performs comparably with LOG and NNPU, showing advantages over DH and tADJ.
When the true class-prior is less than equal to $0.8$, WMMD performs better in terms of AUC, except for tADJ.
The tADJ achieves the highest AUC because $P(Y=1 \mid X=x)$ is proportional to $P( \{ x \text{ is from the positive dataset} \} \mid X=x)$.
This empirically shows that WMMD has a comparable discriminant ability to the other algorithms for a wide range of class-priors.

\begin{figure}[t]
\subfigure[Accuracy comparison on various $n_{\text{u}}$.]{
\includegraphics[width=0.48\textwidth, height=1.75in]{./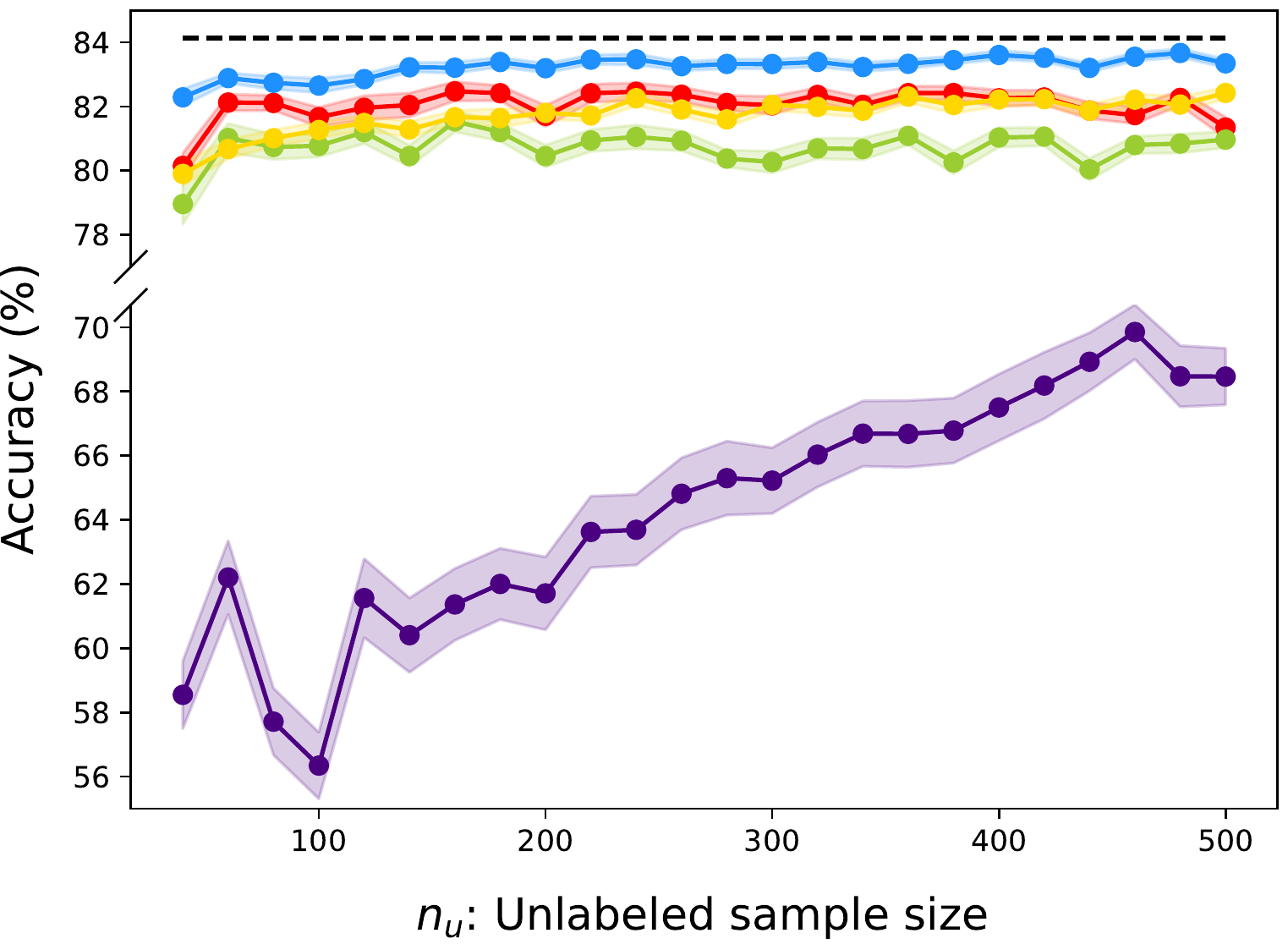}
\label{fig:synthetic_data_accuracy_n_u}
}
~~
\subfigure[Accuracy comparison on various $\pi_{+}$.]{
\includegraphics[width=0.48\textwidth, height=1.75in]{./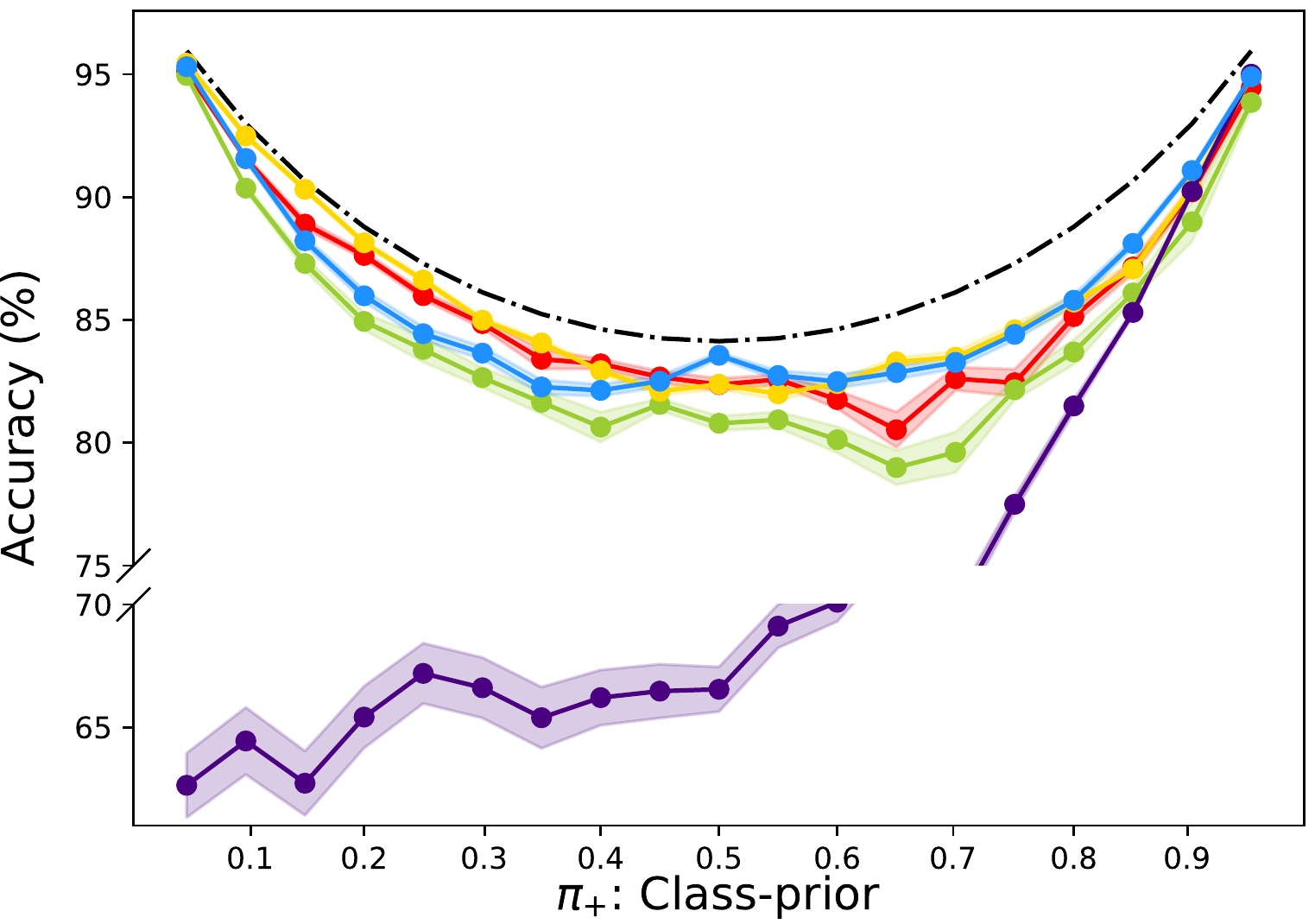}
\label{fig:synthetic_data_accuracy_pi_plus}
}
~~
\subfigure[AUC comparison on various $n_{\text{u}}$.]{
\includegraphics[width=0.48\textwidth, height=1.75in]{./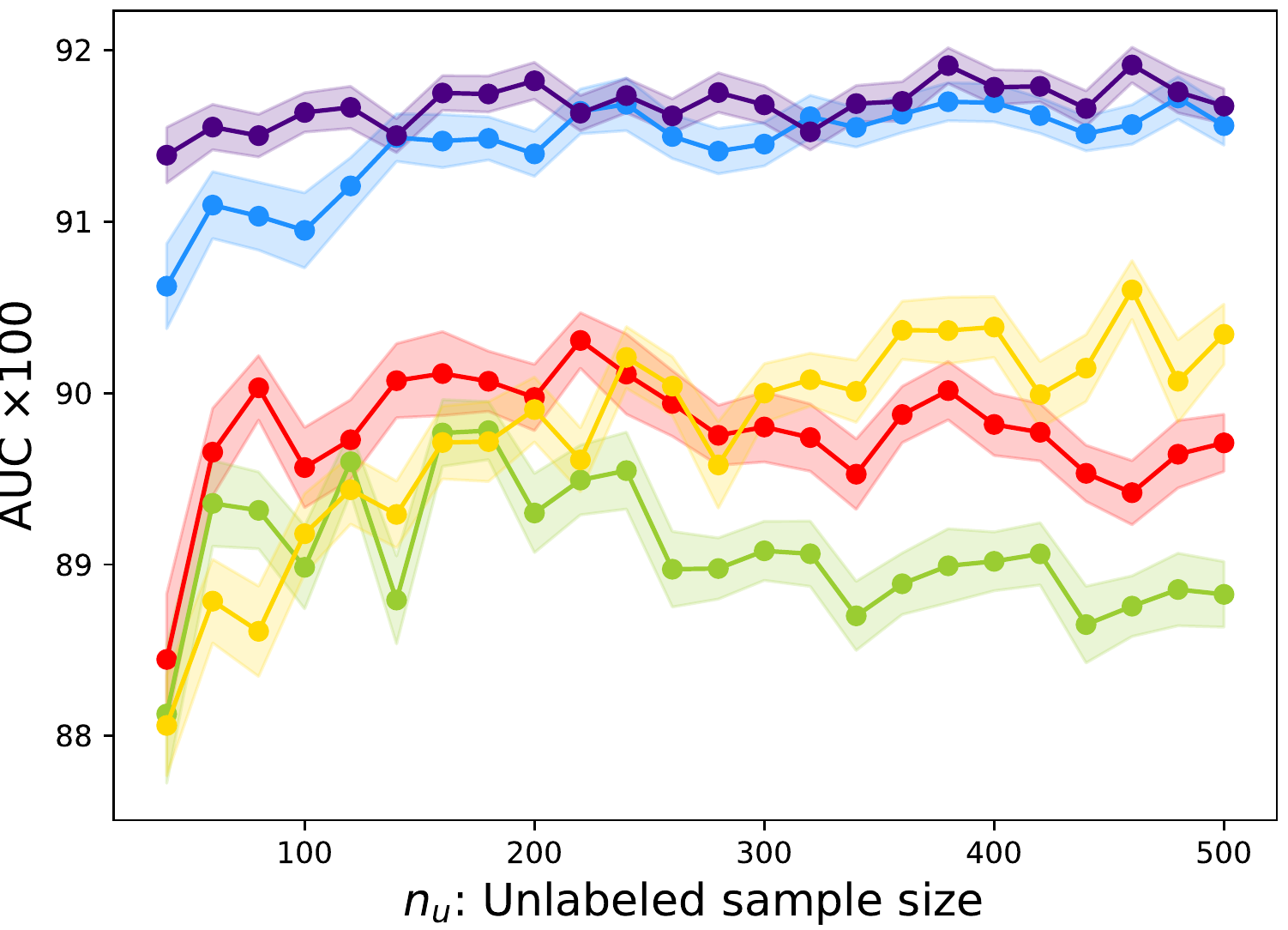}
\label{fig:synthetic_data_AUC_n_u}
}
~~
\subfigure[AUC comparison on various $\pi_{+}$.]{
\includegraphics[width=0.48\textwidth, height=1.75in]{./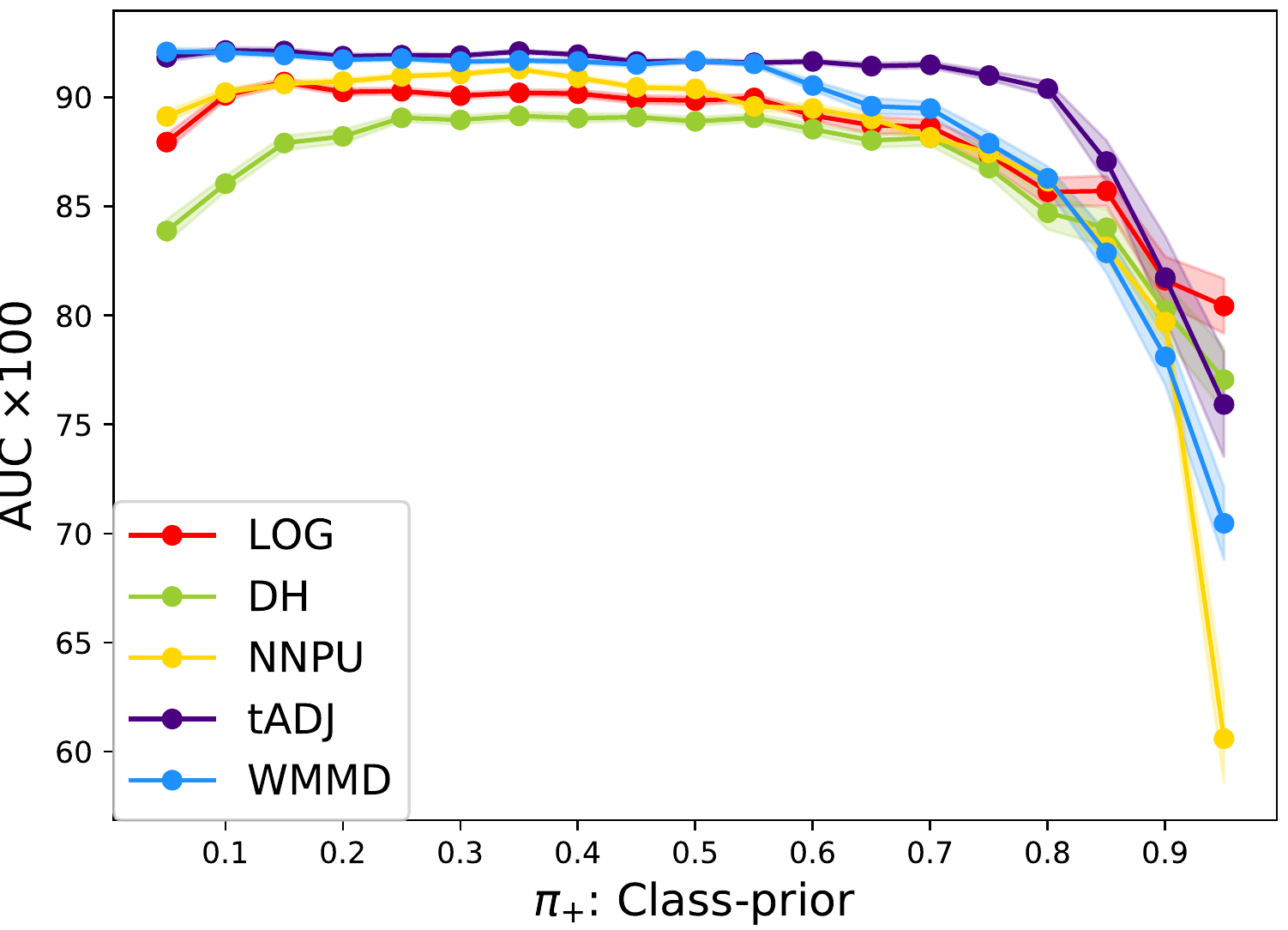}
\label{fig:synthetic_data_AUC_pi_plus}
}
\caption{The comparison of the accuracy and AUC of the five PU learning algorithms when each of $n_{\text{u}}$ and $\pi_{+}$ changes.
The dashed curve represents the $1-$Bayes risk.
The curve and the shaded region represent the average and the standard error, respectively, based on 100 replications.}
\end{figure}

\begin{figure}[t]
\subfigure[Accuracy comparison on various $n_{\text{u}}$ when the class-prior is unknown.]{
\includegraphics[width=0.48\textwidth, height=1.75in]{./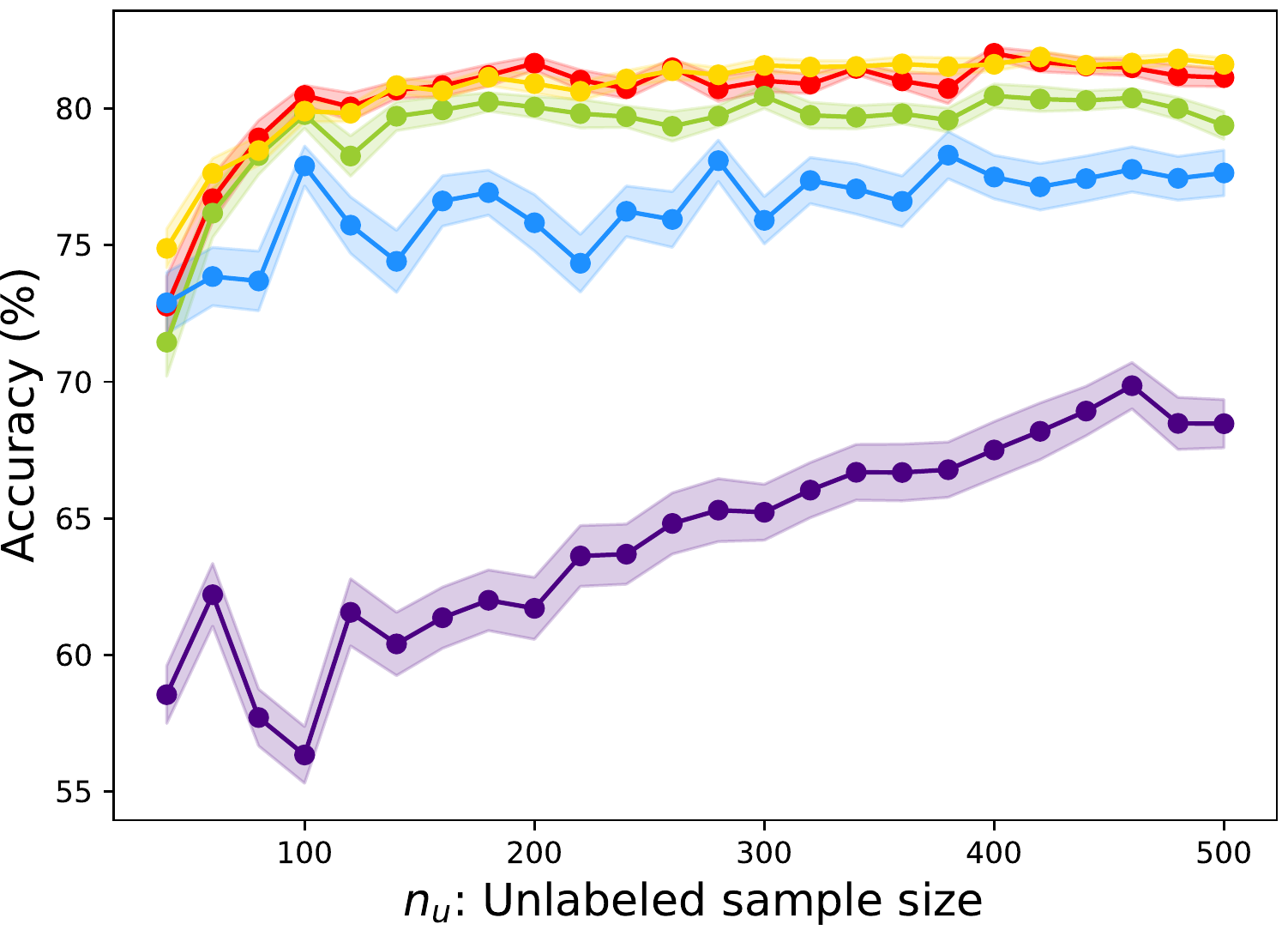}
\label{fig:synthetic_data_accuracy_n_u_unknown}
}
~~
\subfigure[Accuracy comparison on various $\pi_{+}$ when the class-prior is unknown.]{
\includegraphics[width=0.48\textwidth, height=1.75in]{./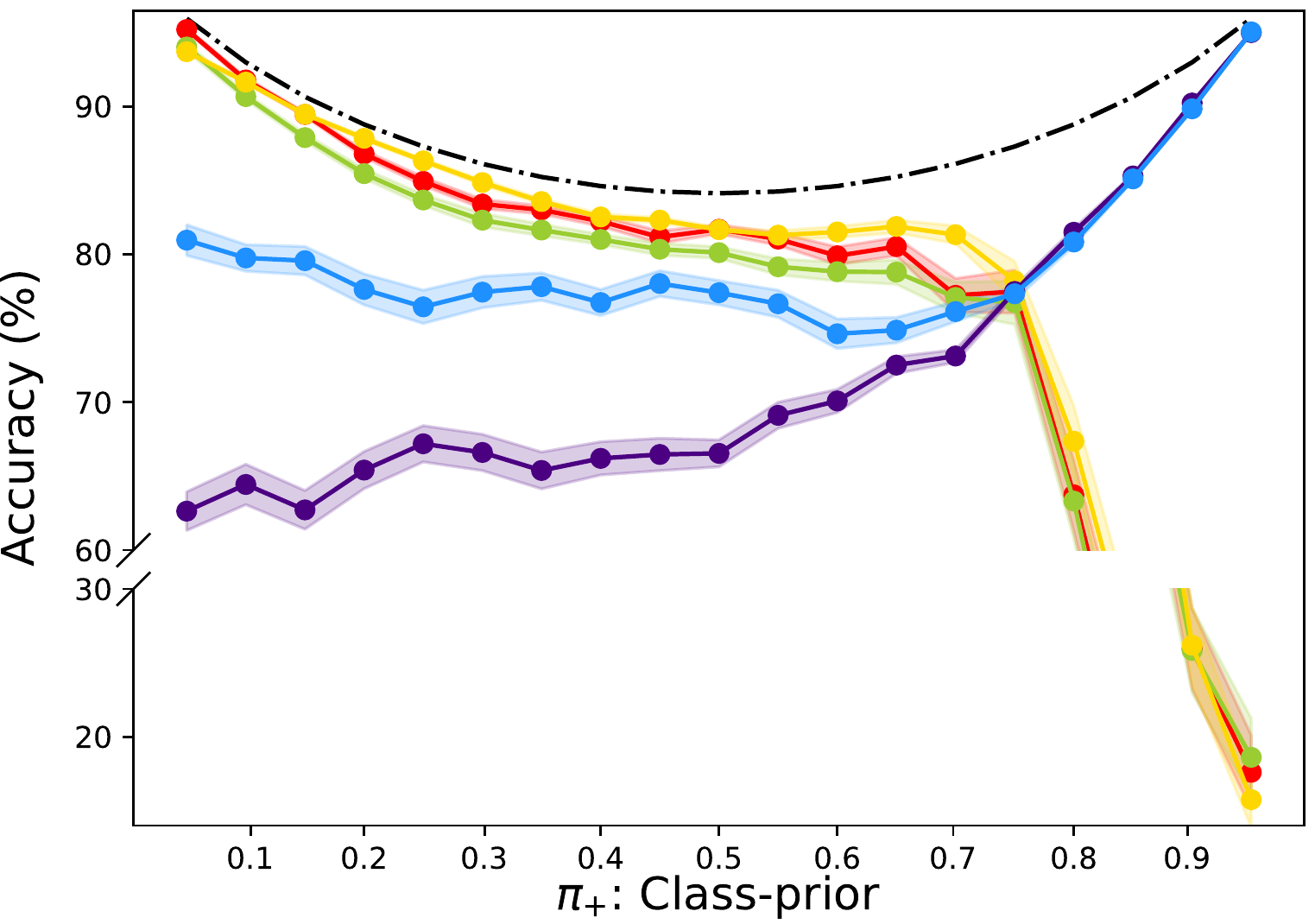}
\label{fig:synthetic_data_accuracy_pi_plus_unknown}
}
~~
\subfigure[AUC comparison on various $n_{\text{u}}$ when the class-prior is unknown.]{
\includegraphics[width=0.48\textwidth, height=1.75in]{./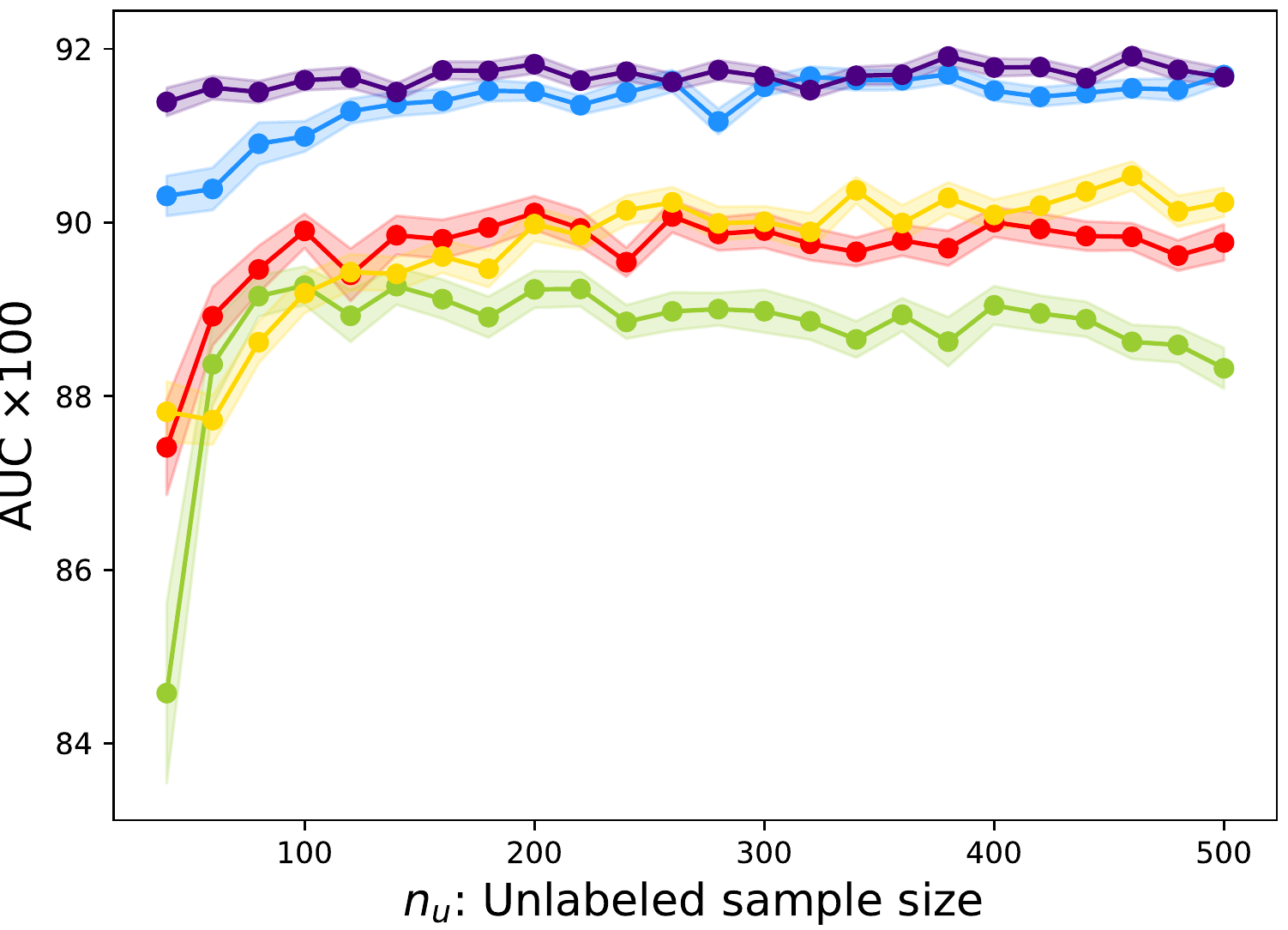}
\label{fig:synthetic_data_AUC_n_u_unknown}
}
~~
\subfigure[AUC comparison on various $\pi_{+}$ when the class-prior is unknown.]{
\includegraphics[width=0.48\textwidth, height=1.75in]{./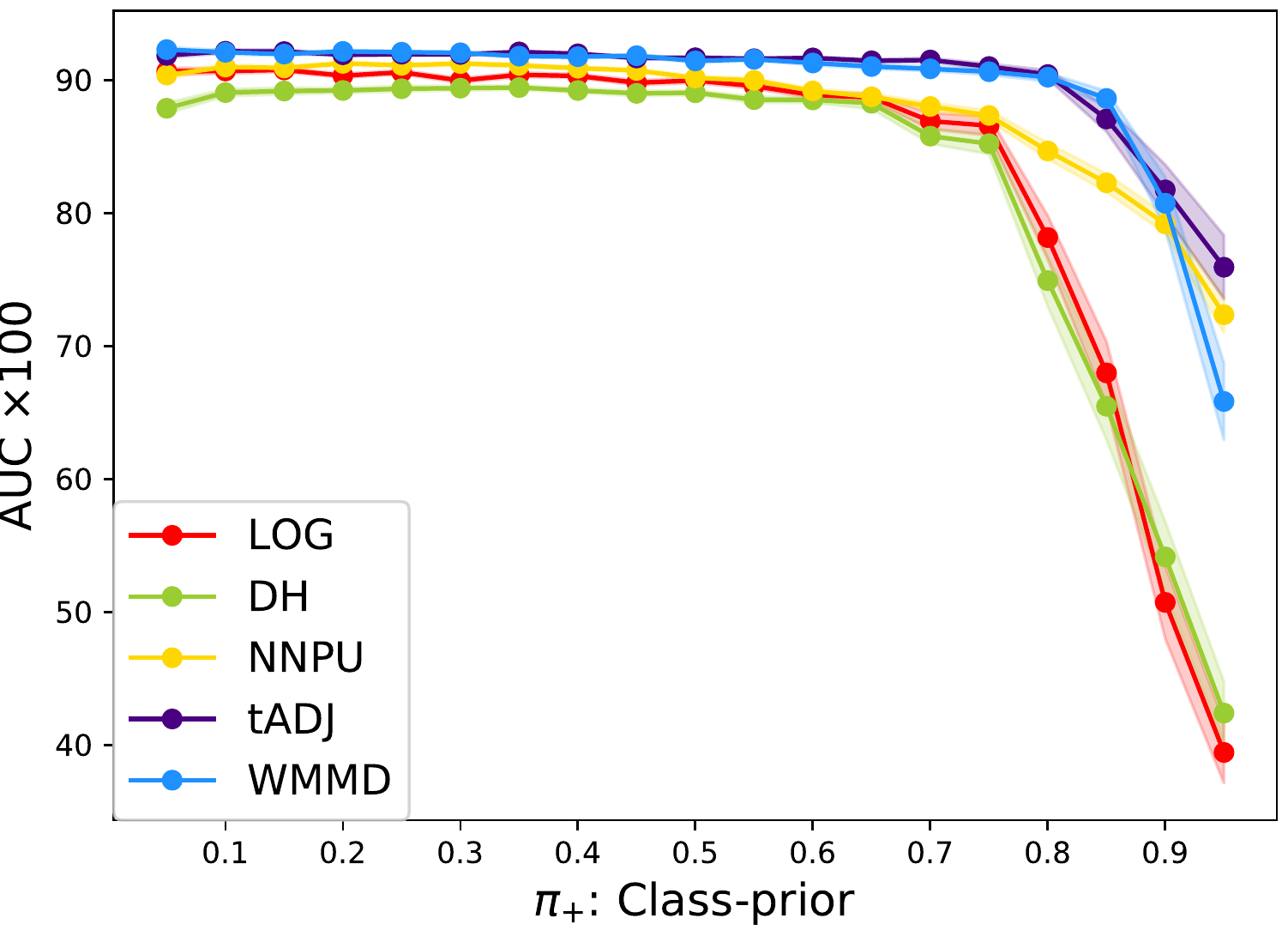}
\label{fig:synthetic_data_AUC_pi_plus_unknown}
}
\caption{The comparison of the accuracy and AUC of the five PU learning algorithms when each of $n_{\text{u}}$ and $\pi_{+}$ changes under the situation where ${\pi}_{+}$ is unknown.
The dashed curve represents the $1-$Bayes risk.
The curve and the shaded region represent the average and the standard error, respectively, based on 100 replications.
LOG, DH, and NNPU use the estimate of the class-prior from the \lq{}KM1\rq{} method.}
\end{figure}

\textbf{Experiment 3:}
The main goal of this subsection is to show the robustness of the proposed classifier in the case of unknown class-prior ${\pi}_{+}$.
In PU learning literature, $\pi_{+}$ has been frequently assumed to be known \citep{du2015, niu2016, kiryo2017, kato2019}.
However, this assumption can be considered to be strong in real-world applications, and to correctly execute existing PU learning algorithms, an accurate estimate of $\pi_{+}$ is necessary.
In this experiment, we compare the accuracy and AUC when the class-prior ${\pi}_{+}$ is unknown.
For the WMMD classifier, we used a density-based method for the class-prior estimation described in Appendix \ref{app:wmmd_imple_details}, which can be obtained as a byproduct of the proposed algorithm.
The results of LOG, DH, and NNPU are given for completeness sake using the \lq{}KM1\rq{} method\footnote{While the \lq{}KM2\rq{} method by \citet{ramaswamy2016} is often considered to be a state-of-the-art method for estimating $\pi_{+}$, in our experiments, estimates based on the \lq{}KM2\rq{} method have a larger estimation error than that of the \lq{}KM1\rq{} method and thus we omitted it.} by \citet{ramaswamy2016}.
We take these estimates as true values and repeat the same comparative numerical experiments in Experiment 2. 

Since the objective functions of the LOG, DH, and NNPU algorithms depend on the estimate $\hat{\pi}_{+}$, we anticipate that both the accuracy and AUC rely on the quality of the estimation.
On the other hand, the tADJ algorithm does not depend on the class-prior, so the performance is not affected.
Also, as the proposed score function does not depend on the class-prior $\pi_{+}$, and since $\pi_{+}$ is used only to determine a cutoff, the AUC of the proposed algorithm is less affected by the estimation of $\pi_{+}$.

Figures \ref{fig:synthetic_data_accuracy_n_u_unknown} and \ref{fig:synthetic_data_AUC_n_u_unknown} compare the accuracy and AUC as a function of $n_{\rm u}$.
WMMD performs worse than LOG, DH, and NNPU, while AUC is higher.
Though tADJ shows poor accuracy in a wide range, it achieves high AUC comparable to WMMD.
As we anticipated, WMMD is more robust than LOG, DH, and NNPU in AUC.
This is possibly because our score function $\hat{\lambda}_{n_{\rm p},n_{\rm u} }$ does not depend on $\pi_{+}$.
A similar trend can be found in Figures \ref{fig:synthetic_data_accuracy_pi_plus_unknown} and \ref{fig:synthetic_data_AUC_pi_plus_unknown}.
We note that the \lq{}KM1\rq{} method is not scalable and thus may not be used for large-scale datasets.

\textbf{Experiment 4:}
In this experiment, we compare the elapsed training time, including hyperparameter optimization, of the five PU learning algorithms.
The data are generated from the distributions described in Equation \eqref{eq:comparison_data_distribution}, and we set $n_{\rm p}=100, n_{\rm u}=400$, and $\pi_{+} = 0.5$.
The elapsed time is measured with 20 Intel\raisebox{0.5ex}{\small{\textregistered}} Xeon\raisebox{0.5ex}{\small{\textregistered}} E5-2630 v4@2.20GHz CPU processors.

Table \ref{t:synthetic_elapsed_time} compares the elapsed training time and its ratio relative to that of WMMD.
WMMD takes the shortest time among the five baseline methods.
In particular, the training time for WMMD is at least about 300 times shorter than that of the LOG and DH methods.
This is because the WMMD classifier has an analytic form while the LOG and DH methods require solving a non-linear programming problem. 

\subsection{Real data analysis}
\label{s:real_data}

We demonstrate the practical utility of the proposed algorithm using the eight real binary classification datasets from the LIBSVM\footnote{\url{https://www.csie.ntu.edu.tw/~cjlin/libsvmtools/datasets/}} \citep{chang2011}.
Since some observations from the raw datasets are not completely recorded, we removed such observations and construct the dataset with fully recorded data.
Next, to investigate the effect of varying $\pi_{+}$, we artificially reconstructed $\mathcal{X}_{\rm p}$ and $\mathcal{X}_{\rm u}$ through a random sampling from the fully recorded datasets.
For the three datasets \texttt{australian\_scale}, \texttt{breast-cancer\_scale} and \texttt{skin\_nonskin},
we reconstructed the data so that the resulting class-prior $\pi_{+}$ ranges from $0.15$ to $0.79$. 
We add the suffix \texttt{2} for those datasets.
We randomly resampled data $100$ times for the seven small datasets and $10$ times for the four big datasets: \texttt{skin\_nonskin}, \texttt{skin\_nonskin2}, \texttt{epsilon\_normalized}, and \texttt{HIGGS}.
Table \ref{t:summary_real_data} summarizes statistics for the eleven real datasets.
We conduct two comparative numerical experiments when $\pi_{+}$ is known and unknown.

\begin{table}[t]
\centering
\caption{A summary of elapsed training time and its ratio for the five PU learning algorithms based on 100 replications. We set $n_{\rm p}=100, n_{\rm u}=400$, and $\pi_{+}=0.5$. Average and standard error are denoted by \lq{}average$\pm$standard error\rq{}.}
\label{t:synthetic_elapsed_time}
\begin{tabular}{lccccc}
\toprule
  & LOG & DH  & NNPU & tADJ & WMMD \\
\midrule
in seconds $\times 10$ & $90.0\pm4.7$ & $96.1\pm6.1$ &  $6.0\pm0.1$ & $0.4\pm0.0$ & $0.2\pm0.0$ \\
in ratio & $ 347.9 \pm 23.4$ & $371.0 \pm 28.5$  &$23.2\pm1.1$ & $1.8 \pm 0.0$ &$1.0\pm0.0$\\
\bottomrule
\end{tabular}
\end{table}

\begin{table}[!t]
\centering
\caption{A summary of the eleven binary classification datasets. 
`\# of samples' denotes the number of total samples after removing incomplete observations.
We denote the number of positive, unlabeled, and test samples, by $n_{\rm p}$, $n_{\rm u}$, and $n_{\rm te}$ after the random sampling, respectively.
We categorize the eleven datasets into two groups: the first seven datasets as small-scale and the last four datasets as large-scale.}
\label{t:summary_real_data}
\resizebox{\textwidth}{!}{
\begin{tabular}{lcccccccccccc}
\toprule
Dataset & $d$ & \# of samples & $n_{\rm p}$ & $n_{\rm u}$ & $n_{\rm te}$ &  $\pi_{+}$ & Scale  \\ 
\midrule
{heart\_scale} & 12 & 122 & 10 & 60 & 60  & 0.62 & Small \\
{sonar\_scale} & 60 & 207 & 10 & 100 & 100 & 0.47 & Small \\
{australian\_scale} & 12 & 449 & 20 & 220 & 220 & 0.51 & Small \\
{australian\_scale2} & 12 & 449 & 10 & 130 & 130 & 0.15 & Small \\
{breast-cancer\_scale} & 10 & 683 & 20 & 340 & 340 & 0.35 & Small \\
{breast-cancer\_scale2} & 10 & 683 & 40 & 340 & 340 & 0.65 & Small \\
{diabetes\_scale}  & 8 & 759 & 50 & 380 & 370 & 0.65 & Small  \\
{skin\_nonskin}  & 3 & 245,057 & $10^3$ & $10^5$ & $10^5$ & 0.79 & Large \\
{skin\_nonskin2}  & 3 & 245,057 & $10^3$ & $10^5$ & $10^5$ & 0.21 & Large \\
{epsilon\_normalized} & 2,000 & 500,000 & $10^3$ & $4 \times 10^5$ & $10^5$ &  0.50 & Large \\
{HIGGS}  & 26 & 8,786,441 & $10^3$ & $10^6$ & $10^5$ & 0.50 & Large \\
\bottomrule
\end{tabular}
}

\end{table}

Table \ref{t:result_real_data_known} shows the average and the standard error of the accuracy and AUC when the class-prior $\pi_{+}$ is known.  
LOG and DH fail to compute the $(n_{\rm p}+n_{\rm u})$ $\times$ $(n_{\rm p}+n_{\rm u})$ Gram matrix due to out of memory in the 12 GB GPU memory limit.
WMMD achieves comparable to or better accuracy and AUC than LOG, DH, and tADJ on most datasets.
Compared to NNPU, WMMD performs comparably on the small datasets.
However, NNPU achieves higher accuracy on \texttt{skin\_nonskin}, \texttt{epsilon\_normalized}, and \texttt{HIGGS}.
The neural network used in NNPU fits well to the complicated and high-dimensional structure of data and shows high accuracy.

\begin{table*}[!t]
\centering
\caption{Accuracy and AUC comparison using the real datasets when the class-prior $\pi_{+}$ is known.
We denote the memory error results for LOG and DH by the hyphen.
Average and standard error are denoted by \lq{}average$\pm$standard error\rq{}.
Boldface numbers denote the best and equivalent algorithms with respect to a t-test with a significance level of 5\%.}
\label{t:result_real_data_known}
{\fontsize{6}{6}
\begin{tabular}{rccccccccc}
\toprule
Dataset & LOG & DH & NNPU & tADJ & WMMD \\ 
\midrule
\multicolumn{5}{l}{Accuracy (in \%)}\\
{heart\_scale}  & $\boldsymbol{70.5\pm0.8}$ & $68.4\pm0.9$ & $\boldsymbol{71.0\pm0.8}$ & $65.1\pm0.8$  & $\boldsymbol{71.6\pm0.8}$ \\ 
{sonar\_scale} & $55.8\pm0.6$ & $52.9\pm0.6$ & $\boldsymbol{63.2\pm0.6}$ & $60.7\pm0.6$  & $\boldsymbol{62.4\pm0.6}$ \\ 
{australian\_scale} & $\boldsymbol{85.4\pm0.4}$ & $\boldsymbol{84.9\pm0.6}$ & $79.2\pm0.5$ & $80.0\pm0.7$ & $\boldsymbol{84.2\pm0.6}$  \\ 
australian\_scale2 & $85.7\pm0.2$ & $85.7\pm0.2$ & $\boldsymbol{86.7\pm0.3}$ & $75.4\pm2.0$ & $\boldsymbol{86.2\pm0.2}$ \\ 
{breast-cancer\_scale} & $\boldsymbol{95.8\pm0.1}$ & $\boldsymbol{96.0\pm0.3}$ & $90.1\pm0.3$ & $91.0\pm0.4$  & $89.3\pm0.5$  \\
{breast-cancer\_scale2} & $\boldsymbol{95.6\pm0.3}$ & $94.4\pm0.8$ & $\boldsymbol{95.9\pm0.1}$ & $92.5\pm0.2$ & $94.2\pm0.3$  \\
{diabetes\_scale} & $66.7\pm0.7$ & $65.5\pm0.9$ & $\boldsymbol{69.4\pm0.4}$ & $67.9\pm0.3$  & $66.4\pm0.2$  \\ 
{skin\_nonskin}  & - & -  & $\boldsymbol{98.2\pm0.1}$ & $78.0\pm0.4$ & $85.3\pm0.7$ \\ 
{skin\_nonskin2}  & - & - & $\boldsymbol{98.6\pm0.0}$ & $93.9\pm0.1$ & $\boldsymbol{98.1\pm0.2}$   \\ 
{epsilon\_normalized} & - & - &  $\boldsymbol{64.5\pm0.3}$ & $63.1\pm0.1$ & $56.3\pm1.3$ \\ 
{HIGGS} & -  & -  &  $\boldsymbol{56.3\pm0.2}$ & $52.6\pm0.1$ & $54.0\pm0.2$ 
 \\ 
\midrule
\multicolumn{5}{l}{AUC $\times 100$}\\
{heart\_scale}  & $\boldsymbol{78.4\pm1.0}$ & $\boldsymbol{78.3\pm1.1}$ & $73.8\pm0.9$ & $72.5\pm1.1$  & $\boldsymbol{79.0\pm0.9}$ \\ 
{sonar\_scale} & $61.2\pm0.8$ & $60.6\pm0.9$ & $\boldsymbol{67.4\pm0.7}$ & $66.2\pm0.7$  & $\boldsymbol{68.9\pm0.8}$ \\ 
{australian\_scale} & $\boldsymbol{91.1\pm0.2}$ & $\boldsymbol{91.3\pm0.3}$ & $87.8\pm0.4$ & $87.8\pm0.5$  & $\boldsymbol{90.4\pm0.4}$ \\ 
australian\_scale2 & $\boldsymbol{89.2\pm0.4}$ & $87.3\pm0.4$ & $84.3\pm0.6$ & $85.9\pm0.7$ & $\boldsymbol{88.6\pm0.6}$ \\
{breast-cancer\_scale}  & $99.4\pm0.0$ & $99.3\pm0.0$ & $97.8\pm0.1$& $95.6\pm0.4$  & $\boldsymbol{99.5\pm0.0}$  \\
{breast-cancer\_scale2} & $\boldsymbol{99.3\pm0.0}$ & $\boldsymbol{99.2\pm0.1}$ & $\boldsymbol{99.3\pm0.0}$ & $97.2\pm0.2$ & $98.7\pm0.2$ \\
{diabetes\_scale}  & $\boldsymbol{74.0\pm0.6}$ & $71.5\pm1.1$ & $\boldsymbol{73.5\pm0.6}$ & $\boldsymbol{74.7\pm0.5}$  & $\boldsymbol{74.5\pm0.7}$  \\ 
{skin\_nonskin}  & - & - & $\boldsymbol{99.5\pm0.1}$ & $94.8\pm0.1$ & $\boldsymbol{99.4\pm0.1}$  \\ 
{skin\_nonskin2}  & - & - & $99.7\pm0.0$ & $94.6\pm0.0$ & $\boldsymbol{99.8\pm0.0}$ 
   \\ 
{epsilon\_normalized} & - & - & $\boldsymbol{70.0\pm0.4}$ & ${69.3\pm0.1}$ & $62.2\pm2.3$ \\ 
{HIGGS} & - & - & $59.6\pm0.2$ & $\boldsymbol{65.3\pm0.1}$ & $55.7\pm0.3$ \\ 
\bottomrule
\end{tabular}
\selectfont}
\end{table*}

Table \ref{t:result_real_data_unknown} compares the average and the standard error of the accuracy and AUC when the class-prior $\pi_{+}$ is unknown. 
As in Experiment 3 in Section \ref{s:simul_synthetic}, we estimate $\pi_{+}$ using the \lq{}KM1\rq{} method for LOG, DH, and NNPU, and using the density-based method for WMMD.
The LOG, DH, and NNPU algorithms are implemented on the seven small-scale datasets alone because the method by \citet{ramaswamy2016} is not feasible with the large-scale datasets \citep{bekker2018}. 
Overall, WMMD shows comparable to or better performances than other PU learning algorithms on most datasets.
Compared to Table \ref{t:result_real_data_known}, WMMD and tADJ show robustness to unknown $\pi_{+}$ in terms of AUC.
This is because WMMD and tADJ do not require estimation of $\pi_{+}$ to construct score functions.
In contrast, the other methods require an estimate $\hat{\pi}_{+}$, and we observe a substantial drop in accuracy and AUC when the \lq{}KM1\rq{} method estimate is used.

\begin{table*}[!t]
\centering
\caption{Accuracy and AUC comparison using the real datasets when the class-prior $\pi_{+}$ is unknown.
The \lq{}KM1\rq{} method by \protect\citet{ramaswamy2016} is used for LOG, DH, and NNPU, and the density-based method is used for WMMD. 
We denote the infeasible cases due to \lq{}KM1\rq{} method by the hyphen.
Other details are given in Table \ref{t:result_real_data_known}.}
\label{t:result_real_data_unknown}
{\fontsize{6}{6}
\begin{tabular}{rccccccccc}
\toprule
Dataset & LOG & DH & NNPU & tADJ & WMMD \\ 
\midrule
\multicolumn{5}{l}{Accuracy (in \%)}\\
{heart\_scale}  & $39.5\pm0.8$ & $39.1\pm0.7$ & $42.4\pm0.9$ & $65.1\pm0.8$  & $\boldsymbol{70.5\pm0.7}$ \\ 
{sonar\_scale} & $53.5\pm0.6$ & $52.1\pm0.5$ & $\boldsymbol{59.9\pm0.7}$ & $\boldsymbol{60.7\pm0.6}$  & $54.3\pm0.8$ \\ 
{australian\_scale} & $50.0\pm0.2$ & $50.0\pm0.2$ & $50.0\pm0.2$ & $\boldsymbol{80.0\pm0.7}$  & $\boldsymbol{79.4\pm1.0}$ \\
australian\_scale2 & $84.9\pm0.2$ & $84.9\pm0.2$ & $\boldsymbol{85.6\pm0.3}$ & $\boldsymbol{85.5\pm1.0}$  & $80.2\pm1.0$  \\
{breast-cancer\_scale} & $65.0\pm0.2$ & $65.2\pm0.2$ & $65.0\pm0.2$ & $\boldsymbol{91.0\pm0.4}$  & $\boldsymbol{93.0\pm1.0}$  \\
{breast-cancer\_scale2} & $35.0\pm0.2$ & $35.0\pm0.2$ & $35.0\pm0.2$ & $\boldsymbol{92.5\pm0.2}$ & $\boldsymbol{92.6\pm0.4}$  \\
{diabetes\_scale}  & $36.0\pm0.3$ & $41.1\pm1.1$ & $37.9\pm0.5$ & $\boldsymbol{67.9\pm0.3}$  & $65.1\pm0.2$  \\
{skin\_nonskin}  & - & - & - & $78.0\pm0.4$ & $\boldsymbol{82.2\pm0.9}$  \\ 
{skin\_nonskin2}  & - & - & - & $93.9\pm0.1$ & $\boldsymbol{95.7\pm0.4}$   \\ 
{epsilon\_normalized} & - & - & -  & $\boldsymbol{63.1\pm0.1}$  & $49.9\pm0.1$ \\ 
{HIGGS} & - & - & -  & $\boldsymbol{52.6\pm0.1}$  & ${50.9\pm0.0}$ \\ 
\midrule
\multicolumn{5}{l}{AUC $\times 100$}\\
{heart\_scale}  &  $67.2\pm1.8$ & $67.2\pm1.4$ & $71.0\pm0.9$ & $72.5\pm1.1$  & $\boldsymbol{77.1\pm0.9}$ \\
{sonar\_scale} & $60.5\pm0.9$ & $62.2\pm0.9$ & ${66.6\pm0.8}$ & $66.2\pm0.7$  & $\boldsymbol{69.7\pm0.8}$ \\
{australian\_scale} & $78.4\pm1.1$ & $72.5\pm1.4$ & ${80.3\pm0.6}$ & $87.8\pm0.5$ & $\boldsymbol{90.3\pm0.3}$ \\
australian\_scale2 & $\boldsymbol{92.9\pm0.2}$ & $89.6\pm0.6$  & $85.9\pm0.7$ & $92.4\pm0.3$ & $\boldsymbol{93.3\pm0.2}$ \\
{breast-cancer\_scale} & ${98.9\pm0.1}$ & $93.5\pm1.8$ & $54.8\pm1.1$ & $95.6\pm0.4$  & $\boldsymbol{99.5\pm0.0}$ \\
{breast-cancer\_scale2} & $14.4\pm2.0$ & $19.4\pm3.8$ & $91.5\pm0.3$ & $97.2\pm0.2$ & $\boldsymbol{99.0\pm0.1}$  \\
{diabetes\_scale}  & $64.0\pm1.3$ & $63.8\pm1.4$ & $72.6\pm0.5$ & $\boldsymbol{74.7\pm0.5}$  & $\boldsymbol{75.9\pm0.5}$ \\
{skin\_nonskin}  & - & - & - & $94.8\pm0.1$ & $\boldsymbol{99.5\pm0.1}$  \\ 
{skin\_nonskin2}  & - & - & - & $94.6\pm0.0$ & $\boldsymbol{99.8\pm0.0}$ \\ 
{epsilon\_normalized} & - & - & -  & $\boldsymbol{69.3\pm0.1}$  & $59.7\pm1.8$ \\ 
{HIGGS} & - & - & -  & $\boldsymbol{65.3\pm0.1}$  & ${55.4\pm0.2}$ \\ 
\bottomrule
\end{tabular}
\selectfont}

\end{table*}

\section{Concluding remarks}
Existing methods use different objective functions and hypothesis spaces, and as a consequence, different optimization algorithms.
Hence, there is no reason that one method outperforms uniformly for all scenarios.
It is possible that one particular method may outperform in one scenario, for example, NNPU proposed by \citet{kiryo2017} would perform better in complicated data settings because of the expressive power of neural networks. 
However, the proposed method has a clear computational advantage due to the closed-form as well as theoretical strength in terms of the explicit excess risk bound.
Further, the proposed method works reasonably well in both cases in which $\pi_{+}$ is known or unknown.
In this regard, we believe the proposed method can be used as a principled and easy-to-compute baseline algorithm in PU learning.

\section{Acknowledgement}
YK, WK, and MCP were supported by the National Research Foundation of Korea under grant NRF-2017R1A2B4008956.
MS was supported by JST CREST JPMJCR1403.

\appendix

\section{Proof of Theorem \ref{thm:relationship}}
\label{app:relationship}

\begin{proof}[Proof of Theorem \ref{thm:relationship}]
Since a function $f \in \mathcal{F}$ is bounded by 1, we have $\ell_{\mathrm{h}}(yf(x)) = \max(0, 1-yf(x)) = 1-yf(x)$.
Then, from Equation \eqref{eq:pu_risk}, 
\begin{align*}
R_{\ell_{\mathrm{h}}}(f) &= \pi_{+} \int_{\mathcal{X}}  \ell_{\mathrm{h}}(f(x))- \ell_{\mathrm{h}} (-f(x) ) dP_{X \mid Y=1}(x) + \int_{\mathcal{X}}  \ell_{\mathrm{h}} (-f(x)) dP_X(x) \\
&= 1 + \int_{\mathcal{X}}  f(x) dP_X(x)  -2 \pi_{+} \int_{\mathcal{X}} f(x) dP_{X \mid Y=1}(x).
\end{align*}
Thus, we have
\begin{align*}
\inf_{f \in \mathcal{F}} R_{\ell_{\mathrm{h}}}(f) &= 1 + \inf_{f \in \mathcal{F}} \left\{ \int_{\mathcal{X}}  f(x) dP_X(x)  -2 \pi_{+} \int_{\mathcal{X}} f(x) dP_{X \mid Y=1}(x) \right\} \\
&= 1-  \sup_{f \in \mathcal{F}} \left\{ - \int_{\mathcal{X}}  f(x) dP_X(x)  + 2 \pi_{+} \int_{\mathcal{X}} f(x) dP_{X \mid Y=1}(x) \right\} \\
&\overset{(\ast)}{=} 1 - \sup_{f \in \mathcal{F}} \left\{ \int_{\mathcal{X}} f(x) dP_{X}(x) - 2\pi_{+} \int_{\mathcal{X}} f(x) dP_{X \mid Y=1}(x) \right\} \\
&= 1 - {\rm WIPM} (P_{X}, P_{X \mid Y = 1} ; 2\pi_{+}, \mathcal{F}).
\end{align*}
Equation ($\ast$) holds because $\mathcal{F}$ is symmetric.

For the second result, note that a WIPM optimizer $g_{\mathcal{F}}$ satisfies ${\rm WIPM} (P_{X}, P_{X \mid Y = 1} ; 2\pi_{+}, \mathcal{F}) $ $ =  \int_{\mathcal{X}}  g_{\mathcal{F}}(x) dP_X(x)$ $-2\pi_{+} \int_{\mathcal{X}} g_{\mathcal{F}}(x) dP_{X \mid Y=1}(x)$.
Thus, we have
\begin{align*}
\inf_{f \in \mathcal{F}} R_{\ell_{\mathrm{h}}}(f) &= 1 - {\rm WIPM} (P_{X}, P_{X \mid Y = 1} ; 2\pi_{+}, \mathcal{F}) \\
&= 1 - \left\{ \int_{\mathcal{X}}  g_{\mathcal{F}}(x) dP_X(x) -2\pi_{+} \int_{\mathcal{X}} g_{\mathcal{F}}(x) dP_{X \mid Y=1}(x) \right\} \\
&= 1 - \left\{ R_{\ell_{\mathrm{h}}}(g_{\mathcal{F}}) - 1 \right\} \\
&= 2 - R_{\ell_{\mathrm{h}}}(g_{\mathcal{F}}) = R_{\ell_{\mathrm{h}}}(-g_{\mathcal{F}}).
\end{align*}
The last equality is from $R_{\ell_{\mathrm{h}}}(g_{\mathcal{F}}) + R_{\ell_{\mathrm{h}}}(-g_{\mathcal{F}}) = 2$ due to $g_{\mathcal{F}} \in \mathcal{F} \subseteq \mathcal{M}$.
\end{proof}


\section{Proofs for Section \ref{s:theory}: Theoretical properties of empirical WIPM optimizer}
In this section, we present a proof of Theorem \ref{thm:estimation_error} in Appendix \ref{app:estimation_error}.
We also provide a proof for Proposition \ref{prop:sharper} in Appendix \ref{app:sharper}.
Before presenting the proof for Theorem \ref{thm:estimation_error}, we begin with necessary technical proposition and lemma in Appendix \ref{app:preliminary}.

\subsection{Preliminaries for Theorem \ref{thm:estimation_error}}
\label{app:preliminary}

In supervised binary classification settings, \citet{sriperumbudur2012} introduced an empirical estimator for IPM and developed its consistency result. 
In Proposition \ref{prop:consistency_wipm}, we recreate theoretical results for PU learning settings, giving a consistency result of empirical WIPM estimator.

\begin{proposition}[Consistency result of WIPM estimator]
Let $\mathcal{F}$ be the symmetric function space such that  $\norm{f}_{\infty} \leq \nu$, $\mathrm{Var}_{{P}_{X \mid Y=1}} (f) \leq \sigma_{X \mid Y=1} ^2$, and $\mathrm{Var}_{{P}_{X}} (f) \leq \sigma_{X} ^2$.
Denote $\rho^2 = \sigma_{X \mid Y=1} ^2 \vee \sigma_{X} ^2$.
Then for all $w, \alpha, \tau >0$, the following holds with probability at least $1-e^{-\tau}$ over the choice of $\mathcal{X}_{\rm pu} := \{x_1 ^{\rm p}, \dots, x_{n_{\rm p}} ^{\rm p}, x_1 ^{\rm u}, \dots, x_{n_{\rm u}} ^{\rm u}\} \sim P_{\rm pu} := P_{X \mid Y=1} ^{n_{\rm p}} \times P_{X} ^{n_{\rm u}}$,
\begin{align}
& | {\rm WIPM}({P}_{X, {n_{\rm u}}}, {P}_{X \mid Y=1, {n_{\rm p}}}; w, \mathcal{F}) - {\rm WIPM}({P}_{X}, {P}_{X \mid Y=1}; w, \mathcal{F}) | \notag \\
&\leq 2(1+\alpha) [ \mathbb{E}_{P_{X}^{n_{\rm u}}} \{ \mathfrak{R}_{\mathcal{X}_{\rm u}}(\mathcal{F}) \} + w \mathbb{E}_{P_{X \mid Y=1}^{n_{\rm p}}} \{ \mathfrak{R}_{\mathcal{X}_{\rm p}}(\mathcal{F})\} ] \notag \\
&+  \chi_{n_{\rm p}, n_{\rm u}} ^{(1)}(w) \sqrt{ 2 \tau \rho^2 }  + \tau \chi_{n_{\rm p}, n_{\rm u}} ^{(2)}(w) \nu \left( \frac{2}{3}   + \frac{1}{\alpha} \right). 
\label{eq:wipm_consistency}
\end{align}
\label{prop:consistency_wipm}
\end{proposition}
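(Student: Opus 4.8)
The plan is to mirror the argument \citet{sriperumbudur2012} used for the empirical IPM estimator in supervised learning, but to organize it around the single centered empirical process obtained by combining the two samples with their correct weights, so that one application of Talagrand's inequality suffices.

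First, using that $\mathcal{F}$ is symmetric, I would drop the absolute value inside the WIPM, writing ${\rm WIPM}(P,Q;w,\mathcal{F}) = \sup_{f\in\mathcal{F}}\{\mathbb{E}_{P}(f) - w\,\mathbb{E}_{Q}(f)\}$ for both the population and the empirical version. Since $|\sup_f a(f) - \sup_f b(f)| \le \sup_f |a(f) - b(f)|$, the left-hand side of \eqref{eq:wipm_consistency} is then at most $\sup_{f\in\mathcal{F}}|G(f)|$, where
$$G(f) = \left(\frac{1}{n_{\rm u}}\sum_{i=1}^{n_{\rm u}}f(x_i^{\rm u}) - \mathbb{E}_{P_X}(f)\right) - w\left(\frac{1}{n_{\rm p}}\sum_{j=1}^{n_{\rm p}}f(x_j^{\rm p}) - \mathbb{E}_{P_{X \mid Y=1}}(f)\right).$$
This $G(f)$ is a sum of $n_{\rm p}+n_{\rm u}$ independent mean-zero terms: each unlabeled term is bounded by $2\nu/n_{\rm u}$ in absolute value with second moment at most $\sigma_X^2/n_{\rm u}^2$, and each positive term by $2w\nu/n_{\rm p}$ with second moment at most $w^2\sigma_{X\mid Y=1}^2/n_{\rm p}^2$; and by symmetry of $\mathcal{F}$ once more, $\sup_{f\in\mathcal{F}}|G(f)| = \sup_{f\in\mathcal{F}}G(f)$ is a genuine supremum of an empirical process.

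Next I would apply Talagrand's concentration inequality (in Bousquet's form) to $Z := \sup_{f\in\mathcal{F}}G(f)$. Summing the per-observation second moments gives a weak variance at most $\rho^2(1/n_{\rm u} + w^2/n_{\rm p})$, and the envelope is at most $2\nu(1/n_{\rm u}+w/n_{\rm p})$, so with probability at least $1-e^{-\tau}$ one obtains $Z$ bounded by $\mathbb{E}Z$ plus a term of order $\sqrt{2\tau\rho^2(1/n_{\rm u}+w^2/n_{\rm p})}$, an envelope term of order $\tau\nu(1/n_{\rm u}+w/n_{\rm p})$, and a cross-term of the form $\sqrt{c\,\tau\nu(1/n_{\rm u}+w/n_{\rm p})\,\mathbb{E}Z}$. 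The cross-term is the only place the free parameter enters: applying $2\sqrt{xy}\le \alpha x + y/\alpha$ with $x = \mathbb{E}Z$ absorbs an $\alpha\,\mathbb{E}Z$ into the leading term and contributes an extra $\tau\nu(1/n_{\rm u}+w/n_{\rm p})/\alpha$ to the envelope term. Finally, a standard symmetrization — splitting $G$ into its unlabeled and positive pieces, bounding $\sup_f|G|\le\sup_f|\cdot|+\sup_f|\cdot|$, and symmetrizing each — gives $\mathbb{E}Z \le 2\big(\mathbb{E}_{P_X^{n_{\rm u}}}\mathfrak{R}_{\mathcal{X}_{\rm u}}(\mathcal{F}) + w\,\mathbb{E}_{P_{X\mid Y=1}^{n_{\rm p}}}\mathfrak{R}_{\mathcal{X}_{\rm p}}(\mathcal{F})\big)$. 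Collecting terms with $\sqrt{a+b}\le\sqrt a + \sqrt b$, $\sqrt{1/n_{\rm u}+w^2/n_{\rm p}}\le 1/\sqrt{n_{\rm u}}+w/\sqrt{n_{\rm p}} = \chi_{n_{\rm p},n_{\rm u}}^{(1)}(w)$, $2(1/n_{\rm u}+w/n_{\rm p}) = \chi_{n_{\rm p},n_{\rm u}}^{(2)}(w)$, and $\rho^2 = \sigma_X^2\vee\sigma_{X\mid Y=1}^2$, yields \eqref{eq:wipm_consistency}.

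I expect the main obstacle to be the bookkeeping in the Talagrand step: because the two samples enter $G$ at different scales ($1/n_{\rm u}$ versus $w/n_{\rm p}$), one must correctly compute the weak-variance and envelope constants of the combined process and then re-express them through $\chi_{n_{\rm p},n_{\rm u}}^{(1)}$ and $\chi_{n_{\rm p},n_{\rm u}}^{(2)}$; the AM-GM split of the $\sqrt{\mathbb{E}Z}$ cross-term — which is exactly what forces the $\alpha$-dependence and the leading factor $2(1+\alpha)$ — is routine but must be tracked carefully so the final bound is genuinely linear in the Rademacher complexities. Treating $G$ as a single empirical process, rather than applying Talagrand separately to the two samples and taking a union bound, is what keeps the failure probability at $e^{-\tau}$ instead of $2e^{-\tau}$.
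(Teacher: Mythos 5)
Your proposal is correct and follows essentially the same route as the paper's proof: both reduce the difference of WIPMs to the supremum of a single combined, centered, weighted empirical process over $\mathcal{X}_{\rm pu}$, apply the $\alpha$-parameterized variant of Talagrand's (Bousquet's) inequality with the same per-term variance and envelope bookkeeping, and symmetrize to obtain the leading term $2(1+\alpha)$ times the weighted sum of Rademacher complexities, with a single application keeping the failure probability at $e^{-\tau}$. The only cosmetic differences are that the paper invokes the $\alpha$-form directly (Proposition B.1 of \citet{sriperumbudur2012}) rather than re-deriving it from the cross-term via AM-GM, and it symmetrizes the combined weighted process before splitting it into $\mathfrak{R}_{\mathcal{X}_{\rm u}}(\mathcal{F})$ and $w\,\mathfrak{R}_{\mathcal{X}_{\rm p}}(\mathcal{F})$ rather than after.
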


\begin{proof}[Proof of Proposition \ref{prop:consistency_wipm}]
The following proof is a slight modification of the proof of Theorem 3.3 in \citet{sriperumbudur2012}. Without loss of generality, by changing an order, we define a set of observations and a set of weights as follows,
$$(x_1, \dots, x_{n_{\rm p}}, x_{n_{\rm p}+1}, \dots, x_{n_{\rm p}+n_{\rm u}} ) := \mathcal{X}_{\rm pu},$$
and
$$(\tilde{y}_1, \dots, \tilde{y}_{n_{\rm p}}, \tilde{y}_{n_{\rm p}+1}, \dots, \tilde{y}_{n_{\rm p}+n_{\rm u}}) = (w/n_{\rm p}, \dots, w/n_{\rm p}, -1/n_{\rm u}, \dots, -1/n_{\rm u}),$$ respectively. 
For independent Rademacher random variables $\{\sigma_i\}_{i=1} ^{n_{\rm p}+n_{\rm u}}$, we define the empirical Rademacher complexity-like term given by
$$\tilde{\mathfrak{R}}_{\mathcal{X}_{\rm pu}}(\mathcal{F}) := \mathbb{E}_{\sigma} \left\{ \sup_{f \in \mathcal{F}} \left|  \sum_{i=1} ^{n_{\rm p}+n_{\rm u}} \sigma_i \tilde{y}_i f(X_i) \right| : (X_1, \dots, X_{n_{\rm p}+n_{\rm u}}) = \mathcal{X}_{\rm pu} \right\}.$$
Note that $\tilde{\mathfrak{R}}_{\mathcal{X}_{\rm pu}}(\mathcal{F}) \leq \mathfrak{R}_{\mathcal{X}_{\rm u}}(\mathcal{F}) + w \mathfrak{R}_{\mathcal{X}_{\rm p}}(\mathcal{F})$.
Define $\mu_i = {P}_{X \mid Y =1}$ for $i \in \{1, \dots, n_{\rm p}\}$ and $\mu_i = {P}_{X}$ for $i \in \{n_{\rm p}+1, \dots, n_{\rm p}+n_{\rm u}\}$, respectively. 
That is, $P_{\rm pu} = \times_{i=1} ^{n_{\rm p}+ n_{\rm u}} \mu_i$. Let $(X_1, \dots, X_{n_{\rm p}+n_{\rm u}}) \sim P_{\rm pu}$ and define random variables $\theta_i (f, X_i) = w\{f(X_i) - {P}_{X \mid Y =1}(f) \}/n_{\rm p}$ for $i \in \{1, \dots, n_{\rm p}\}$ and $\theta_i (f, X_i) = \{f(X_i) - {P}_{X}(f) \}/n_{\rm u}$ for $i \in \{n_{\rm p}+1, \dots, n_{\rm p}+n_{\rm u}\}$, respectively.

Then, using the fact that $\left| \sup |C| - \sup |D| \right| \leq \sup |C-D|$, we have
\begin{align}
\label{sup_bound} 
& | {\rm WIPM}({P}_{X, n_{\rm u}}, {P}_{X \mid Y=1, n_{\rm p}}; w, \mathcal{F}) - {\rm WIPM}({P}_{X}, {P}_{X \mid Y=1}; w, \mathcal{F}) | \notag \\
&\leq  \sup_{f \in \mathcal{F}} \Big| \{ \int f d({P}_{X, n_{\rm u}}- w {P}_{X \mid Y=1, n_{\rm p}}) \} -  \{ \int f d({P}_{X}- w {P}_{X \mid Y=1}) \} \Big| \\
&=  \sup_{f \in \mathcal{F}} \Big| \sum_{i=1} ^{n_{\rm p}+n_{\rm u}} \theta_i (f, X_i) \Big| =:  h(X_1, \cdots, X_{n_{\rm p}+n_{\rm u}}). \notag 
\end{align}
Further, it is easy to verify that (i) $\int_{\mathcal{X}} \theta_i (f, z) d\mu_i(z) = 0$ for all $i$ and $f \in \mathcal{F}$ and (ii) $\int_{\mathcal{X}} \theta_i ^2 (f, z) d\mu_i(z)$ is bounded by $w^2{\sigma^2 _{X \mid Y =1}}/{n_{\rm p} ^2}$ for $i \in \{1, \dots, n_{\rm p}\}$ and ${\sigma^2 _{X}}/{n_{\rm u}^2}$ for $i \in \{n_{\rm p}+1, \dots, n_{\rm p}+n_{\rm u}\}$, respectively.
Finally, (iii) $\norm{\theta_i (f, \cdot) }_{\infty} \leq 2 (w/n_{\rm p} + 1/n_{\rm u}) \nu=  \chi_{n_{\rm p}, n_{\rm u}} ^{(2)}(w) \nu $ for all $i \in \{1, \dots, n_{\rm p}+n_{\rm u} \}$. 

Then, for all $\alpha >0$, the following holds with probability at least $1-e^{-\tau}$,
\begin{align*}
& h(X_1, \cdots, X_{n_{\rm p}+n_{\rm u}}) \notag \\
&\leq (1+\alpha) \mathbb{E}_{P_{\rm pu}} (h) + \sqrt{2 \tau \frac{ (w^2 n_{\rm u}+n_{\rm p})}{n_{\rm p} n_{\rm u}} (\sigma_{X \mid Y=1}^2 \vee \sigma_{X} ^2) } + \tau \chi_{n_{\rm p}, n_{\rm u}} ^{(2)}(w) \nu  \left( \frac{2}{3} + \frac{1}{\alpha} \right) \notag \\
&\leq 2(1+\alpha) \mathbb{E}_{P_{\rm pu}} \{ \tilde{\mathfrak{R}}_{\mathcal{X}_{\rm pu}}(\mathcal{F}) \} +  \chi_{n_{\rm p}, n_{\rm u}} ^{(1)}(w) \sqrt{ 2 \tau \rho^2 }  + \tau \chi_{n_{\rm p}, n_{\rm u}} ^{(2)}(w) \nu  \left( \frac{2}{3}   + \frac{1}{\alpha} \right).
\end{align*}
The first inequality is derived by the second inequality of Lemma \ref{lem:talagrand} in Appendix \ref{app:preliminary}, a variant of the Talagrand\rq{}s inequality. The second inequality is from using a symmetrization lemma: with corresponding independent ghost empirical distributions $\tilde{P}_{X, n_{\rm u}}$ and $\tilde{P}_{X \mid Y=1, n_{\rm p}}$, 

\begin{align*}
& \mathbb{E}_{P_{\rm pu}} (h) \\
&= \mathbb{E}_{P_{\rm pu}} \sup_{f \in \mathcal{F}} \Big| \{ \int f d({P}_{X, n_{\rm u}}- w {P}_{X \mid Y=1, n_{\rm p}}) \} -  \{ \int f d({P}_{X}- w {P}_{X \mid Y=1}) \} \Big| \\
&= \mathbb{E}_{P_{\rm pu}} \sup_{f \in \mathcal{F}} \Big| \{ \int f d({P}_{X, n_{\rm u}}- w {P}_{X \mid Y=1, n_{\rm p}}) \} -   \mathbb{E}_{P_{\rm pu}} \{ \int f d(\tilde{P}_{X, n_{\rm u}}- w \tilde{P}_{X \mid Y=1, n_{\rm p}}) \} \Big| \\
&\leq \mathbb{E}_{P_{\rm pu}} \sup_{f \in \mathcal{F}} \Big| \{ \int f d({P}_{X, n_{\rm u}}- w {P}_{X \mid Y=1, n_{\rm p}}) \} -   \{ \int f d(\tilde{P}_{X, n_{\rm u}}- w \tilde{P}_{X \mid Y=1, n_{\rm p}}) \} \Big| \\
&\leq 2\mathbb{E}_{P_{pu}} \sup_{f \in \mathcal{F}} \Big|  \int f d({P}_{X, n_{\rm u}}- w {P}_{X \mid Y=1, n_{\rm p}}) \Big| \\
&\leq 2 \mathbb{E}_{P_{\rm pu}} \mathbb{E}_{\sigma} \left\{ \sup_{f \in \mathcal{F}} \left|  \sum_{i=1} ^{n_{\rm p}+n_{\rm u}} \sigma_i \tilde{y}_i f(X_i) \right| : (X_1, \dots, X_{n_{\rm p}+n_{\rm u}}) = \mathcal{X}_{\rm pu} \right\}\\
&\leq 2 \mathbb{E}_{P_{\rm pu}} \{ \tilde{\mathfrak{R}}_{\mathcal{X}_{\rm pu}}(\mathcal{F})  \}
\end{align*}

Next, simply using the fact $\tilde{\mathfrak{R}}_{\mathcal{X}_{\rm pu}}(\mathcal{F}) \leq \mathfrak{R}_{\mathcal{X}_{\rm u}}(\mathcal{F}) + w \mathfrak{R}_{\mathcal{X}_{\rm p}}(\mathcal{F})$, we have for all $w, \alpha, \tau >0$, the following holds with probability at least $1-e^{-\tau}$,
\begin{align*}
& | {\rm WIPM}({P}_{X, n_{\rm u}}, {P}_{X \mid Y=1, n_{\rm p}}; w, \mathcal{F}) - {\rm WIPM}({P}_{X}, {P}_{X \mid Y=1}; w, \mathcal{F}) | \\
&\leq 2(1+\alpha) [ \mathbb{E}_{P_{X}^{n_{\rm u}}} \{ \mathfrak{R}_{\mathcal{X}_{\rm u}}(\mathcal{F}) \} + w \mathbb{E}_{P_{X \mid Y=1}^{n_{\rm p}}} \{ \mathfrak{R}_{\mathcal{X}_{\rm p}}(\mathcal{F})\} ] \\
&+  \chi_{n_{\rm p}, n_{\rm u}} ^{(1)}(w) \sqrt{ 2 \tau \rho^2 }  + \tau \chi_{n_{\rm p}, n_{\rm u}} ^{(2)}(w) \nu  \left( \frac{2}{3}   + \frac{1}{\alpha} \right).
\end{align*}
It concludes the proof.
\end{proof}

For Lemmas \ref{lem:talagrand}, we quote the Proposition B.1 of \citet{sriperumbudur2012} without proofs.

\begin{lemma}[Proposition B.1 of \citet{sriperumbudur2012}: A variant of Talagrand\rq{}s inequality]
\label{lem:talagrand}
Let $B \geq 0, n \geq 1, (\Omega_i, \mathcal{A}_i, \mu_i), i = 1, \dots, n$ be a probability space and $\theta_i : \mathcal{F} \times \Omega_i \to \mathbb{R}$ be bounded measurable functions, where $\mathcal{F}$ is the space of real-valued $\mathcal{A}_i$-measurable functions for all $i$. Suppose
\begin{itemize}
\item[(a)] $\int_{\Omega_i} \theta_i (f, \omega) d\mu_i (\omega) =0$ for all $i$ and $f \in \mathcal{F}$
\item[(b)] $\int_{\Omega_i} \theta_i ^2 (f, \omega) d\mu_i (\omega) \leq \rho_i ^2$ for all $i$ and $f \in \mathcal{F}$
\item[(c)] $\norm{\theta_i (f, \cdot)}_{\infty}  \leq B$ for all $i$ and $f \in \mathcal{F}$.
\end{itemize}
Define $Z := \times_{i=1} ^n \Omega_i$ and $P := \times_{i=1} ^n \mu_i$. Furthermore, define $g: Z \to \mathbb{R}$ by
$$
g(z) :=  \sup_{f \in \mathcal{F}} \left| \sum_{i=1} ^n \theta_i (f, \omega_i) \right|, z=(\omega_1, \dots, \omega_n) \in Z.
$$
Then, for all $\tau >0$, we have
$$
P\left( \left\{ z \in Z: g(z) \geq \mathbb{E}_P g + \sqrt{2\tau \left( \sum_{i=1} ^n \rho_i ^2 + 2B \mathbb{E}_P g \right) } + \frac{2 \tau B}{3} \right\} \right) \leq e^{-\tau}.
$$
In addition, for all $\tau >0$ and $\alpha >0$,
$$
P\left( \left\{ z \in Z: g(z) \geq (1+\alpha) \mathbb{E}_P g + \sqrt{2\tau  \sum_{i=1} ^n \rho_i ^2  } + \tau B \left( \frac{2 }{3} + \frac{1}{\alpha} \right) \right\} \right) \leq e^{-\tau}.
$$
\end{lemma}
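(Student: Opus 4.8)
The plan is to reduce the statement to Bousquet's version of Talagrand's concentration inequality for the supremum of a centered empirical process, and then read off both displays from it. First I would dispose of the second display assuming the first. Writing $v := \sum_{i=1}^n \rho_i^2$, subadditivity of the square root gives $\sqrt{2\tau(v + 2B\,\mathbb{E}_P g)} \le \sqrt{2\tau v} + \sqrt{4\tau B\,\mathbb{E}_P g} = \sqrt{2\tau v} + 2\sqrt{\tau B\,\mathbb{E}_P g}$, and for any $\alpha>0$ the arithmetic--geometric mean inequality gives $2\sqrt{\tau B\,\mathbb{E}_P g} = 2\sqrt{(\alpha\,\mathbb{E}_P g)(\tau B/\alpha)} \le \alpha\,\mathbb{E}_P g + \tau B/\alpha$. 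Hence the threshold in the first display is at most $(1+\alpha)\mathbb{E}_P g + \sqrt{2\tau v} + \tau B(\tfrac{2}{3} + \tfrac{1}{\alpha})$, so the event appearing in the second display is contained in the event of the first display and therefore also has probability at most $e^{-\tau}$. It thus suffices to prove the first inequality.

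Before invoking the concentration bound I would remove the absolute value. Since $|\sum_i \theta_i(f,\omega_i)| = \max(\sum_i \theta_i(f,\omega_i),\ \sum_i (-\theta_i)(f,\omega_i))$, I enlarge the index set to $\mathcal{F}\times\{\pm1\}$ and work with the functions $\varepsilon\,\theta_i(f,\cdot)$; conditions (a)--(c) are invariant under $\theta_i \mapsto -\theta_i$, so after this reduction $g(z)=\sup_f \sum_i \theta_i(f,\omega_i)$ is a one-sided supremum of a centered process still satisfying (a)--(c), and the first inequality is exactly Bousquet's inequality for such a supremum.

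To prove that inequality I would run the entropy (tensorization) method: bound $\mathrm{Ent}(e^{\lambda g}) \le \sum_{i=1}^n \mathbb{E}\,\mathrm{Ent}^{(i)}(e^{\lambda g})$, compare $g$ with the leave-$i$-out supremum $g^{(i)} := \sup_f \sum_{j\neq i}\theta_j(f,\omega_j)$ (whose pointwise deficit from $g$ is at most $B$ and whose conditional second moment is controlled by $\rho_i^2$), derive the resulting pointwise bound on each conditional entropy, sum over $i$, and turn this into a differential inequality for $\psi(\lambda) := \log\mathbb{E}\,e^{\lambda(g-\mathbb{E}_P g)}$. Integrating the differential inequality (the Herbst argument) gives a sub-gamma bound $\psi(\lambda) \le (v + 2B\,\mathbb{E}_P g)\,\varphi_B(\lambda)$ with $\varphi_B$ the standard sub-gamma profile of scale proportional to $B$; a Chernoff bound together with the usual inversion of the sub-gamma tail then yields exactly $P(g \ge \mathbb{E}_P g + \sqrt{2\tau(v+2B\,\mathbb{E}_P g)} + \tfrac{2\tau B}{3}) \le e^{-\tau}$.

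The step I expect to be the main obstacle is the sharp pointwise comparison between $g$ and $g^{(i)}$ that produces the variance proxy $v + 2B\,\mathbb{E}_P g$ together with the constant $\tfrac{2}{3}$, rather than a cruder correction term; obtaining these optimal constants is precisely the content of Bousquet's inequality, stated as Proposition~B.1 in \citet{sriperumbudur2012}, which I would simply invoke rather than re-derive. A weaker estimate derived from McDiarmid's bounded-differences inequality would be elementary but would drop the variance term $v$, and hence would be too lossy to support the variance-sensitive bounds used in Proposition~\ref{prop:consistency_wipm} and Theorem~\ref{thm:estimation_error}.
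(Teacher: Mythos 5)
Your proposal is consistent with the paper's treatment: the paper states this lemma as Proposition B.1 of \citet{sriperumbudur2012} and quotes it without proof, and you likewise (correctly) identify the first display as Bousquet's form of Talagrand's inequality and invoke the citation for it rather than re-deriving the sharp constants. Your additional step deducing the second display from the first via $\sqrt{2\tau(v+2B\,\mathbb{E}_P g)}\le\sqrt{2\tau v}+2\sqrt{\tau B\,\mathbb{E}_P g}\le\sqrt{2\tau v}+\alpha\,\mathbb{E}_P g+\tau B/\alpha$, and the symmetrization $\mathcal{F}\times\{\pm1\}$ handling the absolute value, are both correct and standard, so there is no gap relative to what the paper does.
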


\subsection{Proof of Theorem \ref{thm:estimation_error}: estimation error bound of WIPM optimizer}
\label{app:estimation_error}

\begin{proof}[Proof of Theorem \ref{thm:estimation_error}]
We first define the empirical risk estimator $\hat{R}_{\ell_{\mathrm{h}}}(f)$ by replacing data distributions in Equation \eqref{eq:pu_risk} with the empirical distributions. To be more specific, we define
\begin{align*}
\hat{R}_{\ell_{\mathrm{h}}}(f) &:= \pi_{+} \int_{\mathcal{X}}  [\ell_{\mathrm{h}}\{f(x)\}- \ell_{\mathrm{h}}\{-f(x)\}] dP_{X \mid Y=1, n_{\rm p}}(x) + \int_{\mathcal{X}}  \ell_{\mathrm{h}}\{-f(x)\} dP_{X, n_{\rm u}}(x)\\
&= \frac{ \pi_{+}}{n_{\rm p}} \sum_{i=1} ^{n_{\rm p}} [\ell_{\mathrm{h}}\{f( x_i ^{\rm p} )\}- \ell_{\mathrm{h}}\{-f( x_i ^{\rm p} )\}]  + \frac{1}{n_{\rm u}} \sum_{i=1} ^{n_{\rm u}}  \ell_{\mathrm{h}}\{-f( x_i ^{\rm u} )\} .
\end{align*}
Since $\norm{f}_{\infty} \leq 1$, using the similar derivations in Appendix \ref{app:relationship}, we have
$$
\hat{R}_{\ell_{\mathrm{h}}}(f) = 1+ \frac{1}{n_{\rm u}} \sum_{i=1} ^{n_{\rm u}} f( x_i ^{\rm u}) - \frac{2 \pi_{+}}{n_{\rm p}} \sum_{i=1} ^{n_{\rm p}} f( x_i ^{\rm p}).    
$$
By the result of Theorem \ref{thm:relationship}, the negative of an WIPM optimizer ${f}_{\mathcal{F}}$ is minimizer of $R_{\ell_{\mathrm{h}}} (f)$, {\it i.e.}, $ R_{\ell_{\mathrm{h}}}({f}_{\mathcal{F}}) = \inf_{f \in \mathcal{F}} R_{\ell_{\mathrm{h}}} (f)$. Thus, we have
\allowdisplaybreaks
\begin{align*}
&R_{\ell_{\mathrm{h}}}(\hat{f}_{\mathcal{F}}) - \inf_{f \in \mathcal{F}} R_{\ell_{\mathrm{h}}} (f)  \\ &= R_{\ell_{\mathrm{h}}}(\hat{f}_{\mathcal{F}}) - R_{\ell_{\mathrm{h}}}({f}_{\mathcal{F}})  \\
&= \left\{ R_{\ell_{\mathrm{h}}}(\hat{f}_{\mathcal{F}}) - \hat{R}_{\ell_{\mathrm{h}}}(\hat{f}_{\mathcal{F}}) \right\} + \left\{ \hat{R}_{\ell_{\mathrm{h}}}(\hat{f}_{\mathcal{F}}) - \hat{R}_{\ell_{\mathrm{h}}}(f_{\mathcal{F}}) \right\} + \left\{ \hat{R}_{\ell_{\mathrm{h}}}(f_{\mathcal{F}}) - R_{\ell_{\mathrm{h}}}(f_{\mathcal{F}}) \right\}  \\
&\leq \sup_{f \in \mathcal{F} } | \hat{R}_{\ell_{\mathrm{h}}}({f}) - R_{\ell_{\mathrm{h}}}({f})|+ 0 + \sup_{f \in \mathcal{F} } | \hat{R}_{\ell_{\mathrm{h}}}({f}) - R_{\ell_{\mathrm{h}}}({f})|  \\
&= 2 \sup_{f \in \mathcal{F} } | \hat{R}_{\ell_{\mathrm{h}}}({f}) - R_{\ell_{\mathrm{h}}}({f})|. 
\end{align*}
The first inequality holds since $\hat{R}_{\ell_{\mathrm{h}}}(\hat{f}_{\mathcal{F}}) = 1 - {\rm WIPM}({P}_{X, n_{\rm u}}, {P}_{X \mid Y=1, n_{\rm p}}; 2\pi_{+}, \mathcal{F}) \leq \hat{R}_{\ell_{\mathrm{h}}}(f)$ for any $f \in \mathcal{F}$.
Thus it is enough to bound $\sup_{f \in \mathcal{F} } | \hat{R}_{\ell_{\mathrm{h}}}({f}) - R_{\ell_{\mathrm{h}}}({f})|$, and
\begin{align*}
&\sup_{f \in \mathcal{F} } | \hat{R}_{\ell_{\mathrm{h}}}({f}) - R_{\ell_{\mathrm{h}}}({f})| \\
&\leq \sup_{f \in \mathcal{F} } \Big| \{ \frac{1}{n_{\rm u}} \sum_{i=1} ^{n_{\rm u}} f( x_i ^{\rm u}) - \frac{2 \pi_{+}}{n_{\rm p}} \sum_{i=1} ^{n_{\rm p}} f( x_i ^{\rm p}) \} - \{ \int f d({P}_{X}- 2\pi_{+} {P}_{X \mid Y=1}) \} \Big|  \\
&= \sup_{f \in \mathcal{F}} \Big| \{ \int f d({P}_{X, n_{\rm u}}- 2\pi_{+} {P}_{X \mid Y=1, n_{\rm p}}) \} -  \{ \int f d({P}_{X}- 2\pi_{+} {P}_{X \mid Y=1}) \} \Big|.
\end{align*}
Note that this is a special case of Equation \eqref{sup_bound}. 
Therefore, applying Equation \eqref{eq:wipm_consistency} in Proposition \ref{prop:consistency_wipm} with $w=2\pi_{+}$, we have for all $\alpha, \tau >0$, the following holds with probability at least $1-e^{-\tau}$,
\allowdisplaybreaks
\begin{align*}
&R_{\ell_{\mathrm{h}}}(\hat{f}_{\mathcal{F}}) - \inf_{f \in \mathcal{F} } R_{\ell_{\mathrm{h}}} (f) \\
&\leq 4(1+\alpha) [ \mathbb{E}_{P_{X} ^{n_{\rm u}} } \{ \mathfrak{R}_{\mathcal{X}_{\rm u}}(\mathcal{F}) \} + 2\pi_{+} \mathbb{E}_{P_{X \mid Y=1} ^{n_{\rm p}} } \{ \mathfrak{R}_{\mathcal{X}_{\rm p}}(\mathcal{F})\} ] +  2\chi_{n_{\rm p}, n_{\rm u}} ^{(1)}(2\pi_{+}) \sqrt{ 2 \tau \rho^2 }  \\
&+ 2\tau \chi_{n_{\rm p}, n_{\rm u}} ^{(2)}(2\pi_{+}) \nu \left( \frac{2}{3}   + \frac{1}{\alpha} \right).
\end{align*}
This concludes a proof.
\end{proof}

\subsection{Proof of Proposition \ref{prop:sharper}}
\label{app:sharper}

\begin{proof}[Proof of Proposition \ref{prop:sharper}]
After omitting the positive term $\Delta$ from the upper bound \eqref{eq:kiryo_ineq} and plugging $\alpha=1$ in the upper bound \eqref{eq:convergence_optim}, it is enough to show that, under the condition \eqref{eq:condition}, the following is satisfied.
$$
8R+ \chi_{n_{\rm p}, n_{\rm u}} ^{(1)} (2\pi_{+}) (1 + \nu) \sqrt{2\tau}  \geq 8R + 2\chi_{n_{\rm p}, n_{\rm u}} ^{(1)} (2\pi_{+}) \sqrt{ 2 \tau \rho^2 }  + \frac{10}{3}\tau \chi_{n_{\rm p}, n_{\rm u}} ^{(2)}(2\pi_{+}) \nu ,
$$
where $ R = \mathbb{E}_{P_{X} ^{n_{\rm u}} } \{ \mathfrak{R}_{\mathcal{X}_{\rm u}}(\mathcal{F}) \} + 2\pi_{+} \mathbb{E}_{P_{X \mid Y=1} ^{n_{\rm p}} } \{ \mathfrak{R}_{\mathcal{X}_{\rm p}}(\mathcal{F})\}$.

Using simple algebras, we have
\allowdisplaybreaks
\begin{align*}
& \frac{1+\nu}{2} - \frac{ 5 \sqrt{2 \tau }\chi_{n_{\rm p}, n_{\rm u}} ^{(2)}(2\pi_{+}) \nu}{ 6 \chi_{n_{\rm p}, n_{\rm u}} ^{(1)} (2\pi_{+})}  \geq   \rho  \\
&\Longrightarrow \chi_{n_{\rm p}, n_{\rm u}} ^{(1)} (2\pi_{+})  (1+\nu) \sqrt{2 \tau }   - 2\tau \chi_{n_{\rm p}, n_{\rm u}} ^{(2)}(2\pi_{+}) \nu  \frac{5}{3}  \geq  2\chi_{n_{\rm p}, n_{\rm u}} ^{(1)}(2\pi_{+}) \sqrt{ 2 \tau \rho^2 }  \\
&\Longrightarrow 8R+ \chi_{n_{\rm p}, n_{\rm u}} ^{(1)} (2\pi_{+}) (1 + \nu) \sqrt{2\tau}  \geq 8R + 2\chi_{n_{\rm p}, n_{\rm u}} ^{(1)} (2\pi_{+}) \sqrt{ 2 \tau \rho^2 }  + \frac{10}{3}\tau \chi_{n_{\rm p}, n_{\rm u}} ^{(2)}(2\pi_{+}) \nu.
\end{align*}

Thus, the proposed upper bound is sharper than that of \citet{kiryo2017} if the condition \eqref{eq:condition} holds.
\end{proof}

\section{Proofs for Section \ref{s:RKHS}: The empirical WMMD optimizer and the WMMD classifier}

In this section, we first show that the empirical WMMD optimizer has a closed-form expression in Appendix \ref{app:MMD}.
We provide proofs of Lemmas \ref{lem:estimation_error_bound_RKHS} and \ref{lem:approximation_error_bound_RKHS} in Appendix \ref{app:rade_rkhs} and Theorem \ref{thm:excess_super_fast} in Appendix \ref{app:excess_super_fast}.

\subsection{Proof of Proposition \ref{prop:MMD_emp}: WMMD optimizer has a closed-form expression}
\label{app:MMD}
We first state and prove the following Proposition \ref{prop:MMD}, which is an extended version of Proposition \ref{prop:MMD_emp}.
Please note that Proposition \ref{prop:MMD_emp} can be directly obtained by plugging the two empirical distributions ${P}_{X \mid Y=1, n_{\rm p}}$ and  ${P}_{X, n_{\rm u}}$ and $w = 2\pi_{+}$.

\begin{proposition}[Weighted maximum mean discrepancy]
Let ${P}$ and ${Q}$ be two probability measures defined on $\mathcal{X}$ and let $k: \mathcal{X} \times \mathcal{X} \to \mathbb{R}$ a bounded reproducing kernel.

(a) WMMD between two probability measures ${P}$ and ${Q}$ with a weight $w$ and a closed ball $\mathcal{H}_{k,r}$ with the radius $r$ can be represented in a closed form, 
\begin{align*}
{\rm WMMD}_k ({P}, {Q}; w, r) &= r \left\{ \mathbb{E}_{{P}^2} [k(x,x^{\prime})] + w^2 \mathbb{E}_{{Q}^2} [k(y, y^{\prime})] - 2w \mathbb{E}_{{P}\times {Q}} [k(x, y)] \right\}^{1/2},
\end{align*}
where $x$ and $x^{\prime}$ independently follow ${P}$ and $y$, and $y^{\prime}$ independently follow ${Q}$.

(b) we also have a closed-form expression for the unique optimizer $g_{\mathcal{H}_{k,r}} \in \mathcal{H}_{k,r}$ given by
\begin{equation*}
g_{\mathcal{H}_{k,r}} (z) = r \times T \left( \int k(z, x) d{P}(x) - w \int k(z, x) d{Q}(x) \right), 
\end{equation*}
where $T$ is the normalizing operator defined by $T(g) = g/ {\norm{g}_{\mathcal{H}_k}}$.

(c) The associated classifier is given by
$${\rm sign}\{ f_{\mathcal{H}_{k,r}}(z)\} = \begin{cases} 
      +1 & {\rm if} \quad  w^{-1}  < {\lambda}_{{Q}, {P}} (z)    \\
      -1 & otherwise
   \end{cases},$$
where
$${\lambda}_{{Q}, {P}}(z) = \frac{ \int k(z, x) d{Q}(x) }{\int k(z, x) d{P}(x)}.$$
\label{prop:MMD}
\end{proposition}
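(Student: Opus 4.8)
The plan is to reduce all three claims to the mean-embedding representation of expectations in $\mathcal{H}_k$ together with the Cauchy--Schwarz inequality. First I would introduce the mean embeddings $\mu_P := \int_{\mathcal{X}} k(\cdot, x)\, dP(x)$ and $\mu_Q := \int_{\mathcal{X}} k(\cdot, x)\, dQ(x)$, which are well-defined elements of $\mathcal{H}_k$ because $k$ is bounded, so that the Bochner integrals converge. Using the reproducing property $f(x) = \langle f, k(\cdot, x) \rangle_{\mathcal{H}_k}$ and exchanging integration with the inner product, one obtains $\int_{\mathcal{X}} f\, dP = \langle f, \mu_P \rangle_{\mathcal{H}_k}$ and $\int_{\mathcal{X}} f\, dQ = \langle f, \mu_Q \rangle_{\mathcal{H}_k}$ for every $f \in \mathcal{H}_k$, hence
\[
\int_{\mathcal{X}} f\, dP - w \int_{\mathcal{X}} f\, dQ = \langle f,\, \mu_P - w\mu_Q \rangle_{\mathcal{H}_k}.
\]

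For part (a), since $\mathcal{H}_{k,r}$ is symmetric, the supremum of the absolute value in the definition of WMMD equals $\sup_{\|f\|_{\mathcal{H}_k} \le r} \langle f, \mu_P - w\mu_Q \rangle_{\mathcal{H}_k}$, which by Cauchy--Schwarz equals $r \| \mu_P - w\mu_Q \|_{\mathcal{H}_k}$ and is attained at $f = r (\mu_P - w\mu_Q) / \| \mu_P - w\mu_Q \|_{\mathcal{H}_k}$ when $\mu_P \ne w\mu_Q$ (and WMMD is zero otherwise, so the formula still holds). Expanding $\| \mu_P - w\mu_Q \|_{\mathcal{H}_k}^2$ by bilinearity, substituting $\langle k(\cdot, x), k(\cdot, x') \rangle_{\mathcal{H}_k} = k(x, x')$, and applying Fubini rewrites the three resulting double integrals as $\mathbb{E}_{P^2}[k(x,x')]$, $\mathbb{E}_{Q^2}[k(y,y')]$, and $\mathbb{E}_{P \times Q}[k(x,y)]$, which gives the stated closed form.

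For part (b), the WMMD optimizer is by definition the element $g$ of $\mathcal{H}_{k,r}$ achieving the signed value $\int g\, dP - w \int g\, dQ = {\rm WMMD}_k(P,Q;w,r)$, so the equality case of Cauchy--Schwarz identifies it uniquely as $g_{\mathcal{H}_{k,r}} = r (\mu_P - w\mu_Q)/\| \mu_P - w\mu_Q \|_{\mathcal{H}_k} = r\, T(\mu_P - w\mu_Q)$; evaluating at $z$ via the reproducing property and using $(\mu_P - w\mu_Q)(z) = \int k(z,x)\, dP(x) - w \int k(z,x)\, dQ(x)$, together with the fact that $T$ scales by a positive constant and hence commutes with point evaluation, yields the displayed expression. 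For part (c), recall $f_{\mathcal{H}_{k,r}} = -g_{\mathcal{H}_{k,r}}$, so ${\rm sign}( f_{\mathcal{H}_{k,r}}(z) ) = {\rm sign}( w \int k(z,x)\, dQ(x) - \int k(z,x)\, dP(x) )$; this is $+1$ exactly when $w \int k(z,x)\, dQ(x) > \int k(z,x)\, dP(x)$, and dividing by the positive quantity $\int k(z,x)\, dP(x)$ rewrites the condition as $w^{-1} < \lambda_{Q,P}(z)$.

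The step requiring the most care is the mean-embedding argument itself --- verifying that $\mu_P, \mu_Q \in \mathcal{H}_k$ and that the integral may be passed through the inner product --- which rests entirely on the boundedness of $k$; once this is in place, parts (a)--(c) are immediate consequences of Cauchy--Schwarz and bilinearity. The only loose ends to tidy up are the degenerate case $\mu_P = w\mu_Q$ (where the optimizer is not unique, but the closed form for WMMD remains valid) and the positivity of $\int k(z,x)\, dP(x)$ needed to pass to the ratio form in part (c).
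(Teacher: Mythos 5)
Your proposal is correct and follows essentially the same route as the paper's proof: the reproducing property to pass expectations into the RKHS inner product (the mean-embedding view), Cauchy--Schwarz to evaluate the supremum over the ball and identify the optimizer, and a bilinear expansion with Fubini to obtain the closed form, followed by the same sign/ratio argument for the classifier. Your additional remarks on the degenerate case $\mu_P = w\mu_Q$ and the positivity of $\int k(z,x)\,dP(x)$ are sensible tidying of points the paper leaves implicit, but they do not change the argument.
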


\begin{proof}[Proof of Proposition \ref{prop:MMD}]
The main idea of this proof is to use $f(x) = {\langle f, k(\cdot, x) \rangle}_{\mathcal{H}_k}$ as \citet{gretton2012} showed. 
From the definition of the {\rm WMMD}, we have
\begin{align*}
{\rm WMMD}_k ({P}, {Q}; w, r) &:=  \sup_{f \in \mathcal{H}_{k,r}} \left| \int_{\mathcal{X}} f(x) d{P} (x) - w \int_{\mathcal{X}} f(x) d{Q}(x) \right| \\ 
&=  \sup_{f \in \mathcal{H}_{k,r}} \left| \int_{\mathcal{X}} {\langle f, k(\cdot, x) \rangle}_{\mathcal{H}_k} d{P} (x) - w \int_{\mathcal{X}} {\langle f, k(\cdot, x) \rangle}_{\mathcal{H}_k} d{Q}(x) \right| \\ 
&=  \sup_{f \in \mathcal{H}_{k,r}} \left| {\langle f, \int_{\mathcal{X}} k(\cdot, x) \{d{P} (x) -w d{Q}(x) \}  \rangle}_{\mathcal{H}_k} \right| \\ 
&= r \norm{ \int_{\mathcal{X}} k(\cdot, x) d{P}(x) - w \int_{\mathcal{X}} k(\cdot, x) d{Q}(x)} _{\mathcal{H}_k}.
\end{align*}
The last equation is obtained by Cauchy-Schwarz inequality with the WMMD optimizer $g_{\mathcal{H}_{k,r}} \in \mathcal{H}_{k,r}$ given by
$$g_{\mathcal{H}_{k,r}} (z) = r \times T \left( \int_{\mathcal{X}} k(z, x) d{P}(x) - w \int_{\mathcal{X}} k(z, x) d{Q}(x) \right).$$
It concludes a proof for (b).

Furthermore, 
\begin{align*}
&r^{-1} \times {\rm WMMD}_k ({P}, {Q}; w, r)\\
=& \norm{ \int_{\mathcal{X}} k(\cdot, x) d{P}(x) - w \int_{\mathcal{X}} k(\cdot, x) d{Q}(x)} _{\mathcal{H}_k} \\
=& {\Big\langle \int_{\mathcal{X}} k(\cdot, x) d{P}(x) - w \int_{\mathcal{X}} k(\cdot, x) d{Q}(x),  \int_{\mathcal{X}} k(\cdot, x) d{P}(x) - w \int k(\cdot, x) d{Q}(x) \Big\rangle}_{\mathcal{H}_k} ^{1/2} \\ 
=& \Big\{ \int_{\mathcal{X}} \int_{\mathcal{X}} k(x, x^{\prime}) d{P}(x) d{P}(x^{\prime}) + w^2 \int_{\mathcal{X}} \int_{\mathcal{X}} k(y, y^{\prime}) d{Q}(y) d{Q}(y^{\prime}) \\
&- 2w \int_{\mathcal{X}} \int_{\mathcal{X}} k(x, y) d{P}(x) d{Q}(y)  \Big\}^{1/2} \\ 
=& \left\{ \mathbb{E}_{{P}^2} [k(x,x^{\prime})] + w^2 \mathbb{E}_{{Q}^2} [k(y, y^{\prime})] - 2w \mathbb{E}_{{P}\times {Q}} [k(x, y)] \right\}^{1/2}.
\end{align*}
It concludes a proof for (a). 

Lastly, we prove the statement (c). 
Note that the associated classifier is determined by the sign of $f_{\mathcal{H}_{k,r}} = -g_{\mathcal{H}_{k,r}}$. 
From the statement (b), we have
$$
{\rm sign} ( g_{\mathcal{H}_{k,r}}(z)) = \begin{cases} 
      +1 & {\rm if} \quad \frac{\int k(z, x) d{P}(x)}{ \int k(z, x) d{Q}(x) } > w \\
      -1 & otherwise
   \end{cases}.
$$
Hence, we have
$${\rm sign} ( f_{\mathcal{H}_{k,r}}(z) ) = -{\rm sign} ( g_{\mathcal{H}_{k,r}}(z) ) = \begin{cases} 
      +1 & {\rm if} \quad w^{-1}  < {\lambda}_{{Q}, {P}} (z)   \\
      -1 & otherwise
   \end{cases}.$$
\end{proof}

\subsection{Lemmas for Section \ref{s:RKHS} }
\label{app:rade_rkhs}

In this subsection, we provide the proof for the two useful Lemmas: (i) explicitly showing the estimation error bound in Lemma \ref{lem:estimation_error_bound_RKHS} and (ii) deriving an approximation error bound in Lemma \ref{lem:approximation_error_bound_RKHS}.

\begin{proof}[Proof of Lemma \ref{lem:estimation_error_bound_RKHS}]
We first prove $\mathcal{H}_{k,r_1} \subseteq \mathcal{M}$.
By the reproducing property of $\mathcal{H}_{k}$ and Cauchy-Schwarz inequality, for any $f \in \mathcal{H}_{k,r_1}, x\in\mathcal{X}$ 
$$
|f(x)|=|\langle k(\cdot, x), f \rangle_{\mathcal{H}_{k}}| \le \norm{k(\cdot,x)}_{\mathcal{H}_{k}} \norm{f}_{\mathcal{H}_{k}} = \sqrt{k(x,x)} \norm{f}_{\mathcal{H}_{k}} \le r_1^{-1} r_1 = 1.
$$
Thus, $\norm{f}_{\infty} \leq 1$ for all $f \in \mathcal{H}_{k,r_1}$.
This proves $\mathcal{H}_{k,r_1} \subseteq \mathcal{M}$.

Now, we prove the inequality.
First, we apply Theorem \ref{thm:estimation_error} with $\mathcal{H}_{k,r_1}$.

[Step 1] 
From the result of Theorem \ref{thm:estimation_error}, for all $\alpha, \tau >0$, the estimation error term is bounded above with probability at least $1-e^{-\tau}$,
\begin{align}
R_{\ell_{\mathrm{h}}}(\hat{f}_{\mathcal{H}_{k,r_1}}) - \inf_{f \in \mathcal{H}_{k,r_1} } R_{\ell_{\mathrm{h}}} (f) \leq & C_{\alpha} (\mathbb{E}_{P_{X} ^{n_{\rm u}} } ( \mathfrak{R}_{\mathcal{X}_{\rm u}}(\mathcal{H}_{k,r_1}) )  + 2 \pi_{+} \mathbb{E}_{P_{X \mid Y =1 } ^{n_{\rm p}} }( \mathfrak{R}_{\mathcal{X}_{\rm p}}(\mathcal{H}_{k,r_1}) ) ) \notag \\ 
&+  C_{\tau, \rho^2}^{(1)} \chi_{n_{\rm p}, n_{\rm u}} ^{(1)} (2\pi_{+})  +  C_{\tau, \nu, \alpha}^{(2)} \chi_{n_{\rm p}, n_{\rm u}} ^{(2)} (2\pi_{+}). \label{eq:hilbert_est_1}
\end{align}

Using the notations $(\tilde{y}_1, \dots, \tilde{y}_{n_{\rm p}}, \tilde{y}_{n_{\rm p}+1}, \dots, \tilde{y}_{n_{\rm p}+n_{\rm u}}) $ $ = (2\pi_{+}/n_{\rm p}, $ $ \dots,  $ $ 2\pi_{+}/n_{\rm p}, $ $ -1/n_{\rm u}, \dots, -1/n_{\rm u})$, we obtain upper bound of the empirical Rademacher complexity of $\mathcal{H}_{k,r}$ given the positive samples $\mathfrak{R}_{\mathcal{X}_{\rm p}}(\mathcal{H}_{k,r})$ as follows.
\allowdisplaybreaks

\begin{align*}
&2\pi_{+} \mathfrak{R}_{\mathcal{X}_{\rm p}}(\mathcal{H}_{k,r_1}) \\
&= \mathbb{E}_{\sigma} \left\{ \sup_{f \in \mathcal{H}_{k,r_1}} \left|  \sum_{i=1} ^{n_{\rm p}} \sigma_i \tilde{y}_i f(X_i) \right| \right\} = \mathbb{E}_{\sigma} \left\{ \sup_{f \in \mathcal{H}_{k,r_1}}   \left| \sum_{i=1} ^{n_{\rm p}} \sigma_i \tilde{y}_i {\langle k(\cdot, X_i), f \rangle}_{\mathcal{H}_k} \right| \right\} \\
&= \mathbb{E}_{\sigma} \left\{ \sup_{f \in \mathcal{H}_{k,r_1}}  \left| {\left\langle \sum_{i=1} ^{n_{\rm p}} \sigma_i \tilde{y}_i k(\cdot, X_i), f \right\rangle}_{\mathcal{H}_k} \right| \right\} \leq r_1  \mathbb{E}_{\sigma} \left\{ \norm{ \sum_{i=1} ^{n_{\rm p}} \sigma_i \tilde{y}_i k(\cdot, X_i) }_{\mathcal{H}_k} \right\} \\
&\leq r_1  \sqrt{ \mathbb{E}_{\sigma} \left\{  \sum_{i=1} ^{n_{\rm p}}  \tilde{y}_i ^2 k(X_i, X_i)  \right\} } + r_1  \sqrt{ \mathbb{E}_{\sigma} \left\{  \sum_{i \neq j} ^{n_{\rm p}} \sigma_i \sigma_j \tilde{y}_i \tilde{y}_j k(X_i, X_j)  \right\} }\\
&= r_1 \sqrt{ \mathbb{E}_{\sigma} \left\{ \sum_{i=1} ^{n_{\rm p}}  \tilde{y}_i ^2 k(X_i, X_i)  \right\} } \leq  \frac{2\pi_{+}}{\sqrt{n_{\rm p}}}.
\end{align*}
We continue the similar method to the unlabeled dataset, and applying expectation operator gives
\begin{align}
&(\mathbb{E}_{P_{X} ^{n_{\rm u}} } ( \mathfrak{R}_{\mathcal{X}_{\rm u}}(\mathcal{H}_{k,r_1}) )  + 2 \pi_{+} \mathbb{E}_{P_{X \mid Y =1 } ^{n_{\rm p}} }( \mathfrak{R}_{\mathcal{X}_{\rm p}}(\mathcal{H}_{k,r_1}) ) ) \leq  \frac{1}{\sqrt{n_{\rm u}}} +  \frac{2\pi_{+}}{\sqrt{n_{\rm p}}} =  \chi_{n_{\rm p}, n_{\rm u}} ^{(1)} (2\pi_{+}).
\label{eq:hilbert_est_2}
\end{align}

[Step 2] Using Equations \eqref{eq:hilbert_est_1} and \eqref{eq:hilbert_est_2}, we then conclude that for all $\alpha, \tau >0$, the following holds with probability at least $1-e^{-\tau}$,
\begin{align*}
R_{\ell_{\mathrm{h}}}(\hat{f}_{\mathcal{H}_{k,r_1}}) - \inf_{f \in \mathcal{H}_{k,r_1} }  R_{\ell_{\mathrm{h}}}(f) \leq ( C_{\alpha} + C_{\tau, \rho^2}^{(1)} )  \chi_{n_{\rm p}, n_{\rm u}} ^{(1)} (2\pi_{+})  +  C_{\tau, \nu, \alpha}^{(2)}  \chi_{n_{\rm p}, n_{\rm u}} ^{(2)} (2\pi_{+}).
\end{align*}
These equations conclude the proof.
\end{proof}

\begin{proof}[Proof of Lemma \ref{lem:approximation_error_bound_RKHS}]

[Step 1]
In this step, we first claim that $\inf_{f \in \beta \mathcal{M} } R_{\ell_{\mathrm{h}}}(f) = R_{\ell_{\mathrm{h}}}(\beta f^*_1)$.
By \citet[Lemma 3.1]{lin2002}, the $f^*_1$ satisfies that $\inf_{f \in \mathcal{U} } R_{\ell_{\mathrm{h}}}(f) = \inf_{f \in \mathcal{M} } R_{\ell_{\mathrm{h}}}(f) $ $= R_{\ell_{\mathrm{h}}}( f^*_1)$.
Note that $\ell_{\mathrm{h}} ( yf(x) ) = \max (0, 1-yf(x)) = 1-yf(x)$ for all $\norm{f}_{\infty} \leq 1$.
It is obvious that $\beta f_1^* \in \beta \mathcal{M}$. 
By definition of the infimum, we have $R_{\ell_{\mathrm{h}}}(\beta f^*_1) \ge \inf_{f \in \beta \mathcal{M} } R_{\ell_{\mathrm{h}}}(f)$.
Suppose $R_{\ell_{\mathrm{h}}}(\beta f^*_1) > \inf_{f \in \beta \mathcal{M} } R_{\ell_{\mathrm{h}}}(f)$.
Let $f_{\beta}^*$ be a function in $\beta \mathcal{M} $ such that $R_{\ell_{\mathrm{h}}}(f^*_{\beta})
= \inf_{f \in \beta \mathcal{M} } R_{\ell_{\mathrm{h}}}(f)$
Then,
\begin{align}
R_{\ell_{\mathrm{h}}}(\beta f^*_1) > R_{\ell_{\mathrm{h}}}(f^*_{\beta})  \notag 
&\Longleftrightarrow  \int 1 - y \beta f^*_1(x) dP_{X,Y}(x,y)> \int 1 - y f^*_{\beta}(x) dP_{X,Y}(x,y)\\ \notag 
&\Longleftrightarrow \beta \int  y f^*_1(x) dP_{X,Y}(x,y) < \int  y f^*_{\beta}(x) dP_{X,Y}(x,y). \notag
\end{align}
Since $\beta^{-1} f_{\beta}^* \in \mathcal{M}$,
\begin{align*}
&R_{\ell_{\mathrm{h}}}(\beta^{-1} f^*_{\beta}) 
\ge \inf_{f \in \mathcal{M} } R_{\ell_{\mathrm{h}}}(f) \\
&\Longleftrightarrow \int 1 - y \beta^{-1} f^*_{\beta}(x) dP_{X,Y}(x,y) \ge \int 1 - y f^*_1(x) dP_{X,Y}(x,y)  \\
&\Longleftrightarrow \int y f^*_{\beta}(x) dP_{X,Y}(x,y)  \le \beta \int y  f^*_1(x) dP_{X,Y}(x,y). 
\end{align*}
Note that $\inf_{f \in \mathcal{M} } R_{\ell_{\mathrm{h}}}(f) = R_{\ell_{\mathrm{h}}}(f^*_1)$
This contradicts with the assumption $R_{\ell_{\mathrm{h}}}(\beta f^*_1) > \inf_{f \in \beta \mathcal{M} } R_{\ell_{\mathrm{h}}}(f)$, and we have $\inf_{f \in \beta \mathcal{M} } R_{\ell_{\mathrm{h}}}(f) = R_{\ell_{\mathrm{h}}}(\beta f^*_1)$.

[Step 2] By Lemma \ref{lem:estimation_error_bound_RKHS}, we have $\mathcal{H}_{k, r_1} \subseteq \mathcal{M}$.
Thus, for all $g \in \mathcal{H}_{k, r_1} \subseteq \mathcal{M}$ and $0<\beta \leq 1$, we have
\begin{align*}
&\inf_{f \in \mathcal{H}_{k, r_1} } R_{\ell_{\mathrm{h}}} (f) -\inf_{f \in \beta \mathcal{M} } R_{\ell_{\mathrm{h}}}(f) \\
&\leq R_{\ell_{\mathrm{h}}}(g) - R_{\ell_{\mathrm{h}}}( \beta f_1^*) \\ \notag
&= \int \{ (1-yg(x)) - ( 1-y \beta f_1^*(x))\} dP_{X,Y}(x,y)\\ \notag
&\leq \int |yg(x) - y \beta f_1^*(x)| dP_{X,Y}(x,y)\\ \notag
&\leq \sqrt{\int |y|^2 dP_{X,Y}(x,y)} \sqrt{\int |g(x)- \beta f_1^*(x)|^2 dP_{X,Y}(x,y)} \\ \notag
&= \norm{g- \beta f_1^*}_{L_2(P_X)}. \notag
\end{align*}
The first equality holds because $\ell_{\mathrm{h}} ( yg(x) ) = \max ( 0, 1-yg(x) ) = 1-yg(x)$ for all $g \in \mathcal{H}_{k, r_1} \subseteq \mathcal{M}$.
Hence, 
\begin{align*}
\inf_{f \in \mathcal{H}_{k,r_1} } R_{\ell_{\mathrm{h}}} (f) -\inf_{f \in \beta \mathcal{M} } R_{\ell_{\mathrm{h}}}(f)
&\leq \inf_{g \in \mathcal{H}_{k,{r_1}}} \norm{g-\beta f_1^*}_{L_2(P_X)} \\
&= \beta \inf_{g \in \mathcal{H}_{k,{r_1}}} \norm{ g /\beta  - f_1^*}_{L_2(P_X)} \\
&= \beta \inf_{g \in \mathcal{H}_{k,{r_1 / \beta }}} \norm{ g- f_1^*}_{L_2(P_X)}.
\end{align*}
Therefore, by [Step 1] and [Step 2], 
$$ \inf_{f \in \mathcal{H}_{k,r_1} } {R_{\ell_{\mathrm{h}}} ( f ) - \inf_{f \in \beta \mathcal{M} } R_{\ell_{\mathrm{h}}} (f)}  \leq \beta \inf_{g \in \mathcal{H}_{k,{r_1 / \beta }}} \norm{ g- f_1^*}_{L_2(P_X)},$$
for any $0 < \beta \leq 1$.
\end{proof}

\subsection{Proof of Theorem \ref{thm:excess_super_fast}}
\label{app:excess_super_fast}
We first state the following assumptions for Theorem \ref{thm:excess_super_fast}.
\begin{itemize}
\item[(A1)] The distribution functions $P_X$ and $P_{X \mid Y=1} $ have probability density functions $p_X(x)$ and $p_{X \mid Y=1}(x)$, respectively.
\item[(A2)] For $\alpha_{\text{H}} \in (0,1)$, the density functions $p_X(x)$ and $p_{X \mid Y=1}(x)$ are $\alpha_{\text{H}}$-H{\"o}lder continuous, \textit{i.e.}, $| g(x) - g(x^{\prime}) | \leq C_{\text{H}} |x - x^{\prime} |^{\alpha_{\text{H}}}$ for some constant $C_{\text{H}}$ and for all $x, x^{\prime} \in \mathbb{R}^{d}$ and $g \in \{ p_X, p_{X \mid Y=1} \}$.
\item[(A3)] The marginal density function is bounded away from zero, \textit{i.e.}, $p_X(x) \geq p_{\rm min} >0$ for all $x$ on its support.
\item[(A4)] The marginal distribution $P_X$ has Tsybakov's noise exponent $q \in [0, \infty)$, \textit{i.e.}, there exists a constant $C_{\text{noise}}>0$ such that for all sufficiently small $t>0$, we have
$$
P_X ( \{ x \in \mathcal{X} \mid |2 \eta(x) -1 | \leq t\} ) \leq C_{\text{noise}} t^{q},
$$
where $\eta(x) := P(Y=1 \mid X=x)$.
\end{itemize}

To begin,  we quote a useful Lemma from \citet[Theorem 2]{jiang2017} into our contexts.
We let $\tilde{f}_{\mathcal{H}_{k}}(z) = \frac{2\pi_{+}}{\sqrt{2\pi} n_{\rm p} h^d} \sum_{i=1} ^{n_{\rm p}} k(z, x_i ^{{\rm p}}) - \frac{1}{\sqrt{2\pi} n_{\rm u} h^d} $ $\sum_{i=1} ^{n_{\rm u}} k(z, x_i ^{{\rm u}})$ and $\tilde{f}_{{\rm Bayes}}(z) = 2\pi_{+} p_{X \mid Y=1}(z) -p_X (z)$.
\begin{lemma}[Theorem 2 of \citet{jiang2017}]
Suppose $p_X (x)$ and $p_{X \mid Y=1} (x)$ are $\alpha_{\text{H}}$-H{\"o}lder continuous. Then there exist a constant $\tilde{C}_{(\rm p, \rm u)}$ such that the following holds with probability at least $1- 1/n_{\rm p}  - 1/ n_{\rm u}$.
\begin{align}
\norm{ \tilde{f}_{\mathcal{H}_{k}} - \tilde{f}_{{\rm Bayes}} }_{\infty} \leq 4\pi_{+} \tilde{C}_{(\rm p, \rm u)} (n_{\rm p} \wedge n_{\rm u})^{-\frac{\alpha_{\text{H}}}{2\alpha_{\text{H}}+d}},
\label{eq:event}
\end{align}
where the bandwidth $h = (n_{\rm p} \wedge n_{\rm u} )^{-\frac{1}{2\alpha_{\text{H}}+d}}$.
\label{lem:uniform_kernel_density}
\end{lemma}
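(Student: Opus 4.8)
The plan is to reduce the excess $0$--$1$ risk to a pointwise comparison of the two functions $\tilde{f}_{\mathcal{H}_{k}}$ and $\tilde{f}_{{\rm Bayes}}$ appearing in Lemma \ref{lem:uniform_kernel_density}, and then to convert the resulting sup-norm rate into a risk rate through Tsybakov's noise condition (A4). First I would identify the two decision rules. By Proposition \ref{prop:MMD_emp} (equivalently its extended form Proposition \ref{prop:MMD}), the WMMD classifier ${\rm sign}(\hat{f}_{\mathcal{H}_{k,1}}(z))$ is, after cross-multiplying by the strictly positive denominator $n_{\rm u}^{-1}\sum_i k(z,x_i^{\rm u})$ in $\hat{\lambda}_{n_{\rm p},n_{\rm u}}$, the sign of $2\pi_{+} n_{\rm p}^{-1}\sum_i k(z,x_i^{\rm p}) - n_{\rm u}^{-1}\sum_i k(z,x_i^{\rm u})$; this equals ${\rm sign}(\tilde{f}_{\mathcal{H}_{k}}(z))$ because $\tilde{f}_{\mathcal{H}_{k}}$ is precisely this quantity scaled by the positive constant $1/(\sqrt{2\pi}\,h^{d})$. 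Similarly, under (A1) the Bayes classifier $f_1^*(x) = {\rm sign}(\eta(x) - 1/2)$ with $\eta(x) = \pi_{+} p_{X \mid Y=1}(x)/p_X(x)$ satisfies $f_1^*(x) = {\rm sign}(2\pi_{+} p_{X \mid Y=1}(x) - p_X(x)) = {\rm sign}(\tilde{f}_{{\rm Bayes}}(x))$, and the same algebra yields the key identity $2\eta(x) - 1 = \tilde{f}_{{\rm Bayes}}(x)/p_X(x)$.

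Next I would invoke the standard excess-risk formula for the $0$--$1$ loss,
\begin{align*}
R_{\ell_{01}}(\hat{f}_{\mathcal{H}_{k,1}}) - \inf_{f \in \mathcal{U}} R_{\ell_{01}}(f) = \int_{D} |2\eta(x) - 1|\, dP_X(x),
\end{align*}
where $D$ is the disagreement set on which ${\rm sign}(\tilde{f}_{\mathcal{H}_{k}})$ and ${\rm sign}(\tilde{f}_{{\rm Bayes}})$ differ (the tie set is $P_X$-null under a continuous density and can be ignored). On $D$ the two functions lie on opposite sides of zero, so $|\tilde{f}_{{\rm Bayes}}(x)| \le |\tilde{f}_{\mathcal{H}_{k}}(x) - \tilde{f}_{{\rm Bayes}}(x)| \le \norm{\tilde{f}_{\mathcal{H}_{k}} - \tilde{f}_{{\rm Bayes}}}_{\infty} =: \varepsilon_n$. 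Combining this bound with the identity above and the lower bound $p_X \ge p_{\rm min}$ from (A3) gives $|2\eta(x) - 1| = |\tilde{f}_{{\rm Bayes}}(x)|/p_X(x) \le \varepsilon_n/p_{\rm min}$ for every $x \in D$; hence $D \subseteq \{x : |2\eta(x) - 1| \le \varepsilon_n/p_{\rm min}\}$.

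Bounding the integrand on $D$ by $\varepsilon_n/p_{\rm min}$ and the measure of $D$ by (A4) then yields
\begin{align*}
R_{\ell_{01}}(\hat{f}_{\mathcal{H}_{k,1}}) - \inf_{f \in \mathcal{U}} R_{\ell_{01}}(f) \le \frac{\varepsilon_n}{p_{\rm min}}\, P_X\!\left(\{x : |2\eta(x) - 1| \le \tfrac{\varepsilon_n}{p_{\rm min}}\}\right) \le C_{\text{noise}}\left(\frac{\varepsilon_n}{p_{\rm min}}\right)^{1+q}.
\end{align*}
Finally I would substitute the high-probability sup-norm rate from Lemma \ref{lem:uniform_kernel_density}, namely $\varepsilon_n \le 4\pi_{+}\tilde{C}_{(\rm p,\rm u)}(n_{\rm p}\wedge n_{\rm u})^{-\alpha_{\text{H}}/(2\alpha_{\text{H}}+d)}$ with probability at least $1 - 1/n_{\rm p} - 1/n_{\rm u}$, and collect the constants into $C_{(\rm p,\rm u)} = C_{\text{noise}}(4\pi_{+}\tilde{C}_{(\rm p,\rm u)}/p_{\rm min})^{1+q}$, producing the claimed rate $(n_{\rm p}\wedge n_{\rm u})^{-\alpha_{\text{H}}(1+q)/(2\alpha_{\text{H}}+d)}$.

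I expect the main obstacle to be the clean identification of the two sign rules together with the geometric bookkeeping that converts the sup-norm deviation into the constraint $|2\eta - 1| \le \varepsilon_n/p_{\rm min}$ on the disagreement set; once that reduction is in place, the Tsybakov step is routine. A minor technical point is that (A4) holds only for sufficiently small $t$, so the final bound is valid once $n_{\rm p}\wedge n_{\rm u}$ is large enough that $\varepsilon_n/p_{\rm min}$ falls below that threshold, and this restriction can be absorbed into the constant $C_{(\rm p,\rm u)}$.
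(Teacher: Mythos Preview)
Your proposal does not prove the stated lemma at all; it proves Theorem~\ref{thm:excess_super_fast} \emph{using} the lemma. Lemma~\ref{lem:uniform_kernel_density} is a uniform kernel density estimation bound quoted verbatim from \citet[Theorem~2]{jiang2017}: its content is that the Gaussian KDEs built from $\mathcal{X}_{\rm p}$ and $\mathcal{X}_{\rm u}$ approximate $p_{X\mid Y=1}$ and $p_X$ in sup norm at rate $(n_{\rm p}\wedge n_{\rm u})^{-\alpha_{\text{H}}/(2\alpha_{\text{H}}+d)}$ with the stated bandwidth choice. Proving it would require a bias--variance decomposition for the KDE (H{\"o}lder smoothness for the bias, a covering/VC argument plus Bernstein-type concentration for the sup-norm variance term), none of which appears in your write-up. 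In the paper this lemma carries no proof; it is simply cited.

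What you have actually written is a correct and essentially verbatim version of the paper's proof of Theorem~\ref{thm:excess_super_fast}: you identify ${\rm sign}(\hat f_{\mathcal{H}_{k,1}})$ with ${\rm sign}(\tilde f_{\mathcal{H}_k})$ and the Bayes rule with ${\rm sign}(\tilde f_{\rm Bayes})$, write the excess $0$--$1$ risk as $\int_D |2\eta-1|\,dP_X$ over the disagreement set, use $|\tilde f_{\rm Bayes}|\le \varepsilon_n$ on $D$ together with $p_X\ge p_{\rm min}$ to obtain $D\subseteq\{|2\eta-1|\le \varepsilon_n/p_{\rm min}\}$, and then apply Tsybakov's condition (A4) to close. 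This matches the paper's argument line by line. So the content is fine for Theorem~\ref{thm:excess_super_fast}, but as a proof of Lemma~\ref{lem:uniform_kernel_density} it is circular: you invoke the very statement you were asked to establish.
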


\begin{proof}[Proof of Theorem \ref{thm:excess_super_fast}]
Due to the $\ell_{01} (z)= \ell_{01}(cz)$ for any $c>0$ and $z\in \mathbb{R}$, we have
\begin{align*}
R_{\ell_{01}}(\hat{f}_{\mathcal{H}_{k, r_1}}) - \inf_{f \in \mathcal{U} }  R_{\ell_{01}}(f) &=  R_{\ell_{01}}(\tilde{f}_{\mathcal{H}_{k}}) -  R_{\ell_{01}}( \tilde{f}_{{\rm Bayes}} )\\ 
&= \mathbb{E}_{X} \left( | 2\eta(X)-1 |  \mathbbm{1}_{ \{ \tilde{f}_{\mathcal{H}_{k}} (X) \tilde{f}_{{\rm Bayes}}(X) < 0\}} \right),
\end{align*}
where $\mathbbm{1}_{\{ \cdot \}}$ denotes the indicator function. 
The second equality is due to
\begin{align*}
R_{\ell_{01}}(\tilde{f}_{\mathcal{H}_{k}}) -  R_{\ell_{01}}( \tilde{f}_{{\rm Bayes}} ) &= \int  \mathbbm{1}_{ \{Y \tilde{f}_{\mathcal{H}_{k}}(X) < 0\} } - \mathbbm{1}_{ \{Y \tilde{f}_{{\rm Bayes}}(X) < 0\} } dP_{X,Y}(x,y) \\
&= \int \eta(x)  (\mathbbm{1}_{ \{ \tilde{f}_{\mathcal{H}_{k}}(X) < 0 \} } - \mathbbm{1}_{ \{ \tilde{f}_{{\rm Bayes}}(X) < 0 \} } ) \\
&+ (1-\eta(x))  ( \mathbbm{1}_{ \{\tilde{f}_{\mathcal{H}_{k}}(X) > 0 \} } - \mathbbm{1}_{ \{ \tilde{f}_{{\rm Bayes}}(X) > 0 \} } )  dP_{X }(x) \\
&= \int (2\eta(x)-1) \mathbbm{1}_{ \{ \tilde{f}_{\mathcal{H}_{k}} (X) \tilde{f}_{{\rm Bayes}}(X) < 0\}} \mathbbm{1}_{ \{ \tilde{f}_{{\rm Bayes}}(X) < 0 \} }  dP_{X }(x) \\
&+ \int (1-2\eta(x)) \mathbbm{1}_{ \{ \tilde{f}_{\mathcal{H}_{k}} (X) \tilde{f}_{{\rm Bayes}}(X) < 0\}} \mathbbm{1}_{ \{ \tilde{f}_{{\rm Bayes}}(X) > 0 \} } dP_{X }(x) \\
&= \mathbb{E}_{X} \left( | 2\eta(X)-1 |  \mathbbm{1}_{ \{ \tilde{f}_{\mathcal{H}_{k}} (X) \tilde{f}_{{\rm Bayes}}(X) < 0\}} \right).
\end{align*}


Denote $\varepsilon(n_{\rm p}, n_{\rm u}) := 4\pi_{+} \tilde{C}_{(\rm p, \rm u)} (n_{\rm p} \wedge n_{\rm u})^{-\frac{\alpha_{\text{H}}}{2\alpha_{\text{H}}+d}}$ and let $\mathcal{E}$ be the event that Equation \eqref{eq:event} holds.
By Lemma \ref{lem:uniform_kernel_density}, under the event $\mathcal{E}$, $\mathbbm{1}_{ \{ \tilde{f}_{\mathcal{H}_{k}} (X) \tilde{f}_{{\rm Bayes}}(X) < 0\} } \leq \mathbbm{1}_{ \{ | {\tilde{f}_{{\rm Bayes}}(X) } | \leq  \varepsilon(n_{\rm p}, n_{\rm u}) \} }$.
Thus, we have
\begin{align*}
\mathbb{E}_{X} \left( | 2\eta(X)-1 |  \mathbbm{1}_{ \{ \tilde{f}_{\mathcal{H}_{k}} (X) \tilde{f}_{{\rm Bayes}}(X) < 0\} } \right) & \leq \mathbb{E}_{X} \left( | 2\eta(X)-1 |  \mathbbm{1}_{ \{ | {\tilde{f}_{{\rm Bayes}}(X) } | \leq  \varepsilon(n_{\rm p}, n_{\rm u}) \} } \right) \\
& \leq \mathbb{E}_{X} \left( | 2\eta(X)-1 |  \mathbbm{1}_{ \{  | 2\eta(X)-1 | \leq \frac{\varepsilon(n_{\rm p}, n_{\rm u})}{p_{\rm min}} \} } \right)\\
& \leq \frac{\varepsilon(n_{\rm p}, n_{\rm u})}{p_{\rm min}} P_X \left( | 2\eta(X)-1 | \leq \frac{\varepsilon(n_{\rm p}, n_{\rm u})}{p_{\rm min}} \right) \\
&\leq C_{(\rm p, \rm u)} (n_{\rm p} \wedge n_{\rm u})^{-\frac{\alpha_{\text{H}}  (1+q)}{2\alpha_{\text{H}}+d}}, 
\end{align*}
for some constant $C_{(\rm p, \rm u)}>0$. Thus, under the assumptions (A1)-(A4), we have the following with probability at least $1- 1/n_{\rm p}  - 1/ n_{\rm u}$:
\begin{align*}
R_{\ell_{01}}(\hat{f}_{\mathcal{H}_{k, r_1}}) - \inf_{f \in \mathcal{U} }  R_{\ell_{01}}(f) \leq C_{(\rm p, \rm u)} (n_{\rm p} \wedge n_{\rm u})^{-\frac{\alpha_{\text{H}} (1+q)}{2\alpha_{\text{H}}+d}}.
\end{align*}
\end{proof}


\section{Implementation details}
\label{app:imple_details}

In this section, we provide implementation details for WMMD and other baseline PU learning algorithms in Appendix \ref{app:wmmd_imple_details} and \ref{app:existing_imple_details}, respectively.

\subsection{Proposed PU learning algorithm: WMMD algorithm}
\label{app:wmmd_imple_details}
We divided the original training data into training and validation sets, with an 80-20 random split. 
Let $\tilde{x}_j ^{\rm p}$ and $\tilde{x}_j ^{\rm u}$ be the positive and the unlabeled samples in validation set, respectively. 
Similarly, $\tilde{n}_{\rm p}$ and $\tilde{n}_{\rm u}$ be the number of samples in the positive and the unlabeled validation set, respectively.
With the validation set, we conducted a grid search method for the hyperparameter selection with a grid $\gamma \in \{1, 0.4, 0.2, 0.1, 0.05 \}$ for all the numerical experiments.
With the grids, we selected the optimal hyperparameters $\gamma^*$ which minimized 
\begin{align}
\hat{L} (\gamma) := -\pi_{+} + \frac{2\pi_{+}}{ \tilde{n}_{\rm p}} \sum_{j=1} ^{ \tilde{n}_{\rm p}} \mathbbm{1} \left[ {\rm sign} ( \hat{f}_{\mathcal{H}_{k,r}} ^{\gamma} (\tilde{x}^{\rm p} _j) ) \neq 1 \right] + \frac{1}{ \tilde{n}_{\rm u}} \sum_{j=1} ^{ \tilde{n}_{\rm u}} \mathbbm{1} \left[ {\rm sign} ( \hat{f}_{\mathcal{H}_{k,r}} ^{\gamma} (\tilde{x}^{\rm u} _j) ) \neq -1 \right], \label{eq:hyperparameter_risk}
\end{align}
where
\begin{equation*}
    {\rm sign}\{\hat{f}_{\mathcal{H}_{k,r}} ^{\gamma} (z)\} = \begin{cases} 
      +1 & \text{if } (2\pi_{+})^{-1} <  \quad \hat{\lambda}_{n_{\rm p},n_{\rm u} } ^{\gamma}(z)     \\
      -1 & \text{otherwise}
   \end{cases},
\end{equation*}
$$
\hat{\lambda}_{n_{\rm p},n_{\rm u} } ^{\gamma}(z) = \frac{ n_{\rm p}^{-1} \sum_{i=1 } ^{n_{\rm p}} k_{\gamma} (z, x_i ^{\rm p})}{ n_{\rm u}^{-1} \sum_{i=1 } ^{n_{\rm u}} k_{\gamma} (z, x_i ^{\rm u})},
$$
and $k_{\gamma} (z_1, z_2) = \exp( -\gamma \norm{z_1 - z_2}_2 ^2 )$.
Note that Equation \eqref{eq:hyperparameter_risk} is an empirical estimation of the misclassification error since
\begin{align*}
&P_{X,Y} ( f(X)Y <0 ) \\
=& \pi_{+} P_{X \mid Y=1} ( {\rm sign}( f(X)) = -1) + (1-\pi_{+}) P_{X \mid Y=-1} ( {\rm sign} ( f(X)) = 1 ) \\
=& \pi_{+} P_{X \mid Y=1} ( {\rm sign} ( f(X) ) = -1 )  \\
&+  (P_{X} ( {\rm sign} ( f(X) ) = 1 ) - \pi_{+} P_{X \mid Y=1} ( {\rm sign} ( f(X) ) = 1 ) ) \\
=& -\pi_{+} + 2\pi_{+} P_{X \mid Y=1} ( {\rm sign} ( f(X) ) = -1 ) + P_{X} ( {\rm sign} ( f(X) ) = 1 ).    
\end{align*}
Final classification for a test datum $z$ was determined by ${\rm sign} ( \hat{f}_{\mathcal{H}_{k,r}} ^{\gamma^*} (z) )$ and AUC was computed by using $\hat{\lambda}_{n_{\rm p},n_{\rm u} } ^{\gamma^*}(z)$.

When the class-prior $\pi_{+}$ is unknown, we suggest a simple $\pi_{+}$ estimation method, called the density-based method, to find $\hat{\pi}_{+} ^{\rm WMMD}$ such that for some $\eta \in (0,1)$,
\begin{align*}
 \hat{\pi}_{+} ^{\rm WMMD} :=  \sup \left\{  t \in (0,1) \Big\vert \frac{ \#\{ j \in \{1, \dots, \tilde{n}_{\rm p} \} \mid \{ \hat{\lambda}_{n_{\rm p},n_{\rm u} } ^{\gamma}( \tilde{x}_j ^{\rm p} ) \}^{-1} \leq t  \} }{\tilde{n}_{\rm p}} \leq \eta \right\}.
\end{align*}
This estimator is sensible because $\pi_{+} \leq p_{X} (x)/p_{X \mid Y=1} (x)$ and $\{ \hat{\lambda}_{n_{\rm p},n_{\rm u} } ^{\gamma}( x ) \}^{-1}$ can be considered as a kernel density estimation of $p_{X} (x)/p_{X \mid Y=1}(x)$ for $x \in supp(P_{X \mid Y=1})$.
Here, we denote the density functions by $p_{X}$ and $p_{X \mid Y=1}$.
In our experiments, we fix $\eta=0.1$ and using $\hat{\pi}_{+} ^{\rm WMMD}$ leaded better performance than using \lq{}KM1\rq{} method in terms of accuracy.

\subsection{The baseline PU learning algorithms}
\label{app:existing_imple_details}

We compared the following 4 PU learning algorithms: (i) the logistic loss $\ell_{\mathrm{log}}$, denoted by LOG, (ii) the double hinge loss $\ell_{\mathrm{dh}}$, denoted by DH, both proposed by \citet{du2015}, (iii) the non-negative risk estimator method, denoted by NNPU, proposed by \citet{kiryo2017}, and (iv) the threshold adjustment method, denoted by tADJ, proposed by \citet{elkan2008}.

\textbf{General: } Similar to the procedure in Appendix \ref{app:wmmd_imple_details}, we set training and validation sets with the 80-20 random split of the original training dataset and we conducted a grid search method for hyperparameter selection.

\textbf{LOG and DH: } As \citet{du2015} proposed, we followed a binary discriminant function as
$$
g_{\alpha, b}(x) = \sum_{i=1} ^{N} \alpha_i \varphi_i (x) + b = \alpha ^T \varphi(x) + b,
$$
where $N = n_{\rm p}+n_{\rm u}$ and $\varphi_i (x) = \exp( -\gamma \norm{x -c_i}_2 ^2 )$ for $ \{c_1, \dots, c_N \} = \{x_1 ^{\rm p}, \dots, x_{n_{\rm p}} ^{\rm p},$ $ x_1 ^{\rm u}, \dots, x_{n_{\rm u}} ^{\rm u} \}$.
For a loss function $\ell \in \{\ell_{\mathrm{log}}, \ell_{\mathrm{dh}} \} $, the empirical risk function is given by
\begin{align*}
\hat{J}_{\lambda, \gamma}(\alpha, b) = -\frac{\pi_{+}}{n_{\rm p}} \sum_{i=1} ^{n_{\rm p}} \alpha^T \varphi(x_i ^{\rm p}) - \pi_{+}b + \frac{1}{n_{\rm u}} \sum_{i=1 } ^{n_{\rm u}} \ell\left( -\alpha^T \varphi(x_i ^{\rm u}) - b\right) + \frac{\lambda}{2} \alpha^T \alpha.
\end{align*}

Here, the hyperparameter grids are $\lambda \in \{1, 0.4, 0.2, 0.1, 0.05 \}$ and $\gamma \in \{1, 0.4, 0.2, $ $0.1, 0.05 \}$ for all the numerical experiments.
With the grids, we selected the optimal hyperparameter $(\lambda^*, \gamma^*)$ which minimized the empirical risk on the validation set $\tilde{J}_{\lambda, \gamma}(\alpha, b)$ defined by
$$
\tilde{J}_{\lambda, \gamma}(\alpha, b) = -\frac{\pi_{+}}{ \tilde{n}_{\rm p}} \sum_{i=1} ^{ \tilde{n}_{\rm p}} \alpha^T \varphi( \tilde{x}_i ^{\rm p}) - \pi_{+}b + \frac{1}{ \tilde{n}_{\rm u}} \sum_{i=1 } ^{ \tilde{n}_{\rm u}} \ell\left( -\alpha^T \varphi( \tilde{x}_i ^{\rm u}) - b\right) + \frac{\lambda}{2} \alpha^T \alpha.
$$

After selecting the optimal hyperparameter $(\lambda^*, \gamma^*)$, we minimized $\hat{J}_{(\lambda^*, \gamma^*)}(\alpha, b)$ with the gradient descent algorithm.
Learning rate was fixed by $0.1$ and the number of epochs was $100$.
During the training, we applied the early stopping rule: we stopped training if the validation error is not minimized in 10 successive epochs.
After the training phase, with the trained $\hat{\alpha}$ and $\hat{b}$, we classified a test datum $z$ as ${\rm sign} ( g_{\hat{\alpha}, \hat{b}}(z) )$.
AUC was computed by using $g_{\hat{\alpha}, \hat{b}}(z)$.

\textbf{NNPU: } We followed the method by \citet{kiryo2017}.
The model for NNPU was a 5-layer multilayer perceptron with ReLU nonlinearity ($d$-300-300-300-1). 
We applied the batch normalization before each ReLU nonlinearity.
Please note that this network architecture is quite similar to the model in \citet{kiryo2017}.
We used a stochastic gradient descent algorithm with a learning rate $0.01$. 
Loss function was the sigmoid function.
The number of epochs was $100$ and the optimal weights were selected at the best validation error during the training.

\textbf{tADJ: } We followed the method by \citet{elkan2008}.
We used \textsf{\lq{}LogisticRegressionCV\rq{}} function in the Python module \textsf{\lq{}sklearn.linear\_model\rq{}} \citep{scikit-learn} to estimate $P( \{ x \text{ is from the positive dataset} \} \mid X=x)$.
The hyperparameter grid for $L_2$-regularizer was $\{0.01,0.1,1,10,100\}$ and the optimal hyperparameter was chosen based on 5-fold cross validation on the split training dataset, {\it i.e.}, 80\% of the original training dataset.
Then, $P( \{ x \text{ is from the positive dataset} \} \mid Y=1)$ was estimated by the split validation set, {\it i.e.}, 20\% of the original training dataset.  

\section{Comparison between Gaussian and inverse kernels}
We compared LOG, DH, and WMMD using two kernels: (i) the Gaussian kernel $k(x,y)=\exp(-\gamma\norm{x-y}_2 ^2) $ and (ii) the inverse kernel $k(x,y)=\frac{\gamma}{\gamma + \norm{x-y}_2 ^2}$ for $\gamma > 0$.
Figures \ref{fig:kernels_accuracy_n_u} and \ref{fig:kernels_AUC_n_u} show the accuracy and AUC of LOG, DH, and WMMD on various $n_{\rm u}$. 
The training sample size for the positive data is $n_{\rm p} = 100$ and the class prior is $\pi_{+}= 0.5$.
The unlabeled sample size changes from 40 to 500 by 20. 
We repeat a random generation of training and test data 100 times. For comparison purposes, we add the $1-$Bayes risk for each unlabeled sample size.
In every algorithm, using the Gaussian kernel achieves higher accuracy and AUC than using the inverse kernel in every $n_{\rm u}$.

Figures \ref{fig:kernels_accuracy_pi_plus} and \ref{fig:kernels_AUC_pi_plus} show comparison of the accuracy and AUC as $\pi_{+}$ changes.
The training sample size for the positive and the unlabeled data are $n_{\rm p} = 100$ and $n_{\rm u} = 400$, respectively. 
The class-prior $\pi_{+}$ changes from 0.05 to 0.95 by 0.05.
The test sample size is $10^3$. 
We repeat a random generation of training and test data 100 times.
Both kernels perform comparably for LOG and WMMD algorithm.
The DH algorithm with the inverse kernel achieves higher accuracy when the class-prior is close to 0.5.
But in terms of AUC, both kernels perform comparably in every $\pi_{+}$.

\begin{figure}[t]
\subfigure[Accuracy comparison on various $n_{\text{u}}$.]{
\includegraphics[width=0.48\textwidth, height=1.65in]{./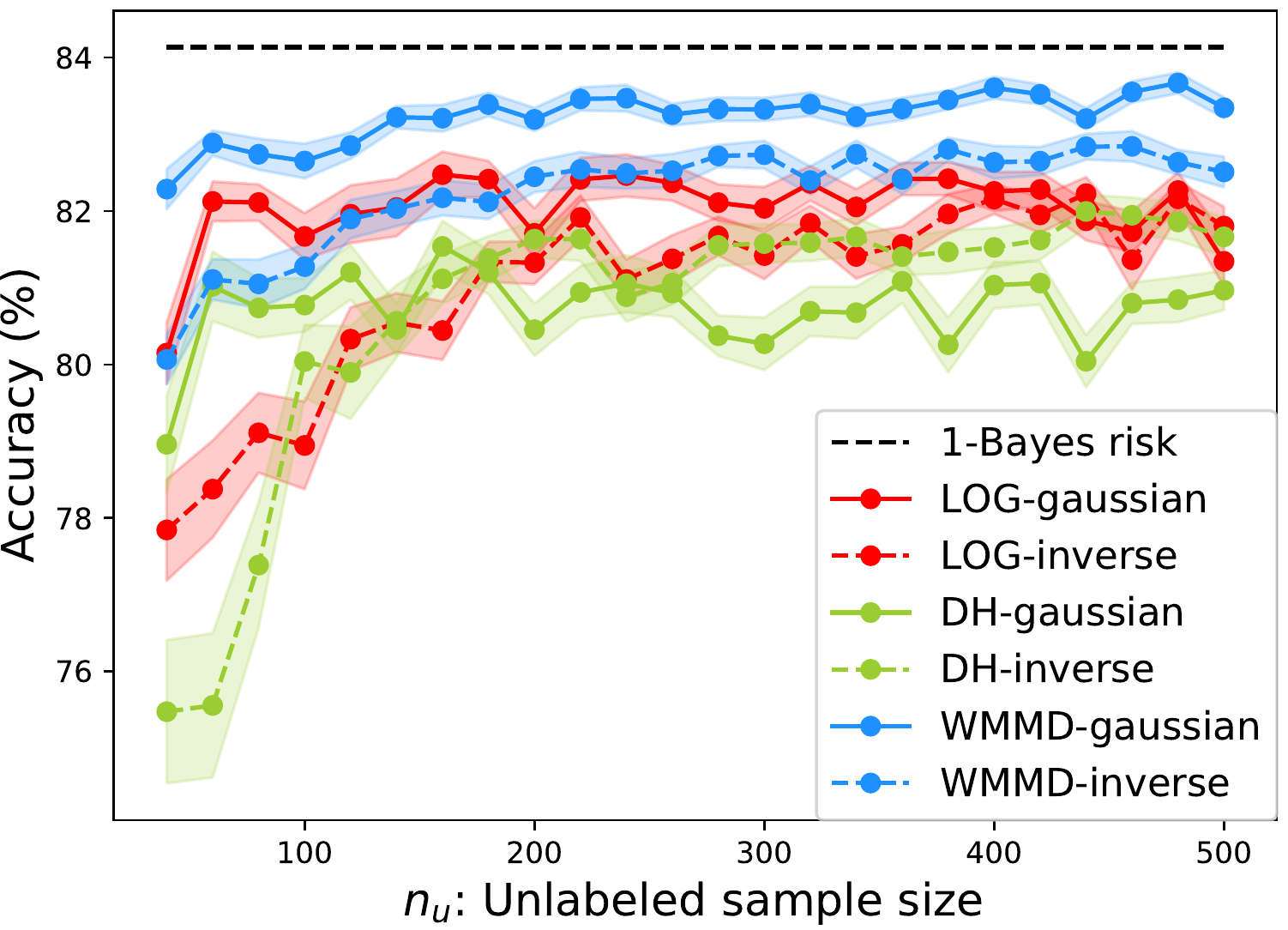}
\label{fig:kernels_accuracy_n_u}
}
~~
\subfigure[Accuracy comparison on various $\pi_{+}$.]{
\includegraphics[width=0.48\textwidth, height=1.65in]{./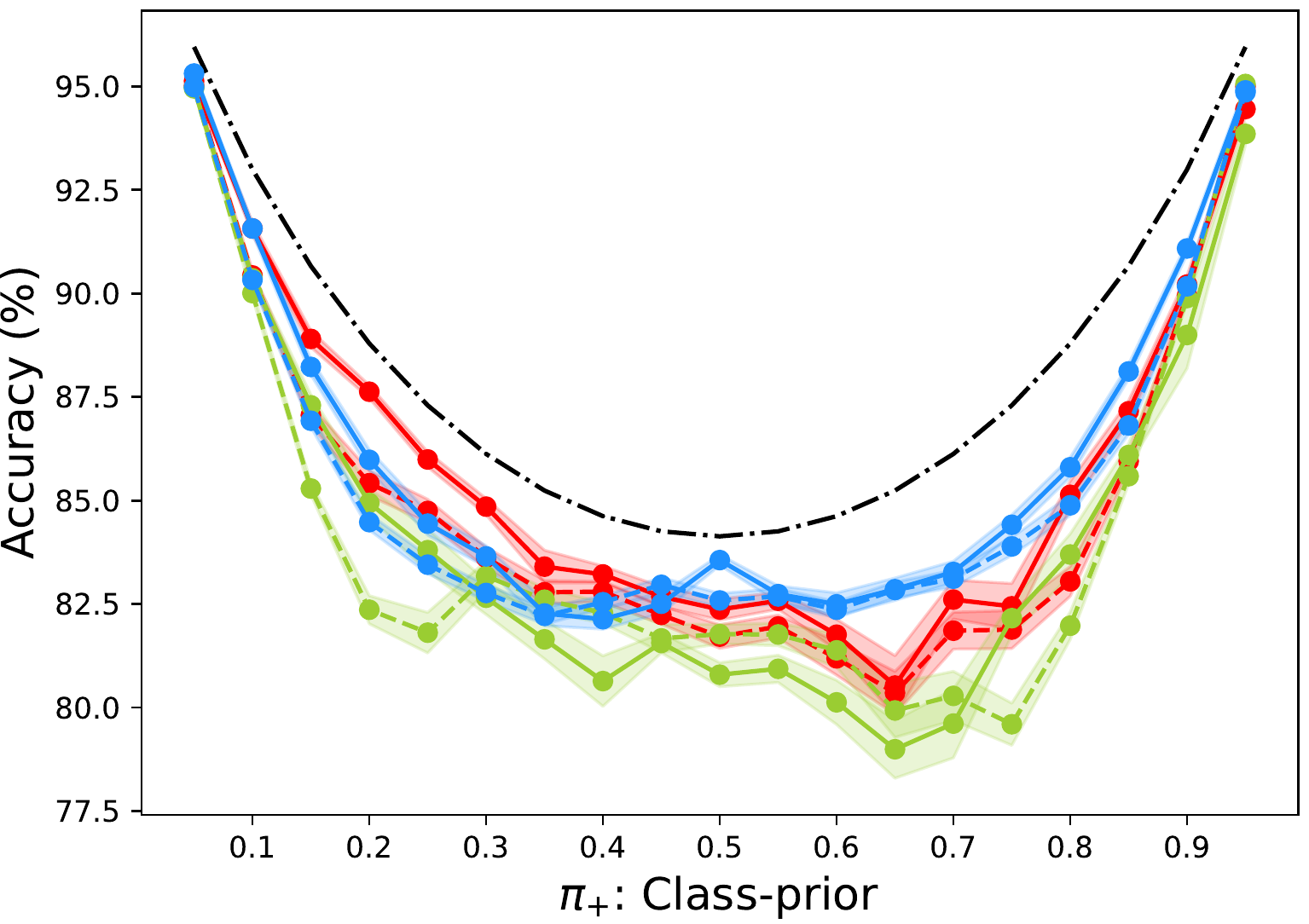}
\label{fig:kernels_accuracy_pi_plus}
}
~~
\subfigure[AUC comparison on various $n_{\text{u}}$.]{
\includegraphics[width=0.48\textwidth, height=1.65in]{./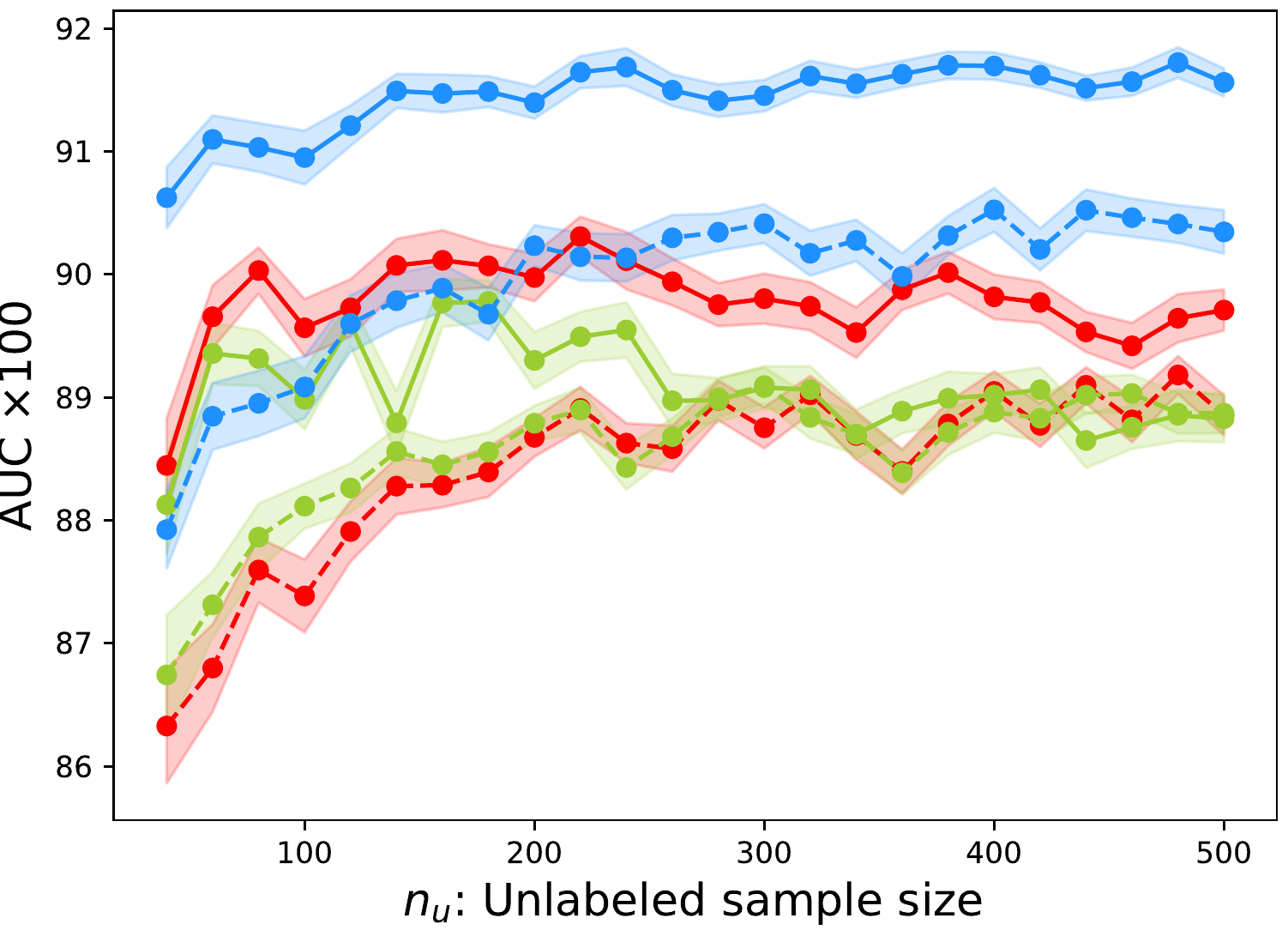}
\label{fig:kernels_AUC_n_u}
}
~~
\subfigure[AUC comparison on various $\pi_{+}$.]{
\includegraphics[width=0.48\textwidth, height=1.65in]{./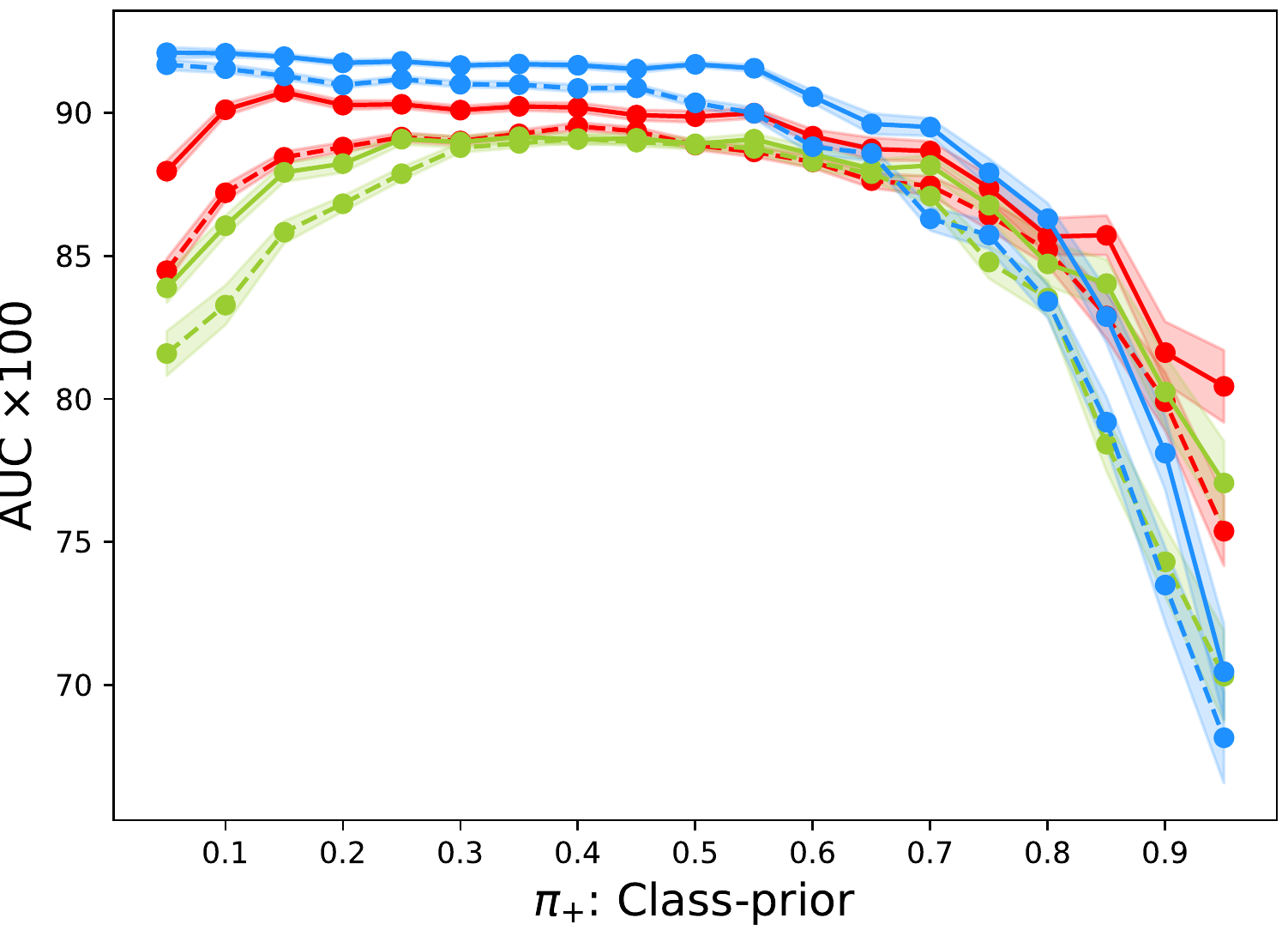}
\label{fig:kernels_AUC_pi_plus}
}
\caption{The comparison of the accuracy and AUC of the LOG, DH, and WMMD algorithms with the Gaussian and the inverse kernels when each of $n_{\rm {u}}$ and $\pi_{+}$ changes.
The black dashed-dotted curve represents the $1-$Bayes risk.
The algorithms with the Gaussian and inverse kernel are displayed with dashed and solid lines, respectively.
The curve and the shaded region represent the average and the standard error, respectively, based on 100 replications.}

\end{figure}

\bibliographystyle{unsrtnat}
\small
\bibliography{ref}

\end{document}